\newtheorem{thm}{Theorem}[section]
\newtheorem{defn}[thm]{Definition}
\newtheorem{exa}[thm]{Example}
\newtheorem{lem}[thm]{Lemma} 
\theoremstyle:=definition,remark,plain\do{%
        \expandafter\g@addto@macro\csname th@\theoremstyle\endcsname{%
            \addtolength\thm@preskip\parskip
            }%
        }
\newcommand{\Rmnum}[1]{\expandafter\@slowromancap\romannumeral #1@}
\newcommand{\mycomment}[1]{}
 \newcommand{\myp}[2]{\frac{\displaystyle \partial #1}{\displaystyle \partial  #2}}
\newcommand{\E}{\mathbb{E}}
\DeclareMathOperator*{\argmax}{arg\,max}
\DeclareMathOperator*{\argmin}{arg\,min}
\newcommand{\Fisher}{\mathcal{I}}
\newcommand{\fisher}{I}
\newcommand{\OO}{O}
\newcommand{\oo}{o}
\newcommand{\Op}{\OO_p}
\newcommand{\op}{\oo_p}
\newcommand{\tr}{\mathrm{tr}} \newcommand{\trace}{\mathrm{tr}}
\newcommand{\cov}{\mathrm{cov}} \newcommand{\Cov}{\mathrm{Cov}}
\newcommand{\var}{\mathrm{var}}
\newcommand{\KL}{\mathrm{KL}}
\newcommand{\mle}{\mathrm{mle}}
\newcommand{\linear}{\mathrm{linear}}
\newcommand{\thetaf}{{\htheta^{f}}}
\newcommand{\thetatrue}{{\theta^*}}
\newcommand{\thetahat}{\hat{\theta}}
\newcommand{\htheta}{\hat{\theta}}
\newcommand{\thetak}{{\htheta^k}}
\newcommand{\thetamle}{{\htheta^{\mathrm{\mle}}}}
\newcommand{\thetalinear}{{\htheta^{\mathrm{\linear}}}}
\newcommand{\thetaKL}{{\htheta^{\KL}}}
\newcommand{\thetakl}{{\htheta^{\KL}}}
\newcommand{\Etrue}{\mathbb{E_{\thetatrue}}}
\newcommand{\thetaproj}{{\htheta^{\mathrm{proj}}}}
\newcommand{\thetaall}{{\thetahat^1, \ldots, \thetahat^d}}
\newcommand{\moment}{\mu}
\newcommand{\hatmoment}{{\hat\mu}}
\newcommand{\momentX}{{\hatmoment_{X}}}
\newcommand{\momentmle}{{\hatmoment_{\mle}}}
\newcommand{\momentXk}{{\hatmoment_{X^k}}}
\newcommand{\momenttrue}{{\mu_{*}}}
\newcommand{\dotmomenttrue}{{\dot\mu_{*}}}
\newcommand{\density}[2]{p(#1 | #2)}
\newcommand{\blue}[1]{\textcolor{blue}{#1}}
\newcommand{\R}{\mathbb{R}}
\newcommand{\Wishart}{\mathrm{Wishart}}
\newcommand{\normal}{\mathcal{N}}
\newcommand{\dto}{\overset{d}{\to}}
\newcommand{\pto}{\overset{p}{\to}}
\newcommand{\dimtheta}{q}
\newcommand{\dimeta}{m}
\newcommand{\logZ}{{\log Z}}
\newcommand{\identitymatrix}[1]{{\mathbf{1}_{#1\times#1}}}
\title{Distributed Estimation, Information Loss and Exponential Families}%
\author{
Qiang Liu \hspace{20mm} Alexander Ihler \\
Department of Computer Science, University of California, Irvine\\
\texttt{qliu1@uci.edu} \hspace{6mm} \texttt{ihler@ics.uci.edu}\\
}
\begin{document}

\maketitle

\begin{abstract}
Distributed learning of probabilistic models from multiple data repositories with minimum communication is increasingly important. 
We study a simple communication-efficient learning framework that first calculates the local maximum likelihood estimates (MLE) based on the data subsets, and then combines the local MLEs to achieve the best possible approximation to the global
MLE given the whole dataset.
We study this framework's statistical properties, showing that 
the 
efficiency loss
compared to the global setting relates to 
how much the underlying distribution families deviate from full exponential families, drawing connection to the theory of information loss by Fisher, Rao and Efron. 
We show that the ``full-exponential-family-ness" represents the lower bound of the error rate of arbitrary combinations of local MLEs, and is achieved by a KL-divergence-based combination method but not by a more common linear combination method. 
We also study the empirical properties of 
both
methods, showing that the KL method significantly outperforms linear combination in practical settings with issues such as model misspecification, non-convexity, and heterogeneous data partitions. 
\end{abstract}

\section{Introduction}

Modern data-science applications increasingly require distributed learning
algorithms to extract information from  many data repositories stored at
different locations with minimal interaction.  Such distributed settings are
created due to high communication costs (for example in sensor networks), or
privacy and ownership issues (such as sensitive medical or financial data).
Traditional algorithms often require access to the entire dataset
simultaneously, and are not suitable for distributed settings. 

We consider a straightforward two-step procedure for distributed learning that follows a ``divide and conquer" strategy: %
(i) local learning, which involves learning probabilistic models based on the local data repositories separately, and 
(ii) model combination, where the local models are transmitted to a central node (the ``fusion center''), and combined to form a global model that integrates the information in the local repositories. 
This framework only requires transmitting the local model parameters to the fusion center once, yielding significant advantages in terms of both communication and privacy constraints. 
However, the two-step procedure may not fully extract all the information in the data, and may be less (statistically) efficient than a corresponding centralized learning algorithm that operates globally on the whole dataset. 
This raises important challenges in understanding the fundamental statistical limits of the local learning framework, and proposing optimal combination methods to best approximate the global learning algorithm.

In this work, we study these problems in the setting of estimating generative model parameters from a distribution family via the maximum likelihood estimator (MLE). 
We show that the loss of statistical efficiency caused by using the local learning framework is related to how much the underlying distribution families deviate from full exponential families: local learning can be as efficient as (in fact exactly equivalent to) global learning on full exponential families, but is 
 less efficient on non-exponential families, depending on how nearly ``full exponential family" they are. 
 The ``full-exponential-family-ness" is formally captured by the \emph{statistical curvature} originally defined by  \citet{efron1975defining}, and is a measure of the minimum loss of Fisher information when summarizing the data using first order efficient estimators \citep[e.g.,][]{fisher1925theory, rao1963criteria}. 
Specifically, we show that arbitrary combinations of the local MLEs on the local datasets can approximate the global MLE on the whole dataset at most up to an asymptotic error rate proportional to the square of the statistical curvature. 
In addition, a KL-divergence-based combination of the local MLEs achieves this minimum error rate in general, and exactly recovers the global MLE on full exponential families. 
In contrast, a more widely-used linear combination method does not achieve the optimal error rate, and makes mistakes even on full exponential families. 
We also study the two methods empirically,
examining their robustness against practical issues such as model mis-specification, heterogeneous data partitions, and the existence of hidden variables (e.g., in the Gaussian mixture model). 
These issues often cause the likelihood to have multiple local optima, and can easily degrade the linear combination method. On the other hand, the KL method remains robust in these practical settings.

\textbf{Related Work.}
Our work is related to \citet{zhang2013jmlr}, which includes a theoretical analysis for linear combination. 
\citet{merugu2003privacy, merugu2006distributed} proposed the KL combination method in the setting of Gaussian mixtures, but without theoretical analysis. 
There are many recent theoretical works on distributed learning \citep[e.g.,][]{predd2007distributed, balcan2012distributed, zhang2013information, shamir2013fundamental}, but most focus on discrimination tasks like classification and regression. 
There are also many works on distributed clustering \citep[e.g.,][]{merugu2003privacy, forero2011distributed, liangdistributed} and distributed MCMC %
 \citep[e.g.,][]{scott2013bayes, wang2013parallel, neiswanger2013asymptotically}. 
 An orthogonal setting of distributed learning is when the data is split across the variable dimensions, instead of the data instances; see e.g.,  \citet[][]{liu12a, meng13marginal}. 
\section{Problem Setting}
\label{sec:setting}
Assume we have an i.i.d.\ sample $X = \{ x^i ~\colon~ i = 1, \ldots, n\}$, partitioned into $d$ sub-samples $X^{k} =\{ x^{i} ~\colon~ i \in \alpha_k \}$ that are stored in different locations, 
where $\cup_{k=1}^d \alpha_k = [n]$. For simplicity, we assume the data are equally partitioned, so that each group has $n/d$ instances; extensions to the more general case is straightforward. 
Assume $X$ is drawn i.i.d.\ from a distribution with an unknown density from a distribution family $\{ \density{x}{\theta} \colon \theta \in \Theta \}$. Let $\theta^*$ be the true unknown parameter. We are interested in estimating $\thetatrue$ via the maximum likelihood estimator (MLE) based on the whole sample, 
$$
\thetamle = \argmax_{\theta \in \Theta} \sum_{i\in [n]}\log \density{x^i}{ \theta}. 
$$
However, directly calculating the global MLE often requires distributed optimization algorithms (such as ADMM \citep{boyd2011distributed}) that need iterative communication between the local repositories and the fusion center, which can significantly slow down the algorithm regardless of the amount of information communicated at each iteration. 
We instead approximate the global MLE by a two-stage procedure that 
calculates the local MLEs separately 
for each sub-sample,
then sends the local MLEs to 
the fusion center and combines them. 
Specifically, the $k$-th sub-sample's local MLE is
$$
\thetak = \argmax_{\theta \in \Theta}  \sum_{i\in \alpha^k}\log \density{x^i}{ \theta}, 
$$
and we want to construct a combination function $f(\thetahat_1, \ldots, \thetahat_d) \to \thetaf$ to form the best approximation to the global MLE $\thetamle$. %
Perhaps the most straightforward combination is the linear average,
\begin{flalign*}
&\hspace{0.25\textwidth} \text{\emph{Linear-Averaging}:~~~~~}\thetalinear = \frac{1}{d} \sum_k \thetak. &%
\end{flalign*}
However, this method is obviously limited to continuous and additive parameters; in the sequel, we illustrate
it also tends to degenerate in the presence of practical issues such as non-convexity and non-i.i.d.\ data partitions. 
A better combination method is to average the \emph{models} w.r.t.\ some distance metric, instead of the \emph{parameters}. In particular, we consider a KL-divergence based averaging, %
\begin{flalign}
&\hspace{0.25\textwidth}\text{\emph{KL-Averaging}: ~~~~~}  \thetaKL = \argmin_{\theta\in \Theta} \sum_k \KL(p(x | \thetak) ~|| ~p(x | \theta)). &
\label{equ:thetakl}
\end{flalign}
The estimate $\thetakl$ can also be motivated by a parametric bootstrap procedure that first draws sample $X^{k'}$ from each local model $p(x | \thetak)$, and then estimates a global MLE based on all the combined bootstrap samples  $X' = \{X^{k'}\colon k \in [d]\}$.  We can readily show that this reduces to $\thetakl$ as the size of the bootstrapped samples $X^{k'}$ grows to infinity. Other combination methods based on different distance metrics are also possible, but may not have a similarly natural interpretation.

\section{Exactness on Full Exponential Families}
In this section, we analyze the KL and linear combination methods on full exponential families. We show that the KL combination of the local MLEs exactly equals the global MLE, while the linear average does not 
in general, but can be made exact by using a special parameterization. This suggests that distributed learning is in some sense ``easy" on full exponential families. 

\begin{defn}
(1). A family of distributions is said to be a full exponential family if its density can be represented in a canonical form (up to one-to-one transforms of the parameters), 
\begin{align*}
p(x | \theta) = \exp(\theta^T \phi(x)  - \logZ(\theta)), &&
\theta \in \Theta \equiv  \{\theta \in \R^m \colon \int_x \exp(\theta^T \phi(x)) d H(x) < \infty  \}. 
\end{align*}
where $\theta = [\theta_1, \ldots \theta_{m}]^T$ and  $\phi(x) = [\phi_1(x), \ldots \phi_m(x)]^T$ are called the natural parameters and the natural sufficient statistics, respectively. 
The quantity $Z(\theta)$ is the normalization constant, and $H(x)$ is the reference measure. %
An exponential family is said to be \emph{minimal} if $[1, \phi_1(x), \ldots \phi_m(x)]^T$ is linearly independent, that is, there is no non-zero constant vector $\alpha$, such that $\alpha^T\phi(x)=0$ for all $ x$. 
\end{defn}

\begin{thm}
\label{thm:exactexp}
If $ \mathcal{P}= \{p(x | {\theta}) \colon \theta \in \Theta\}$ is a full exponential family, then the KL-average $\thetakl$ always exactly recovers the global MLE, that is, $\thetakl = \thetamle$. 
Further, if $\mathcal{P}$ is minimal, we have
\begin{align}
\label{equ:mumean}
\thetakl =  \mu^{-1} \left ( \frac{ \mu(\hat{\theta}^1) + \cdots +  \mu(\hat{\theta}^d)}{d} \right ), %
\end{align}
where $\mu~ \colon~ \theta \mapsto \E_{\theta}[\phi(x)]$ is the one-to-one map from the natural parameters to the moment parameters, and $\mu^{-1}$ is the inverse map of $\mu$. Note that  we have $\mu(\theta) = {\partial\logZ(\theta)}/{\partial\theta}$. 
\end{thm}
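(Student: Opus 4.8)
The plan is to reduce everything to the moment-matching characterization of exponential-family MLEs. First I would recall that for a full exponential family $p(x\mid\theta)=\exp(\theta^T\phi(x)-\logZ(\theta))$, setting the gradient of the log-likelihood to zero gives the likelihood equation $\mu(\theta)=\frac1n\sum_i\phi(x^i)$, where $\mu(\theta)=\partial\logZ(\theta)/\partial\theta=\E_\theta[\phi(x)]$; since $\logZ$ is convex on $\Theta$, this stationarity condition is also sufficient for the global maximum. Applying it to the whole sample gives $\mu(\thetamle)=\frac1n\sum_{i\in[n]}\phi(x^i)$, and applying it to each block (which has $n/d$ points) gives $\mu(\thetak)=\frac{d}{n}\sum_{i\in\alpha_k}\phi(x^i)$. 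Averaging the latter over $k$ and using that the $\alpha_k$ partition $[n]$ collapses the double sum into the empirical mean over all $n$ points, so $\frac1d\sum_k\mu(\thetak)=\frac1n\sum_{i\in[n]}\phi(x^i)=\mu(\thetamle)$. This is the crux identity.

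Next I would write the KL-averaging objective in closed form. A direct computation gives $\KL(p(x\mid\thetak)\,\|\,p(x\mid\theta))=(\thetak-\theta)^T\mu(\thetak)-\logZ(\thetak)+\logZ(\theta)$, so $\sum_k\KL(p(x\mid\thetak)\,\|\,p(x\mid\theta))$ equals $d\,\logZ(\theta)-\theta^T\sum_k\mu(\thetak)$ up to a constant independent of $\theta$. This is a convex function of $\theta$ (the convex $\logZ$ plus a linear term), so its minimizers coincide with its stationary points, characterized by $\mu(\theta)=\frac1d\sum_k\mu(\thetak)$. By the crux identity this common value is exactly $\mu(\thetamle)$, so $\theta=\thetamle$ is a stationary point of the KL objective and hence a global minimizer: $\thetakl=\thetamle$ (as distributions, hence as parameters up to the one-to-one reparametrizations allowed in the definition). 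When $\mathcal P$ is minimal, $\logZ$ is strictly convex and $\mu$ is a genuine bijection onto its image, so the KL minimizer is unique and we may invert to obtain $\thetakl=\mu^{-1}\!\big(\frac1d\sum_k\mu(\thetak)\big)$, which is the displayed formula.

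The steps above are all standard manipulations; the only place requiring care is existence and uniqueness in the non-minimal case, and this is where I expect the (minor) obstacle to lie. There the MLE itself need not be unique and $\theta\mapsto p(x\mid\theta)$ is many-to-one, so the equality $\thetakl=\thetamle$ should be read at the level of the induced distributions, equivalently of the moment parameters; I would handle this by noting that both the likelihood equation and the KL first-order condition pin down $\mu(\theta)$, and that $\mu$ is injective once one quotients by the linear relations among $\{1,\phi_1,\dots,\phi_m\}$. I also need the common value $\frac1d\sum_k\mu(\thetak)$ to lie in the range of $\mu$ so that $\mu^{-1}$ is well defined, but this is immediate since it equals $\mu(\thetamle)$, attained whenever the global MLE exists. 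Nothing beyond this bookkeeping is needed.
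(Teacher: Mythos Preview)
Your proposal is correct and follows essentially the same route as the paper: both rely on the exponential-family moment-matching identity $\mu(\thetak)=\frac{d}{n}\sum_{i\in\alpha_k}\phi(x^i)$, average it to recover $\mu(\thetamle)$, and use the closed-form of the KL divergence to identify the KL minimizer. The only cosmetic difference is that the paper phrases the conclusion as ``the KL objective equals the global negative log-likelihood'' (up to a positive multiple and an additive constant), whereas you match first-order conditions; these are the same argument, and your added remarks on convexity and the non-minimal case are sound bookkeeping the paper omits.
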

\begin{proof}
Directly verify that the KL objective in \eqref{equ:thetakl} equals the global negative log-likelihood. %
\end{proof}
The nonlinear average in \eqref{equ:mumean} gives an intuitive interpretation of why $\thetakl$ equals $\thetamle$ on full exponential families: it first calculates the local empirical moment parameters $\mu(\thetak) = d/n\sum_{i\in \alpha^k} \phi(x^k)$; averaging them gives the empirical moment parameter on the whole data $\hat\mu_n = 1/n\sum_{i\in [n]} \phi(x^k)$, which then exactly maps to the global MLE. 

Eq~\eqref{equ:mumean} also suggests that $\thetalinear$ would be exact only if $\mu(\cdot)$ is an identity map. 
Therefore, one may make $\thetalinear$ exact by using the special parameterization $\vartheta = \mu(\theta)$. In contrast, KL-averaging will make this reparameterization automatically ($\mu$ is different on different exponential families). 
Note that both KL-averaging and global MLE are invariant w.r.t. one-to-one transforms of the parameter $\theta$, but linear averaging is not. %
\begin{exa}[Variance Estimation]
Consider estimating the variance $\sigma^2$ of a zero-mean Gaussian distribution. 
Let $\hat s_k = (d/n)  \sum_{i\in \alpha^k} (x^i)^2 $ be the empirical variance on the $k$-th sub-sample and $\hat s = \sum_k \hat s_k/d$ the overall empirical variance. 
Then, $\thetalinear$ would correspond to different power means on $\hat s_k$, 
depending on the choice of parameterization, e.g., 
 \renewcommand{\arraystretch}{2} 
 \begin{table}[h!]
 \centering
  \begin{tabular}{|l|c|c|c|}
 \hline
 \text{} & $\theta = \sigma^2$ {(variance)}   &   $\theta = \sigma$ \text{(standard deviation)}  &  $\theta = \sigma^{-2}$ \text{(precision)}  \\ \hline
 $\thetalinear$  & $ \frac{1}{d}\sum_k\hat s_k$ &   $\frac{1}{d}\sum_k ({\hat s_k})^{1/2}$    & $\frac{1}{d}\sum_k(\hat s_k)^{-1}$  \\  \hline
\end{tabular}
\end{table} \\
where only the linear average of $\hat s_k$ (when $\theta=\sigma^2$) matches the overall empirical variance $\hat s$ and equals the global MLE. 
In contrast,  $\thetakl$ always corresponds to a linear average of $\hat s_k$, equaling the global MLE, regardless of the parameterization. 
\end{exa}

\section{Information Loss in Distributed Learning}
\label{sec:asymp}
The exactness of $\thetakl$ in Theorem~\ref{thm:exactexp} is due to the beauty (or simplicity) of exponential families. 
Following Efron's intuition, full exponential families can be viewed as ``straight lines" or ``linear subspaces" in the space of distributions,  while other distribution families correspond to ``curved" sets of distributions, whose deviation from full exponential families can be measured by their \emph{statistical curvatures} as defined by  \citet{efron1975defining}.  
That work
shows that statistical curvature is closely related to Fisher and Rao's theory of second order efficiency \citep{fisher1925theory, rao1963criteria}, and represents the minimum information loss when summarizing the data using first order efficient estimators. %
In this section, we connect this classical theory with the local learning framework, and show that the statistical curvature also represents the minimum asymptotic deviation of arbitrary combinations of the local MLEs to the global MLE, and that this is achieved by the KL combination method, but not in general by the simpler linear combination method.  %

\subsection{Curved Exponential Families and Statistical Curvature}

We follow the convention in \citet{efron1975defining}, and illustrate the idea of statistical curvature using \emph{curved} exponential families, which are smooth sub-families of full exponential families. 
The theory can be naturally extended to more general families \citep[see e.g.,][]{efron1975defining, kass2011geometrical}. 
\begin{defn}\label{def:curved}
A family of distributions $\{p(x|{\theta}) \colon \theta \in \Theta\}$ is said to be a curved exponential family if its density can be represented as
 \begin{align}
 p(x | \theta) = \exp(\eta(\theta)^T \phi(x) - \logZ(\eta(\theta))), 
 \label{equ:defcurved}
 \end{align}
 where the dimension of $\theta = [\theta_1, \ldots, \theta_\dimtheta]$ is assumed to be smaller than that of $\eta = [\eta_1, \ldots, \eta_{\dimeta}]$ and $\phi = [\phi_1, \ldots, \phi_\dimeta]$, that is $\dimtheta < \dimeta$. 

Following \citet{kass2011geometrical}, we assume some regularity conditions for our 
asymptotic analysis. Assume $\Theta$ is an open set in $\R^{\dimtheta}$, and the mapping $\eta~\colon~ \Theta \to \eta(\Theta)$ is one-to-one and infinitely differentiable, and of rank $\dimtheta$, meaning that the $\dimtheta\times\dimeta$ matrix $\dot\eta(\theta)$ has rank $\dimtheta$ everywhere. In addition, if a sequence $\{\eta(\theta_i) \in N_0\}$ converges to a point $\eta(\theta_0)$, then $\{\eta_i \in \Theta\}$ must converge to $\phi(\eta_0)$. In geometric terminology, such a map $\eta~\colon~ \Theta \to \eta(\Theta)$ is called a $q$-dimensional \emph{embedding} in $\R^{\dimeta}$.   
\end{defn}

Obviously, a curved exponential family can be treated as a smooth subset of a full exponential family $p(x | \eta) = \exp(\eta^T \phi(x) - \logZ(\eta))$, with $\eta$ constrained in $\eta(\Theta)$.  If $\eta(\theta)$ is a linear function, then the curved exponential family can be rewritten into a full exponential family in lower dimensions;
otherwise, $\eta(\theta)$ is a curved subset in the $\eta$-space, whose curvature -- its deviation from planes or straight lines --   represents its deviation from full exponential families. 

\begin{wrapfigure}{r}{0.23\textwidth} \centering
\vspace{-1.5em}
    \includegraphics[width=0.2\textwidth]{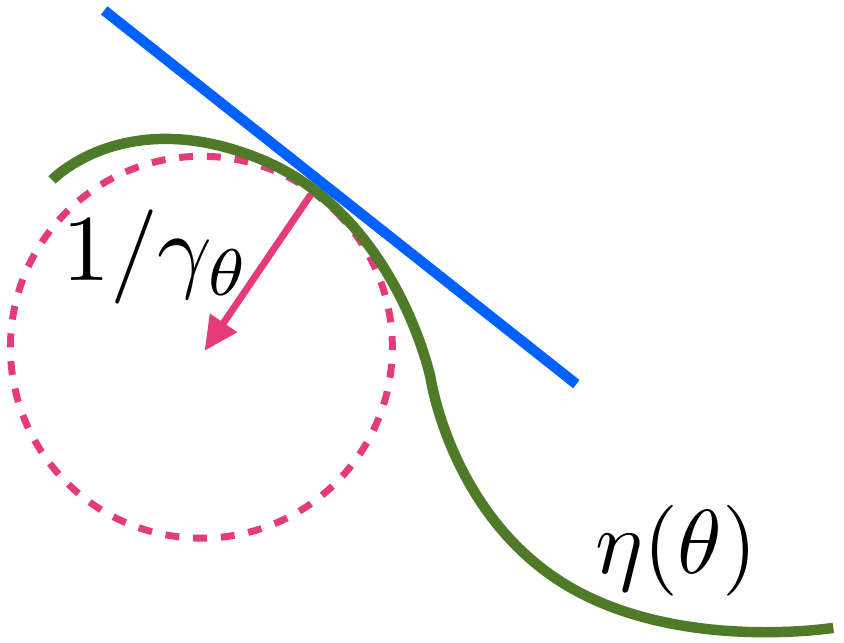}
\vspace{-1em}  
\end{wrapfigure}

Consider the case when $\theta$ is a scalar, and hence $\eta(\theta)$ is a curve; the geometric curvature $\gamma_{\theta}$ of $\eta(\theta)$ at point $\theta$ is defined to be the reciprocal of the radius of the circle that fits best to $\eta(\theta)$ locally at $\theta$. Therefore, the curvature of a circle of radius $r$ is a constant $1/r$.  
In general, elementary calculus shows that 
$\gamma_\theta^2 =  (  \dot\eta_{\theta}^T \dot\eta_{\theta} )^{-3} ( \ddot\eta_{\theta}^T \ddot\eta_{\theta}  \cdot  \dot\eta_{\theta}^T \dot\eta_{\theta}  - (  \ddot\eta_{\theta}^T \dot\eta_{\theta})^2) $. 
The \emph{statistical curvature} of a curved exponential family is defined similarly, except equipped with an inner product defined via its Fisher information metric.  
\begin{defn}[Statistical Curvature]
Consider a curved exponential family $\mathcal{P} = \{p(x|\theta) \colon \theta \in \Theta \}$, %
whose parameter $\theta$ is a scalar ($\dimtheta=1$). Let $\Sigma_{\theta} = \cov_{\theta} [\phi(x)]$ be the $\dimeta\times \dimeta$ Fisher information on the corresponding full exponential family $p(x | \eta)$.   
The statistical curvature of $\mathcal{P}$ at $\theta$ is defined as
\begin{align*}
\gamma_{\theta}^2 =  (  \dot\eta_{\theta}^T\Sigma_{\theta}  \dot\eta_{\theta} )^{-3} \big [ (\ddot\eta_{\theta}^T \Sigma_{\theta} \ddot\eta_{\theta})  \cdot (\dot\eta_{\theta}^T\Sigma_{\theta} \dot\eta_{\theta})  - (  \ddot\eta_{\theta}^T \Sigma_{\theta} \dot\eta_{\theta})^2 \big ]. 
\end{align*}
\end{defn}
The definition can be extended to general multi-dimensional parameters, but requires involved notation. 
We give the full definition and our general results in the appendix. %
\begin{exa}[Bivariate Normal on Ellipse] 
\label{exa:ellipseexample}
Consider a bivariate normal distribution with diagonal covariance matrix and mean vector restricted on an ellipse $\eta(\theta) = [a \cos(\theta),  b\sin(\theta)]$, that is, 
$$p(x | \theta)\propto \exp \big [-\frac{1}{2}(x_1^2+x_2^2) +  a\cos\theta~x_1  +  b \sin\theta ~x_2) \big ], ~~~~ \theta \in (-\pi, \pi), ~~ x \in \mathbb{R}^2.$$ 
We have that $\Sigma_\theta$ equals the identity matrix in this case, and the statistical curvature equals the geometric curvature of the ellipse in the Euclidian space, 
$
\gamma_{\theta} = {ab}  (a^2\sin^2(\theta) + b^2 \cos^2 (\theta))^{-3/2}. %
$
\end{exa}

The statistical curvature was originally defined by \citet{efron1975defining} as the minimum amount of information loss when summarizing the sample using first order efficient estimators. \citet{efron1975defining} showed that, extending the result of \citet{fisher1925theory} and \citet{rao1963criteria}, 
\newcommand{\ttrue}{\thetatrue}
\begin{align}
\label{equ:infloss}
\lim_{n\to \infty}[ \Fisher_{\ttrue}^X  - \Fisher^{\thetamle}_{\ttrue}] =   \gamma_{\thetatrue}^2 \fisher_{\thetatrue},\end{align}
where $I_{\thetatrue}$ is the Fisher information (per data instance) of the distribution $p(x | \theta)$ at the true parameter $\thetatrue$, and $\Fisher_{\thetatrue}^X = n \fisher_{\thetatrue}$ is the total information included in a sample $X$ of size $n$, and $\Fisher^{\thetamle}_{\thetatrue}$ is the Fisher information included in $\thetamle$ based on $X$. Intuitively speaking, we lose about $\gamma_{\thetatrue}^2$ units of Fisher information when summarizing the data using the ML estimator. 
\citet{fisher1925theory} also interpreted $\gamma_{\thetatrue}^2$ as the effective number of data instances lost in MLE, easily seen from rewriting $ \Fisher^{\thetamle}_{\thetatrue} \approx (n - \gamma_{\thetatrue}^2) I_{\thetatrue}$, as compared to $\Fisher_{\thetatrue}^X = n \fisher_{\thetatrue}$. 
Moreover, this is the minimum possible information loss in the class of ``first order efficient" estimators $T(X)$, those which satisfy the weaker condition $\lim_{n\to \infty} \Fisher_{\thetatrue} / \Fisher^{T}_{\thetatrue} = 1.$ 
Rao coined the term ``second order efficiency" for this property of the MLE. 

The intuition here has direct implications for our distributed setting, 
since $\thetaf$ depends on the data only through $\{\thetak\}$, each of which summarizes the data with a loss of $\gamma_{\thetatrue}^2$ units of information. The total information loss is $d \cdot \gamma_{\thetatrue}^2 $, in contrast with the global MLE, which only loses $ \gamma_{\thetatrue}^2 $ overall.  Therefore, the additional loss due to the distributed setting is $(d-1) \cdot \gamma_{\thetatrue}^2 $. 
We will see that our results in the sequel closely match this intuition. 
\subsection{Lower Bound}
\label{sec:tojoint}
The extra information loss $(d-1)\gamma_{\thetatrue}^2$ turns out to be the asymptotic lower bound of the mean square error rate $n^2 \E_{\thetatrue}[ I_{\thetatrue}(\thetaf - \thetamle)^2]$ for any arbitrary combination function $f(\thetahat^1, \ldots,\thetahat^d)$. 
\begin{thm}[Lower Bound]
\label{thm:lowerbound}
For an arbitrary measurable function $\thetaf \!=\! f(\thetahat^1, \ldots, \thetahat^d)$, we have
$$
\liminf_{n\to +\infty}~ n^2 ~ \E_{\thetatrue}[|| f(\thetahat^1, \ldots, \thetahat^d)  - \thetamle ||^2 ] \geq
(d-1)\gamma_{\thetatrue}^2 I_{\thetatrue}^{-1}.  %
$$
\end{thm}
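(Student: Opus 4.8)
The plan is to establish the lower bound via an asymptotic reduction to a Gaussian limit experiment and then invoke a Fisher-information (Cramér--Rao-type) argument. The key observation is that each local MLE $\thetak$ admits an asymptotic expansion around $\thetatrue$ in terms of the local sufficient-statistic averages. Write $\hat\mu_k = (d/n)\sum_{i\in\alpha_k}\phi(x^i)$ for the $k$-th local empirical mean of the natural statistics of the enveloping full exponential family, and $\hatmoment_n = (1/n)\sum_{i\in[n]}\phi(x^i) = (1/d)\sum_k\hat\mu_k$ for the global one. A second-order Taylor expansion of the curved-family likelihood equations (exactly the computation underlying Efron's formula \eqref{equ:infloss}) gives $\thetak = g(\hat\mu_k) + \text{higher order}$ and $\thetamle = g(\hatmoment_n) + \text{higher order}$ for a common smooth map $g$, where the quadratic term in $g$ is what produces the curvature $\gamma_{\thetatrue}$. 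The point is that $f(\thetahat^1,\ldots,\thetahat^d)$ is a measurable function of $(\hat\mu_1,\ldots,\hat\mu_d)$ only through the $\thetak$, i.e.\ it sees the data only via the $d$-tuple $(\hat\mu_1,\ldots,\hat\mu_d)$ after passing each coordinate through the (generally non-injective when $\dimeta>\dimtheta$) summary $\hat\mu_k \mapsto \thetak$.

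First I would set up the joint CLT: $\sqrt{n/d}\,(\hat\mu_k - \moment(\eta(\thetatrue)))$ are asymptotically independent $\normal(0,\Sigma_{\thetatrue})$ across $k$, so the rescaled fluctuations $Z_k := \sqrt{n}(\thetak - \thetatrue)$ converge jointly to independent Gaussians, but crucially each $Z_k$ retains only the component of the $\hat\mu_k$-fluctuation that is ``visible'' through $\dot\eta_{\thetatrue}$; the orthogonal (curvature) directions of the fluctuation are discarded when forming $\thetak$. Next I would pass to the limit experiment: estimating $\thetamle$ (equivalently, after centering and rescaling, estimating the global fluctuation $\sqrt{n}(\hatmoment_n-\cdot)$ projected appropriately) from the $d$ independent summaries $\{Z_k\}$. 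In this Gaussian limit, $\sqrt n(\thetamle-\thetatrue)$ has a representation $\bar Z + W$, where $\bar Z = (1/d)\sum_k Z_k$ is recoverable from the $Z_k$'s and $W$ is a residual term — stemming from the averaged curvature components of the $\hat\mu_k$ that no individual $\thetak$ records — which is asymptotically independent of $(Z_1,\ldots,Z_d)$ and has variance $I_{\thetatrue}^{-1}\gamma_{\thetatrue}^2/d \cdot(\text{appropriate scaling})$. Then for any measurable $f$, $n\E_{\thetatrue}\|f(\thetahat^1,\ldots,\thetahat^d)-\thetamle\|^2$ is, in the limit, at least the conditional variance of $\sqrt n(\thetamle-\thetatrue)$ given $(Z_1,\ldots,Z_d)$, which is exactly $\Var(W)$. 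Tracking constants — the information loss in each $\thetak$ is $\gamma_{\thetatrue}^2 I_{\thetatrue}$ per Efron, and there are $d$ of them against the single global loss — yields the residual rate $(d-1)\gamma_{\thetatrue}^2 I_{\thetatrue}^{-1}$ at the $n^2$ scale. Making the interchange of $\liminf_n$ with the Gaussian-limit lower bound rigorous is handled by a standard Fatou / uniform-integrability argument combined with local asymptotic normality.

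The main obstacle I anticipate is the rigorous justification that the ``lost'' curvature component $W$ is genuinely unrecoverable from \emph{any} measurable function of $(\thetahat^1,\ldots,\thetahat^d)$ — that is, controlling the joint asymptotics of $(\thetahat^1,\ldots,\thetahat^d,\thetamle)$ tightly enough (a second-order, not merely first-order, expansion, uniformly enough to pass to $L^2$ limits) to show the conditional law of $\sqrt n(\thetamle-\thetatrue)$ given the local MLEs converges to $\normal(\bar Z,\,\gamma_{\thetatrue}^2 I_{\thetatrue}^{-1}(d-1)/d \cdot \text{scaling})$. This requires the higher-order delta-method expansion of the curved-family score equations and a careful bookkeeping of which linear combinations of the $\Sigma_{\thetatrue}$-Gaussian noise survive in the $\thetak$ versus in $\thetamle$; the decomposition of $\Sigma_{\thetatrue}$ into the tangent direction $\dot\eta_{\thetatrue}$ and its $\Sigma_{\thetatrue}$-orthogonal complement (the curvature directions) is the linear-algebraic heart of the constant $\gamma_{\thetatrue}^2$. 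The remaining steps — the joint CLT, the reduction to the Gaussian experiment, and the conditional-variance lower bound — are essentially standard once this expansion is in place.
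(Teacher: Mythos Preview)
Your opening move --- lower-bounding $\E_{\thetatrue}\|\thetaf-\thetamle\|^2$ by the conditional variance $\E_{\thetatrue}[\Var_{\thetatrue}(\thetamle\mid\thetaall)]$ via the $L^2$ projection onto $\sigma(\thetaall)$ --- is exactly how the paper begins, so the framing is right. From there, however, the two arguments diverge, and your version contains a genuine scaling error.

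The decomposition $\sqrt{n}(\thetamle-\thetatrue)=\bar Z+W$ with $W$ asymptotically independent of $(Z_1,\ldots,Z_d)$ and of nondegenerate limiting variance cannot hold. At the $\sqrt n$ scale, both $\thetamle$ and $\thetalinear=\frac1d\sum_k\thetak$ have the \emph{same} first-order expansion $I_*^{-1}\dot\eta_*^T(\bar\mu-\mu_*)$, so $W=\sqrt n(\thetamle-\thetatrue)-\bar Z\pto 0$. The curvature loss is a purely second-order phenomenon: one must work at the scale $n(\thetamle-\cdot)$, where the limiting objects are traces of Wishart matrices (quadratic forms in the Gaussian fluctuations of $\hat\mu_k-\mu_*$), not Gaussians. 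Your ``Gaussian limit experiment'' at the $\sqrt n$ scale therefore loses the very quantity you are trying to bound; the Cram\'er--Rao-type argument you invoke would give zero, not $(d-1)\gamma_*^2 I_*^{-1}$. You evidently sense this (you mention the quadratic term and second-order expansions), but the proof as written never actually moves to the correct scale, and the claimed independence/variance properties of $W$ are stated at the wrong order.

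The paper avoids computing the conditional variance directly. Instead it first establishes separately that $n[I_*(\thetakl-\thetamle)]_i\dto\trace(G_iW)$ with $W\sim\Wishart(\Sigma_*,d-1)$ and $n^2\E_{\thetatrue}\|I_*^{1/2}(\thetakl-\thetamle)\|^2\to(d-1)\gamma_*^2$, and then shows that $\thetakl$ coincides with the projection $\thetaproj=\E_{\thetatrue}(\thetamle\mid\thetaall)$ up to $o_p(n^{-1})$. The key step for the latter is that $\E(\trace(G_iW)\mid\thetaall)\pto 0$, which follows from the conditional-covariance identity $\Cov(\sqrt{n}(\hat\mu_X-\mu_*)\mid\thetamle)\pto N_*\Sigma_*N_*^T$ together with $N_*^T\dot\eta_*=0$. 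This is morally the tangent/normal decomposition you describe, but deployed at the right (Wishart) scale and routed through the concrete estimator $\thetakl$ rather than an abstract limit experiment. If you want to make your direct route work, you would need to replace the $\bar Z+W$ picture by the second-order expansion of $\thetamle$ and compute $n^2\E[\Var(\thetamle\mid\thetaall)]$ from the conditional law of the normal components $N_*(\hat\mu_k-\mu_*)$ given $\thetak$ --- essentially reproducing those same lemmas.
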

\begin{proof}[Sketch of Proof ]
Note that
\begin{align*}
\E_{\thetatrue}[|| \thetaf  - \thetamle ||^2 ]  
&= \E_{\thetatrue}[|| \thetaf  - \E_{\thetatrue}(\thetamle | \thetahat^1,\ldots,\thetahat^d) ||^2 ]  + \E_{\thetatrue}[|| \thetamle - \E_{\thetatrue}(\thetamle | \thetahat^1,\ldots,\thetahat^d)  ||^2 ]  \\
&\geq  \E_{\thetatrue}[|| \thetamle - \E_{\thetatrue}(\thetamle | \thetahat^1,\ldots,\thetahat^d)  ||^2]   \\
& = \E_{\thetatrue}[ \var_{\thetatrue}(\thetamle | \thetahat^1,\ldots,\thetahat^d)], 
\end{align*}
where the lower bound is achieved when $\thetaf  = \E_{\thetatrue}(\thetamle | \thetahat^1,\ldots,\thetahat^d)$.  
The conclusion follows by showing that $\lim_{n\to +\infty}\E_{\thetatrue}[ \var_{\thetatrue}(\thetamle | \thetahat^1,\ldots,\thetahat^d)]  = (d-1)\gamma_{\thetatrue}^2 I_{\thetatrue}^{-1}$; 
 this requires involved asymptotic analysis, and is presented in the Appendix. %
\end{proof}

\begin{wrapfigure}{r}{0.3\textwidth}
\vspace{-2em}
  \begin{center}
    \includegraphics[width=0.3\textwidth]{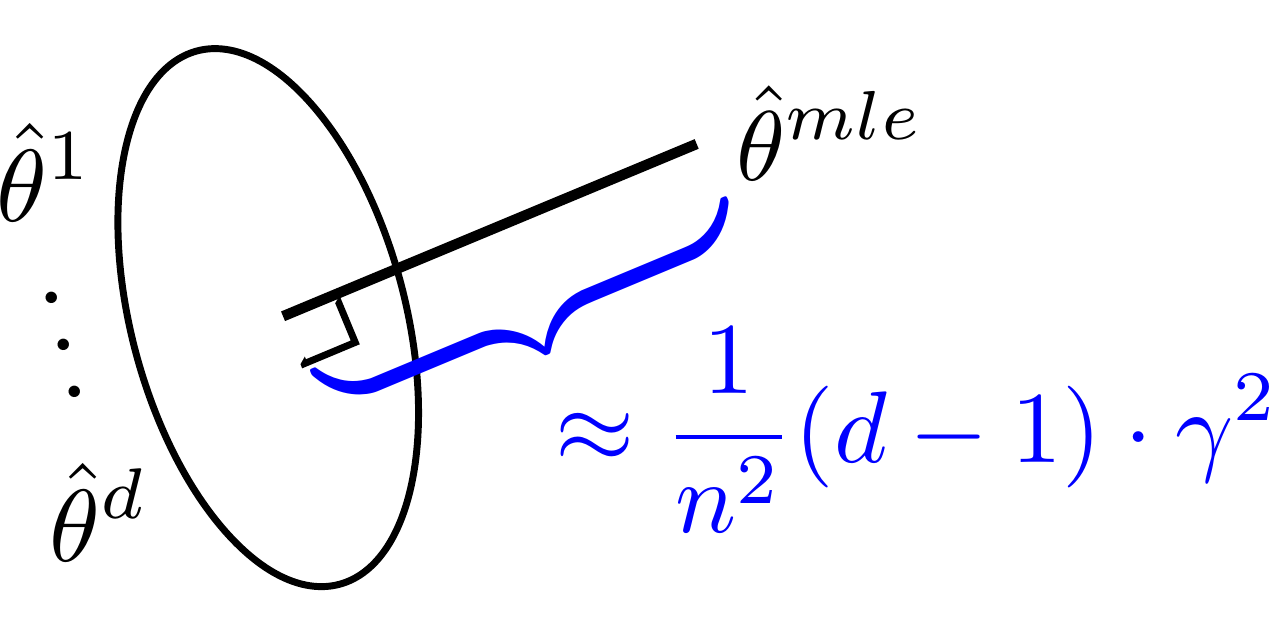}
  \end{center}
\vspace{-1.5em}
\end{wrapfigure}
The proof above highlights a geometric interpretation via the projection of random variables \citep[e.g., ][]{van2000asymptotic}. Let $\mathcal{F}$ be the set of all random variables in the form of $f(\thetahat^1, \ldots,\thetahat^d)$. The optimal consensus function should be the projection of $\thetamle$ onto $\mathcal{F}$, and the minimum mean square error is the distance between $\thetamle$ and $\mathcal{F}$. %
The conditional expectation $\thetaf  = \E_{\thetatrue}(\thetamle | \thetahat^1,\ldots,\thetahat^d)$ is the exact projection and  ideally the best combination function; however, this is intractable to calculate due to the dependence on the unknown true parameter $\thetatrue$. We show in the sequel that $\thetakl$ gives an efficient approximation and achieves the same asymptotic lower bound. 

\subsection{General Consistent Combination}
We now analyze the performance of a general class of $\thetaf$, which includes both the KL average $\thetaKL$ and the linear average $\thetalinear$; we show that $\thetaKL$ matches the lower bound in Theorem~\ref{thm:lowerbound}, while $\thetalinear$ is not optimal even on full exponential families. 
We start by defining conditions which any ``reasonable" $f(\thetahat^1, \ldots, \thetahat^d)$ should satisfy. 

\begin{defn}
\label{def:consistent}
(1). We say $f(\cdot)$ is \emph{consistent}, if for $\forall \theta \in \Theta$, $\theta^k \to \theta$, 
$\forall k\in [d] $  %
implies 
$f(\theta^1, \ldots, \theta^d) \to 
\theta$.%
(2). %
$f(\cdot)$ is \emph{symmetric} if 
$ f(\thetahat^1, \ldots, \thetahat^d)  =  f(\thetahat^{\sigma(1)}, \ldots, \thetahat^{\sigma(d)}), $
for any permutation $\sigma $ on $[d]$.    
\end{defn}
The consistency condition guarantees that if all the $\thetak$ are consistent estimators, then $\thetaf$ should also be consistent. The symmetry is also straightforward due to the symmetry of the data partition $\{X^k\}$. In fact, if $f(\cdot)$ is not symmetric, one can always construct a symmetric version that performs better or at least the same (see Appendix for details). 
We are now ready to present the main result. 

\begin{thm}
\label{thm:main}
(1). Consider a consistent and symmetric $\thetaf =  f(\thetahat^1, \ldots, \thetahat^d)$ as in Definition~\ref{def:consistent}, 
whose
first three orders of derivatives exist. %
Then, for curved exponential families in Definition~\ref{def:curved}, 
\begin{align*}
&\Etrue [ \thetaf  - \thetamle ] ~=~ \frac{d-1}{n} \blue{\beta_{\thetatrue}^f}  + \oo(n^{-1}),  \\
&\Etrue [ ||\thetaf  - \thetamle||^2 ] ~= ~ \frac{d-1}{n^2} \cdot [{\gamma_{\thetatrue}^2}{I_{\thetatrue}^{-1}} + \blue{ (d+1)(\beta_{\thetatrue}^f)^2} ]   + \oo(n^{-2}), 
\end{align*}
where $\beta_{\thetatrue}^{f}$ is a term that depends on the choice of the combination function $f(\cdot)$. 
Note that the mean square error is consistent with the lower bound in Theorem~\ref{thm:lowerbound}, and is tight if $\beta_{\thetatrue}^{f} = 0$. 

(2). The KL average $\thetakl$ has $\beta_{\thetatrue}^{f} = 0$, and hence achieves the minimum bias and mean square error, 
\begin{align*}
\Etrue [ \thetaKL - \thetamle ] ~=~  \oo(n^{-1}), && 
\Etrue[ ||\thetaKL  - \thetamle||^2 ] ~= ~ \frac{d-1}{n^2} \cdot {\gamma_{\thetatrue}^2}{I_{\thetatrue}^{-1}}   + \oo(n^{-2}). 
\end{align*}
In particular, note that the bias of $\thetakl$ is smaller in magnitude than that of general $\thetaf$ with $\beta_{\thetatrue}^f\neq0$. 
(4). The linear averaging $\thetalinear$, however, does not achieve the lower bound in general. We have
$$
\beta_{\thetatrue}^{\linear} = 
I_*^{-2}( \ddot\eta_{\thetatrue}^T\Sigma_{\thetatrue} \dot\eta_{\thetatrue}  
+\frac{1}{2} \bigg.  \E_{\thetatrue} \big[  \frac{\partial^3\log p(x | \thetatrue)}{\partial \theta^3}  \big]), 
$$
which is in general non-zero even for full exponential families. %

(5). The MSE w.r.t. the global MLE $\thetamle$ can be related to the MSE w.r.t. the true parameter $\thetatrue$, by
\begin{align*}
&\Etrue [ ||\thetaKL  - \thetatrue||^2 ]  ~ = ~\Etrue [ ||\thetamle  - \thetatrue||^2 ]  ~ +~  \frac{d-1}{n^2} \cdot {\gamma_{\thetatrue}^2}{I_{\thetatrue}^{-1}}   + \oo(n^{-2}). \\
&\Etrue [ ||\thetalinear  - \thetatrue||^2 ]  ~= ~ \Etrue [ ||\thetamle  - \thetatrue||^2 ]   ~+ ~ \frac{d-1}{n^2} \cdot [  {\gamma_{\thetatrue}^2}{I_{\thetatrue}^{-1}} + 2(\beta^{\linear}_{\thetatrue})^2  ] + \oo(n^{-2}).
\end{align*}
\end{thm}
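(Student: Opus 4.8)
The plan is to prove Theorem~\ref{thm:main} by a careful second-order Taylor (stochastic) expansion of the local MLEs $\thetak$ and of the combination function $f$ around the true parameter $\thetatrue$, keeping terms up to order $n^{-1}$ in the relevant quantities and tracking how the curvature $\gamma_{\thetatrue}$ enters. First I would set up the standard asymptotics: each $\thetak$ is an MLE on a sample of size $n/d$, so it admits an expansion $\thetak = \thetatrue + I_{\thetatrue}^{-1} \hat{S}_k + (\text{second-order correction involving } \ddot\eta, \ \partial^3 \log p) + \op((n/d)^{-1})$, where $\hat{S}_k$ is the (scaled) local score. The global MLE $\thetamle$ has the same form but with the pooled score $\hat{S} = \frac1d\sum_k \hat{S}_k$. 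The key structural fact, which I would establish first as a lemma, is that the second-order term in the MLE expansion splits into a part that is a \emph{linear} function of the empirical moments $\hat\mu$ (hence averages correctly across subsets) and a genuinely \emph{quadratic} part in the score; the quadratic part is exactly what produces the curvature term, because $\var$ of a sum of $d$ independent quadratics in scores does not collapse the way the linear part does. This is precisely Efron's information-loss computation \eqref{equ:infloss} re-derived at the level of estimators rather than Fisher information.

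Next I would expand $f(\thetahat^1,\ldots,\thetahat^d)$ around the common limit $\thetatrue$. Using consistency (Definition~\ref{def:consistent}(1)), $f(\thetatrue,\ldots,\thetatrue) = \thetatrue$; using symmetry, the gradient of $f$ assigns equal weight $1/d$ to each argument, so the first-order term of $\thetaf$ matches that of $\thetamle$ exactly and cancels in $\thetaf - \thetamle$. The difference $\thetaf - \thetamle$ is therefore of order $n^{-1}$, driven by (a) the second-derivative (Hessian) terms of $f$, which by symmetry are characterized by just two scalars (the "diagonal" and "off-diagonal" second derivatives), and (b) the mismatch between the second-order MLE corrections of the $\thetak$'s versus that of $\thetamle$. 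Taking $\Etrue$ and using $\Etrue[\hat S_k]=0$, $\Etrue[\hat S_j \hat S_k] = \delta_{jk} I_{\thetatrue}/(\text{sample size})$, the bias collapses to $\frac{d-1}{n}\beta_{\thetatrue}^f$ where $\beta_{\thetatrue}^f$ collects the curvature-related expectation $\ddot\eta_{\thetatrue}^T\Sigma_{\thetatrue}\dot\eta_{\thetatrue}$, the term $\frac12\Etrue[\partial^3 \log p/\partial\theta^3]$, and the Hessian-of-$f$ contribution. For the mean square error I would compute $\Etrue[\|\thetaf-\thetamle\|^2]$ to order $n^{-2}$: the variance of the order-$n^{-1}$ fluctuation decomposes into the curvature piece $\gamma_{\thetatrue}^2 I_{\thetatrue}^{-1}$ (independent of $f$, coming from the quadratic-in-score part which no symmetric $f$ can cancel) plus a $(d+1)(\beta_{\thetatrue}^f)^2$ piece from the $f$-dependent drift; combining with the squared bias gives the stated formula, and it is tight iff $\beta_{\thetatrue}^f = 0$, consistent with Theorem~\ref{thm:lowerbound}.

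For part (2), I would specialize to $\thetakl$, which by definition solves $\sum_k \dot\eta(\thetaf)^T(\mu(\thetak) - \mu(\thetaf)) = 0$ (the stationarity of \eqref{equ:thetakl}, using $\partial \KL/\partial\theta$ and $\mu = \partial\logZ/\partial\eta$ composed with $\eta$); expanding this implicit equation and comparing with the MLE's stationarity condition shows the second-order corrections align, so $\beta_{\thetatrue}^{\KL}=0$. Part (4) is a direct plug-in: for $\thetalinear$ the Hessian of $f$ vanishes, so $\beta_{\thetatrue}^{\linear}$ is exactly the residual curvature/third-derivative expression displayed, which need not vanish even on full exponential families (where $\gamma_{\thetatrue}=0$ but $\ddot\eta$, interpreted through the reparameterization, generically does not). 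Part (5) follows by writing $\thetaf - \thetatrue = (\thetaf - \thetamle) + (\thetamle - \thetatrue)$, expanding the square, and noting that the cross term $\Etrue[(\thetaf-\thetamle)^T(\thetamle-\thetatrue)]$ contributes at order $n^{-2}$: for $\thetakl$ the leading cross term vanishes because the curvature fluctuation is asymptotically orthogonal to the first-order score (it lives in the "quadratic" part), giving a clean additive $\gamma_{\thetatrue}^2 I_{\thetatrue}^{-1}$; for $\thetalinear$ the nonzero bias $\beta^{\linear}$ generates the extra $2(\beta^{\linear}_{\thetatrue})^2$ term.

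The main obstacle is the bookkeeping of the second-order stochastic expansions: one must expand $\thetak$, $\thetamle$, and $f$ each to second order, carefully separate the "linear in empirical moments" part from the "quadratic in scores" part, and verify the cancellations forced by consistency and symmetry without dropping any $O(n^{-1})$ contribution — in particular checking that cross-sample terms $\Etrue[\hat S_j \hat S_k]$ for $j\neq k$ genuinely vanish while within-sample quadratics survive and reassemble into Efron's $\gamma_{\thetatrue}^2$. Getting the constant $(d+1)$ (rather than, say, $d-1$ or $d^2$) in front of $(\beta_{\thetatrue}^f)^2$ requires the variance computation of the symmetric quadratic form in $d$ i.i.d.\ blocks to be done exactly. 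I would isolate this as a self-contained appendix lemma and treat the five itemized claims as corollaries of it.
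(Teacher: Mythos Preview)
Your overall strategy is sound and matches the paper's high-level plan: second-order expand the local and global MLEs, exploit consistency and symmetry of $f$ to force the first-order cancellation $\partial_k f = 1/d$, and identify the surviving $O(n^{-1})$ difference $\thetaf-\thetamle$ as a quadratic form in the centered empirical moments. Where your proposal diverges from the paper is in the \emph{organizing device} for these quadratic forms. The paper does not compute $\E$ and $\var$ of ``symmetric quadratic forms in $d$ i.i.d.\ blocks'' from scratch; instead it shows uniformly that
\[
n\,[I_*(\thetaf-\thetamle)]_i \;\dto\; \trace(F_i W),\qquad W=\tfrac{n}{d}\sum_k(\hat\mu_k-\bar\mu)(\hat\mu_k-\bar\mu)^T \;\dto\; \Wishart(\Sigma_*,\,d-1),
\]
with $F_i$ depending on $f$ (equal to $G_i$ for KL, $L_i$ for linear). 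All the constants you flag as delicate --- the $(d-1)$ in the bias, the $(d+1)$ in the MSE, and the clean $\gamma_*^2$ split --- then drop out of the standard Wishart moment identities $\E[\trace(AW)]=(d-1)\trace(A\Sigma_*)$ and $\E[(\trace AW)^2]=2(d-1)\trace((A\Sigma_*)^2)+(d-1)^2(\trace A\Sigma_*)^2$; in the scalar case $2+(d-1)=d+1$. The vanishing $\beta^{\KL}=0$ is then the geometric fact $\trace(G_i\Sigma_*)=0$, which holds because $G_i$ contains the normal projector $N_*$ and $N_*\Sigma_*\dot\eta_*=0$. Your plan would reproduce all of this, but the Wishart framing replaces your ``self-contained appendix lemma'' with a one-line citation.

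Two smaller points. First, the paper Taylor-expands $f$ around $\thetamle$, not $\thetatrue$; since $f(\thetamle,\ldots,\thetamle)=\thetamle$ exactly, this kills one layer of bookkeeping you would otherwise carry. Second, for part~(5) your cross-term argument is the right intuition, but the paper does not re-derive the orthogonality: it invokes the known second-order-efficiency fact (Ghosh~1994) that $\cov(\hat\theta-\thetamle,\thetamle)=o(n^{-2})$ for any first-order efficient $\hat\theta$, then combines it with the bias--variance decomposition. The $2(\beta^{\linear})^2$ (rather than $(d+1)(\beta^{\linear})^2$) in the $\thetatrue$-MSE comes from subtracting the squared bias $(d-1)^2(\beta^{\linear})^2/n^2$ out of the $\thetamle$-MSE before adding back, not from a separate cross-term computation.
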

\begin{proof}
See Appendix for the proof and the general results for multi-dimensional parameters. 
\end{proof}
Theorem~\ref{thm:main} suggests that $\thetaf  - \thetamle = \Op(1/n)$ for any consistent $f(\cdot)$, which is smaller in magnitude than $\thetamle - \thetatrue = \Op(1/\sqrt{n})$. 
Therefore, any consistent $\thetaf$ is first order efficient, in that its difference from the global MLE $\thetamle$ is negligible compared to $\thetamle - \thetatrue$ asymptotically. 
This also suggests that KL and the linear methods perform roughly the same asymptotically in terms of recovering the true parameter $\thetatrue$. 
However, we need to treat this claim with caution, because, as we demonstrate empirically, the linear method may significantly degenerate 
 in the non-asymptotic region or when the conditions in Theorem~\ref{thm:main} do not hold. 

\section{Experiments and Practical Issues}
We present numerical experiments 
to demonstrate 
the correctness of %
our theoretical analysis. 
More importantly, we also
 study empirical properties of the linear and KL combination methods 
that are not enlightened by the asymptotic analysis. %
We find that the linear average 
tends to degrade significantly when its local models ($\thetak$) are not already close, 
for example due to small sample sizes, 
heterogenous data partitions, or 
non-convex likelihoods
(so that different local models find different local optima). 
In contrast, the KL combination 
is much more robust in practice. 

\subsection{Bivariate Normal on Ellipse}
We start with the toy model in Example~\ref{exa:ellipseexample} 
to verify our theoretical results. 
We draw samples from the true model (assuming $\thetatrue = \pi/4$, $a = 1$, $b = 5$), and partition the samples randomly into 10 sub-groups ($d=10$). 
		Fig.~\ref{fig:ellipse} shows that the empirical biases and MSEs match closely with the theoretical predictions when the sample size is large (e.g., $n \geq 250$), 
		and $\thetakl$ is consistently better than $\thetalinear$ in terms of recovering both the global MLE and the true parameters. 
		Fig.~\ref{fig:ellipse}(b) shows that the bias of $\thetakl$ decreases faster than that of $\thetalinear$, as predicted in Theorem~\ref{thm:main}~(2). 
		Fig.~\ref{fig:ellipse}(c) shows that all algorithms perform similarly in terms of the asymptotic MSE w.r.t. the true parameters $\thetatrue$, but linear average degrades significantly in the non-asymptotic region (e.g., $n<250$). 

		\begin{wrapfigure}{r}{0.25\textwidth}
		\vspace{-1.5em}
			\begin{center}
				\includegraphics[width=0.25\textwidth]{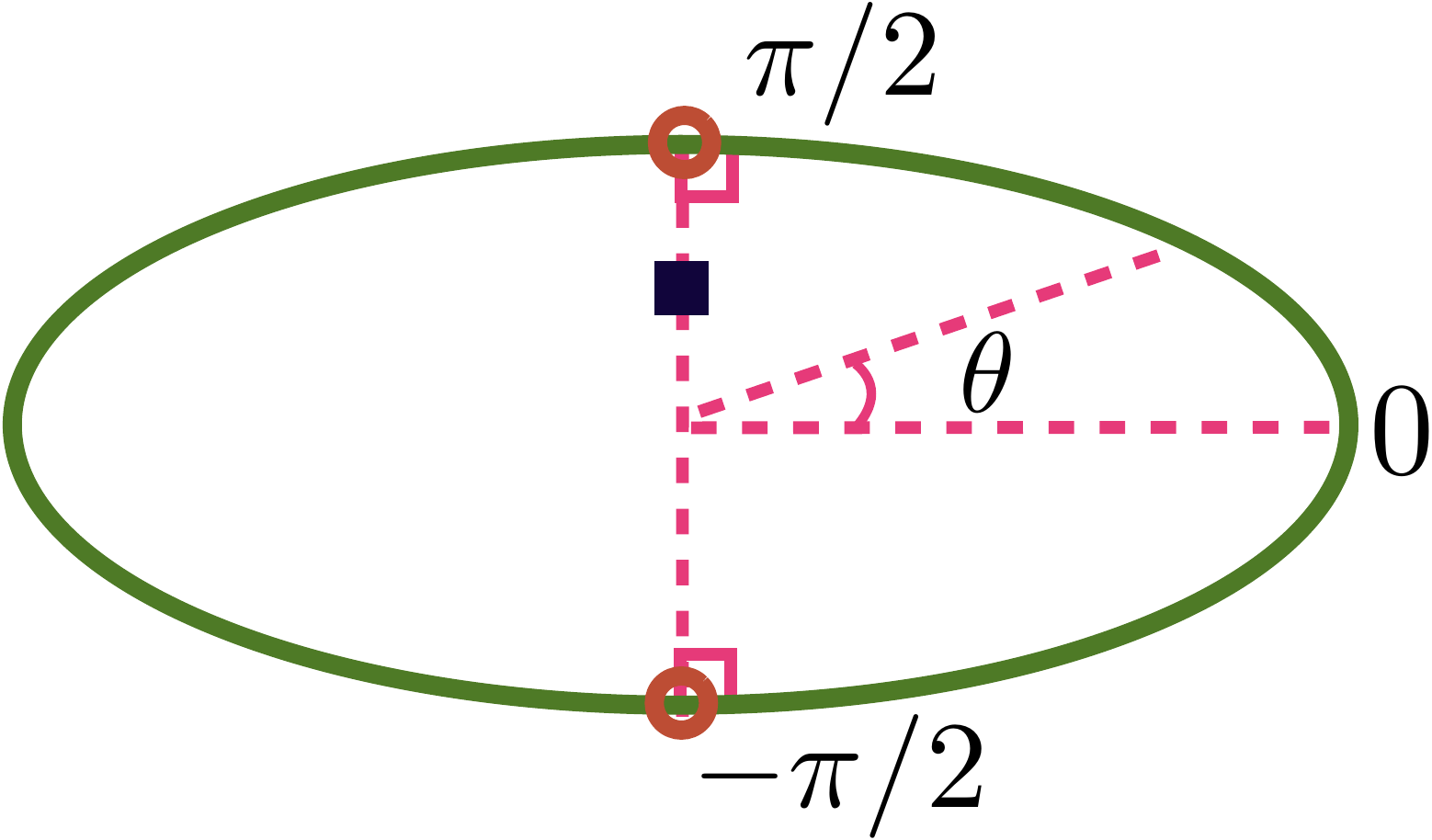}%
			\end{center}
		\vspace{-1em}  
		\end{wrapfigure}
		\textbf{Model Misspecification.} 
		Model misspecification is unavoidable in practice, and may create multiple local modes in the likelihood objective, leading to poor behavior from the linear average. 
		We illustrate this phenomenon using the toy model in Example~\ref{exa:ellipseexample}, assuming the true model is $\normal([0, 1/2],~  \mathbf{1}_{2\times2})$, outside of the assumed parametric family. This is illustrated in the figure at right, where the ellipse represents the parametric family, and the black square denotes the true model. The MLE will concentrate on the projection of the true model to the ellipse, in one of two locations ($\theta = \pm\pi/2$) indicated by the two red circles. 
		Depending on the random data sample, the global MLE will concentrate on one or the other of these two values;
		see Fig.~\ref{fig:ellipsemisspecify}(a).
		Given a sufficient number of samples ($n > 250$), the probability that the MLE is at $\theta \approx -\pi/2$ (the less favorable mode) goes to zero.
		Fig.~\ref{fig:ellipsemisspecify}(b) shows KL averaging mimics the bi-modal distribution of the global MLE
		across data samples; the less likely mode vanishes slightly slower. In contrast, the linear average takes the arithmetic average of local models from both of these two local modes, giving unreasonable parameter estimates that are close to neither (Fig.~\ref{fig:ellipsemisspecify}(c)).

\begin{figure}
	\centering
	\begin{tabular}{cccc}
	\includegraphics[width=2.5cm]{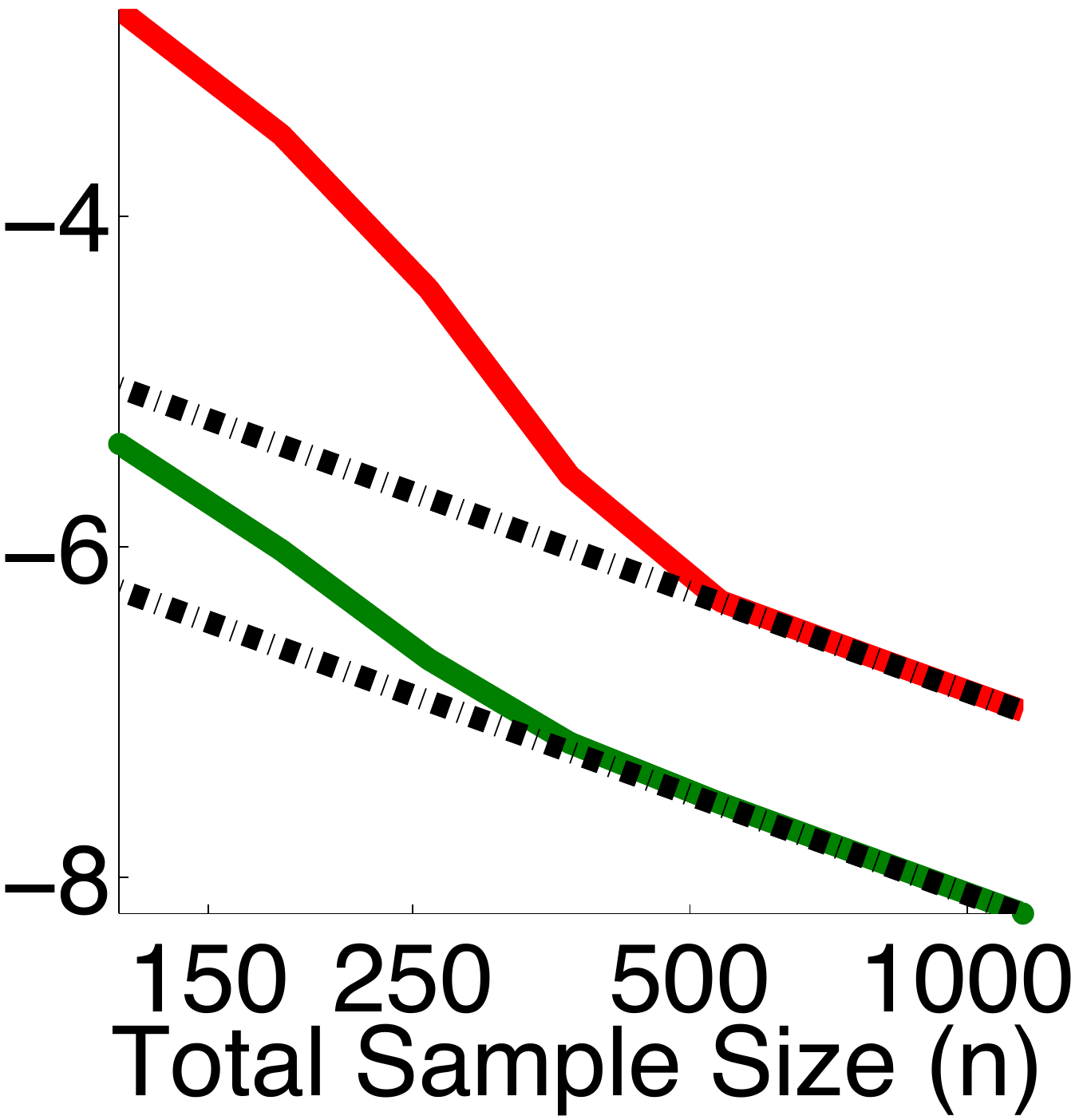}
	\hspace{-1.3cm}\raisebox{1.8cm}{\includegraphics[width=1.5cm]{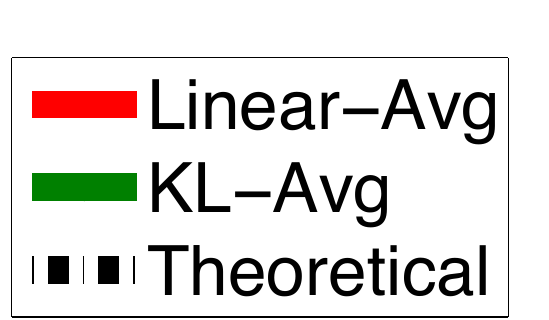}} 
	&
	\includegraphics[width=2.5cm]{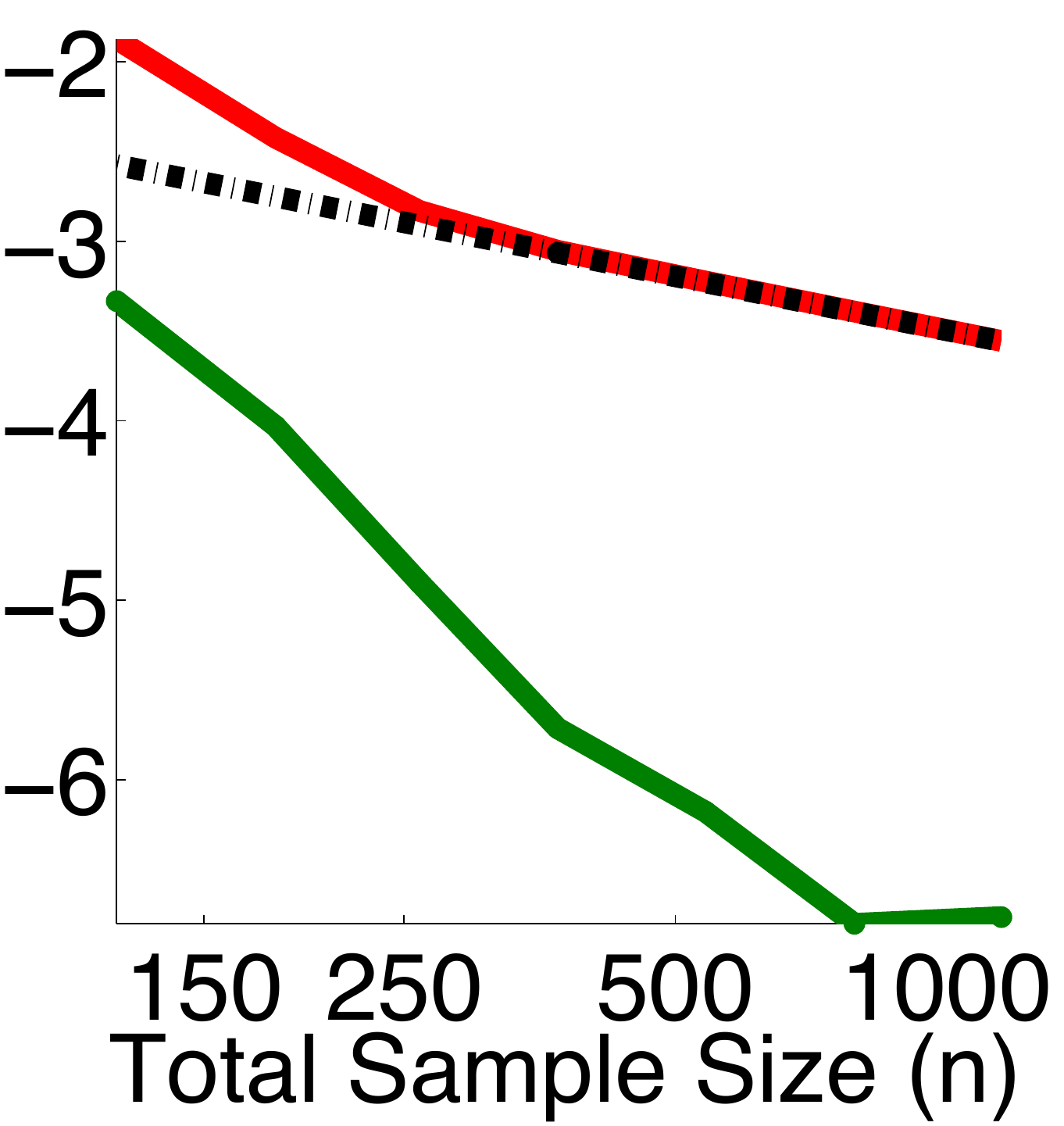}
	&
	\includegraphics[width=2.5cm]{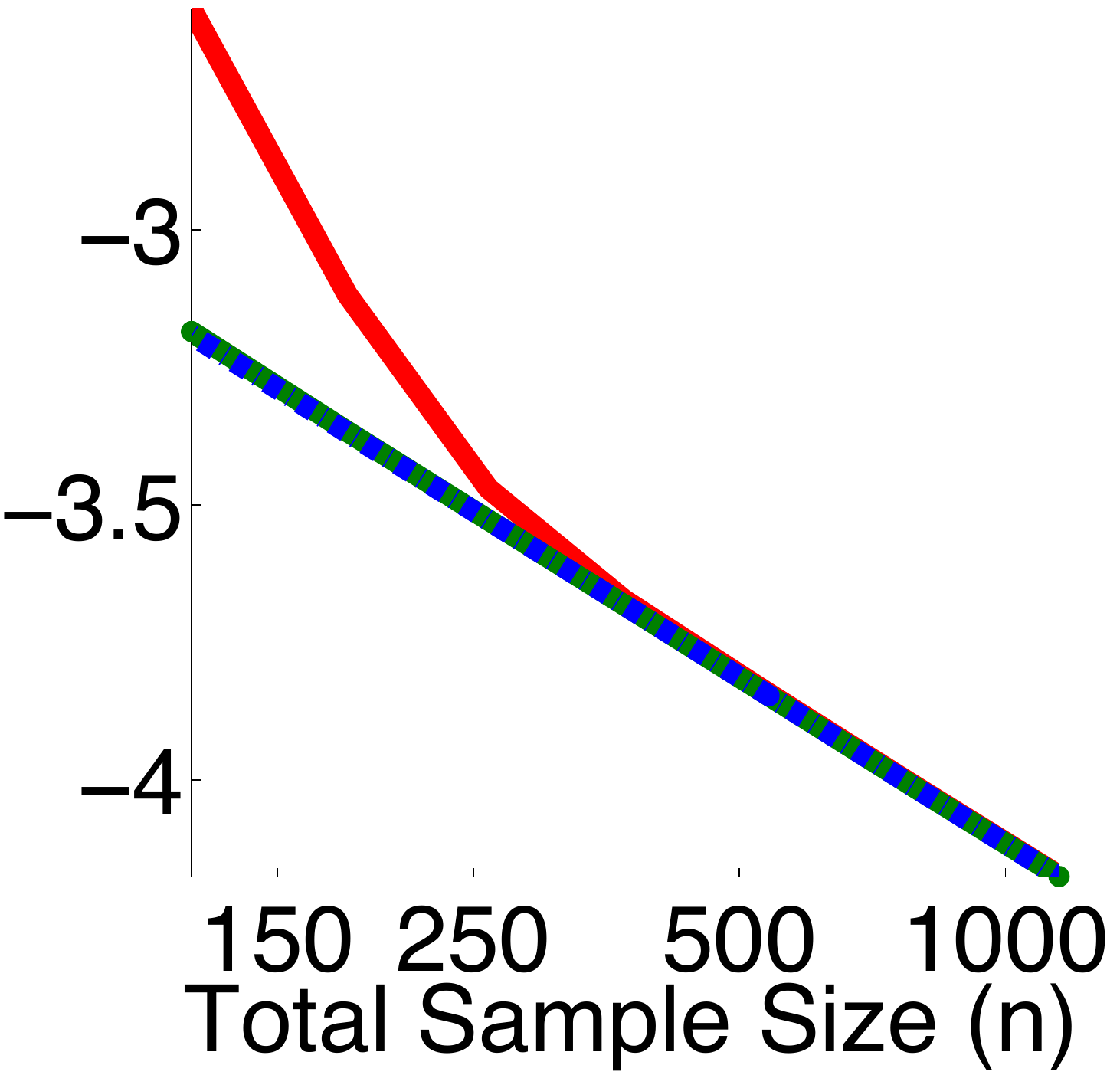}	
	\hspace{-1.3cm}\raisebox{1.8cm}{\includegraphics[width=1.5cm]{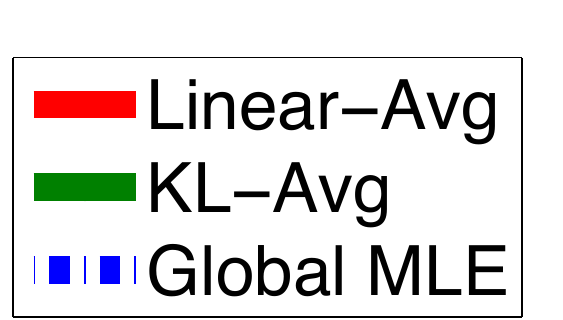}} 
	&	
	\includegraphics[width=2.5cm]{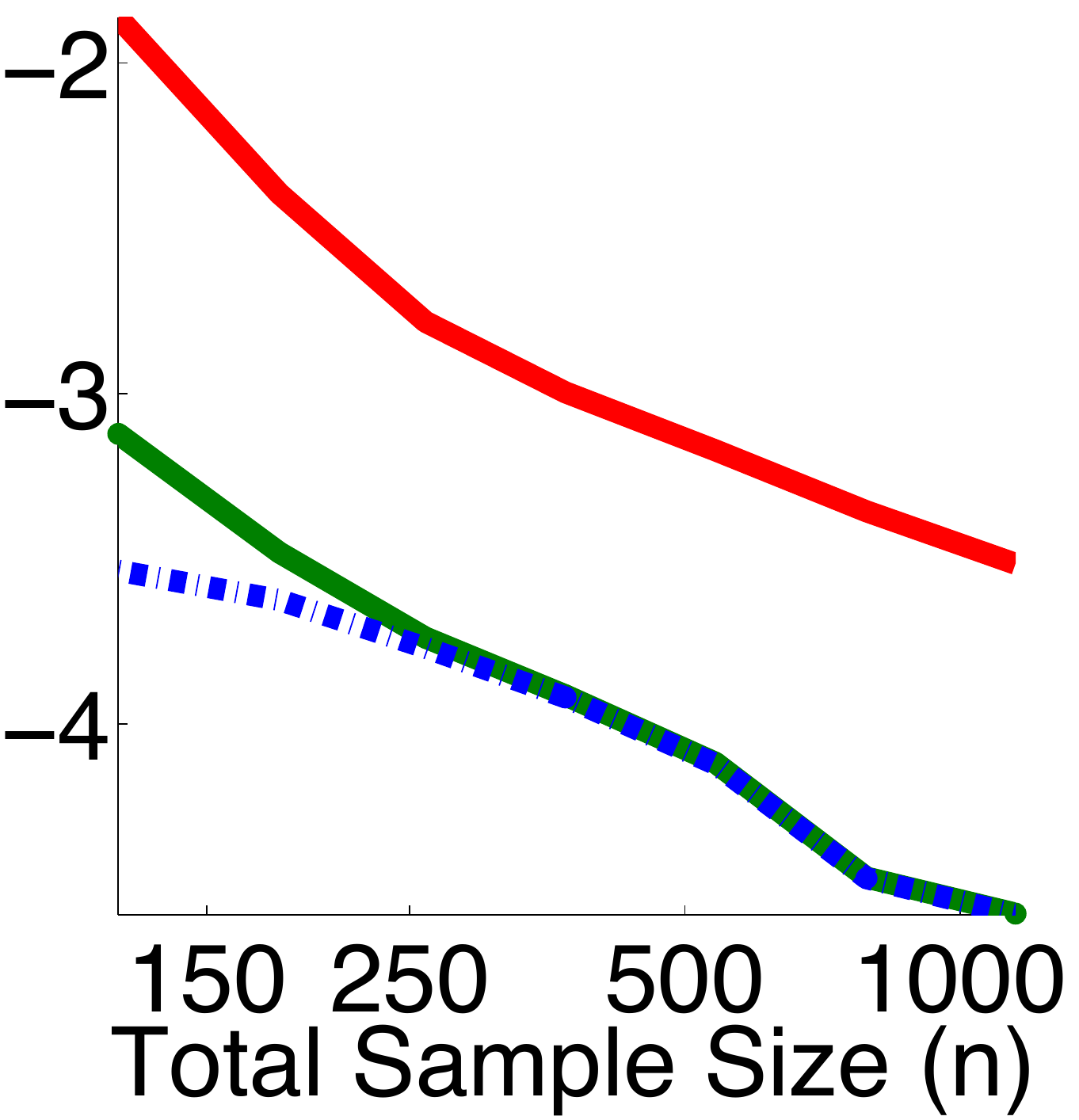}		\\
	{\small (a). $\E (|| \theta^f-\thetamle||^2)$} & {\small (b). $|\E ( \theta^f-\thetamle)|$}  &{\small (c). $\E (|| \theta^f-\thetatrue||^2)$} & {\small (d). $|\E (\theta^f-\thetatrue)|$}  
	\end{tabular}
	\caption{Result on the toy model in Example~\ref{exa:ellipseexample}. 
	 (a)-(d) The mean square errors and biases of the linear average $\thetalinear$ and the KL average $\thetakl$ w.r.t. to the global MLE $\thetamle$ and the true parameter $\thetatrue$, respectively. The y-axes are shown on logarithmic (base 10) scales.}
	\label{fig:ellipse}
\end{figure}

\begin{figure}[tbh]
	\centering
\scalebox{1}{	
	\begin{tabular}{ccc}	\hspace{0cm}\raisebox{0cm}{\includegraphics[width=.3\textwidth]{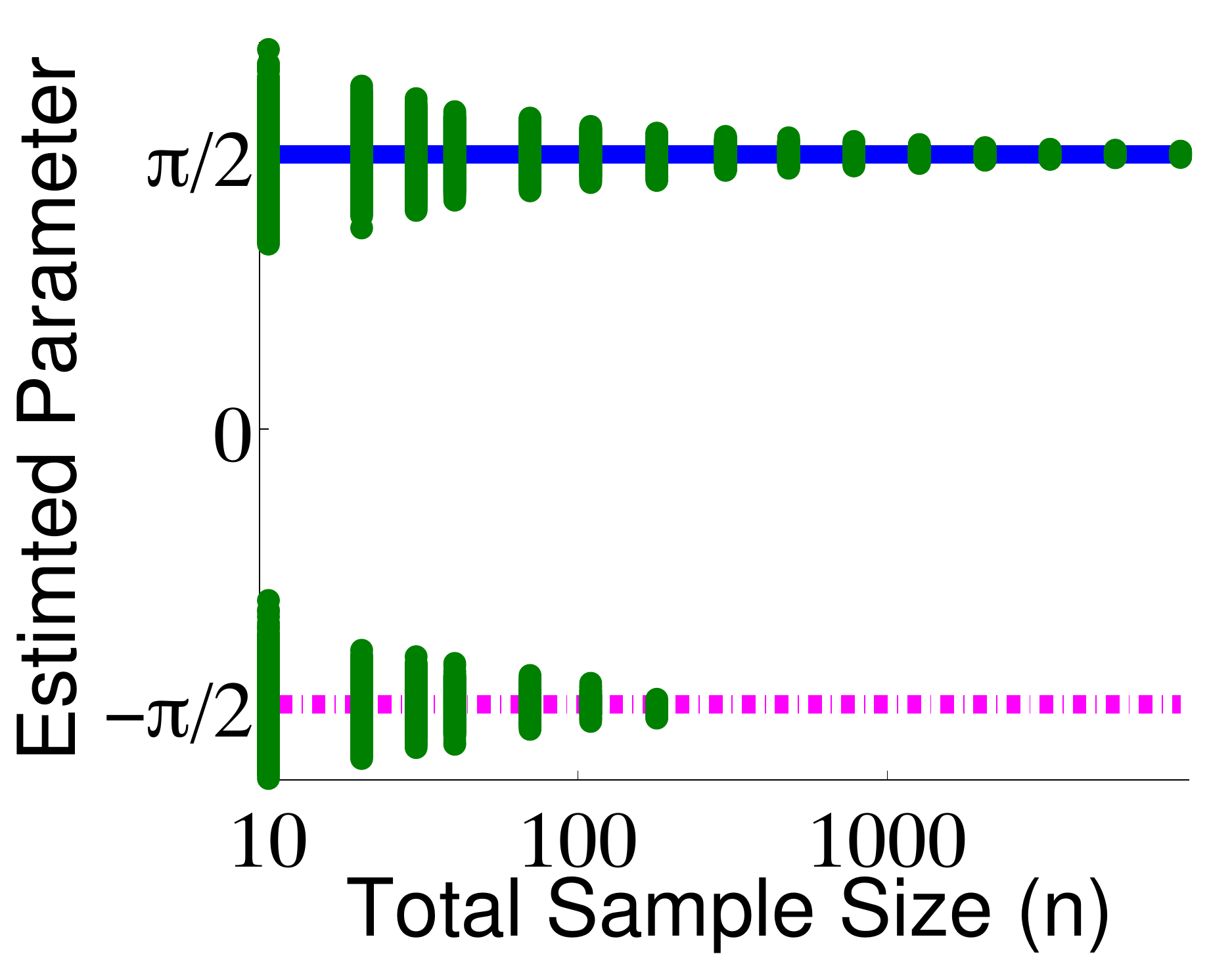}} 	
		\hspace{-1.5cm}\raisebox{.9cm}{\includegraphics[width=.1\textwidth]{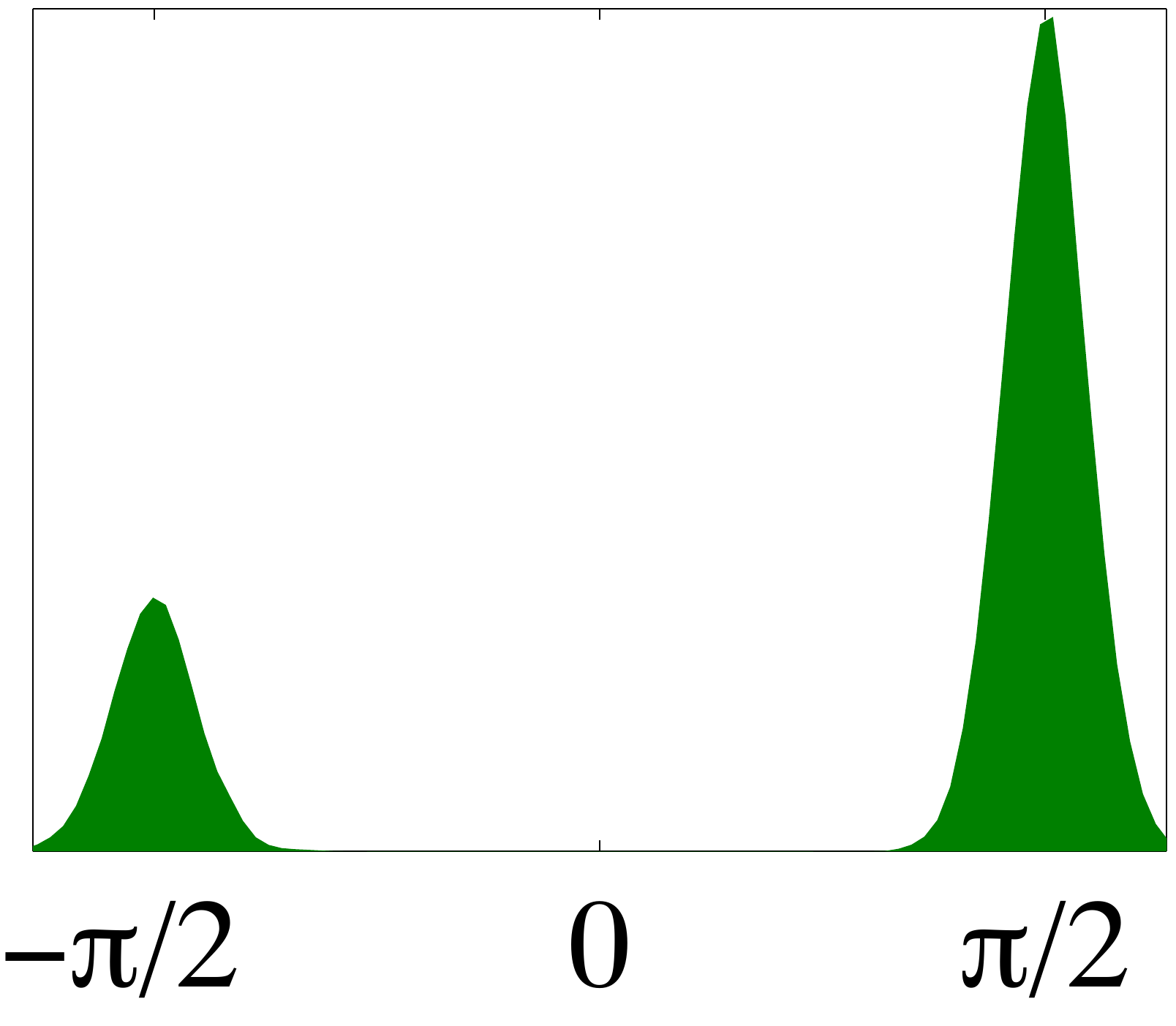}} 
		&
	\hspace{0cm}\raisebox{0cm}{\includegraphics[width=.3\textwidth]{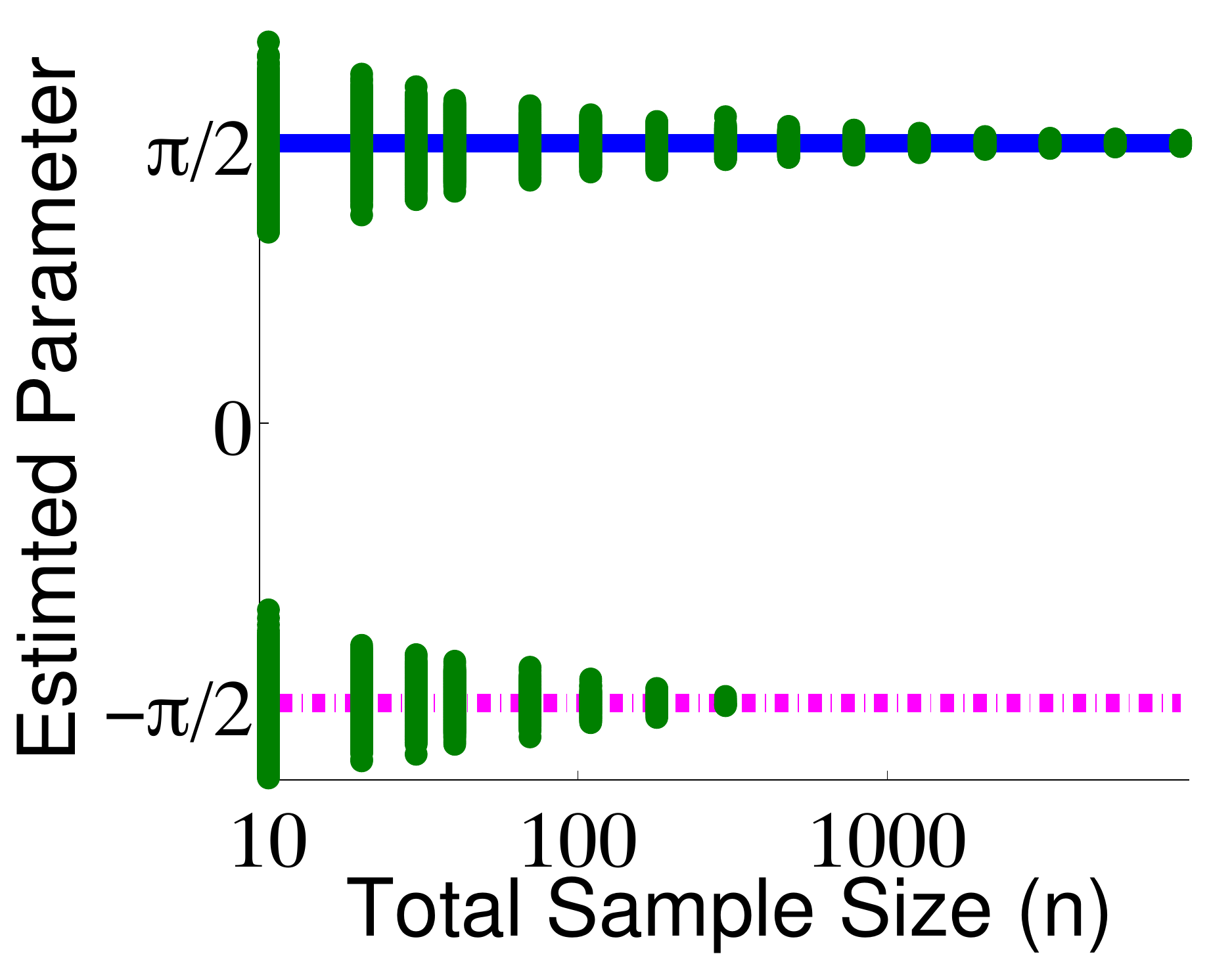}}
		\hspace{-1.5cm}\raisebox{0.9cm}{\includegraphics[width=.1\textwidth]{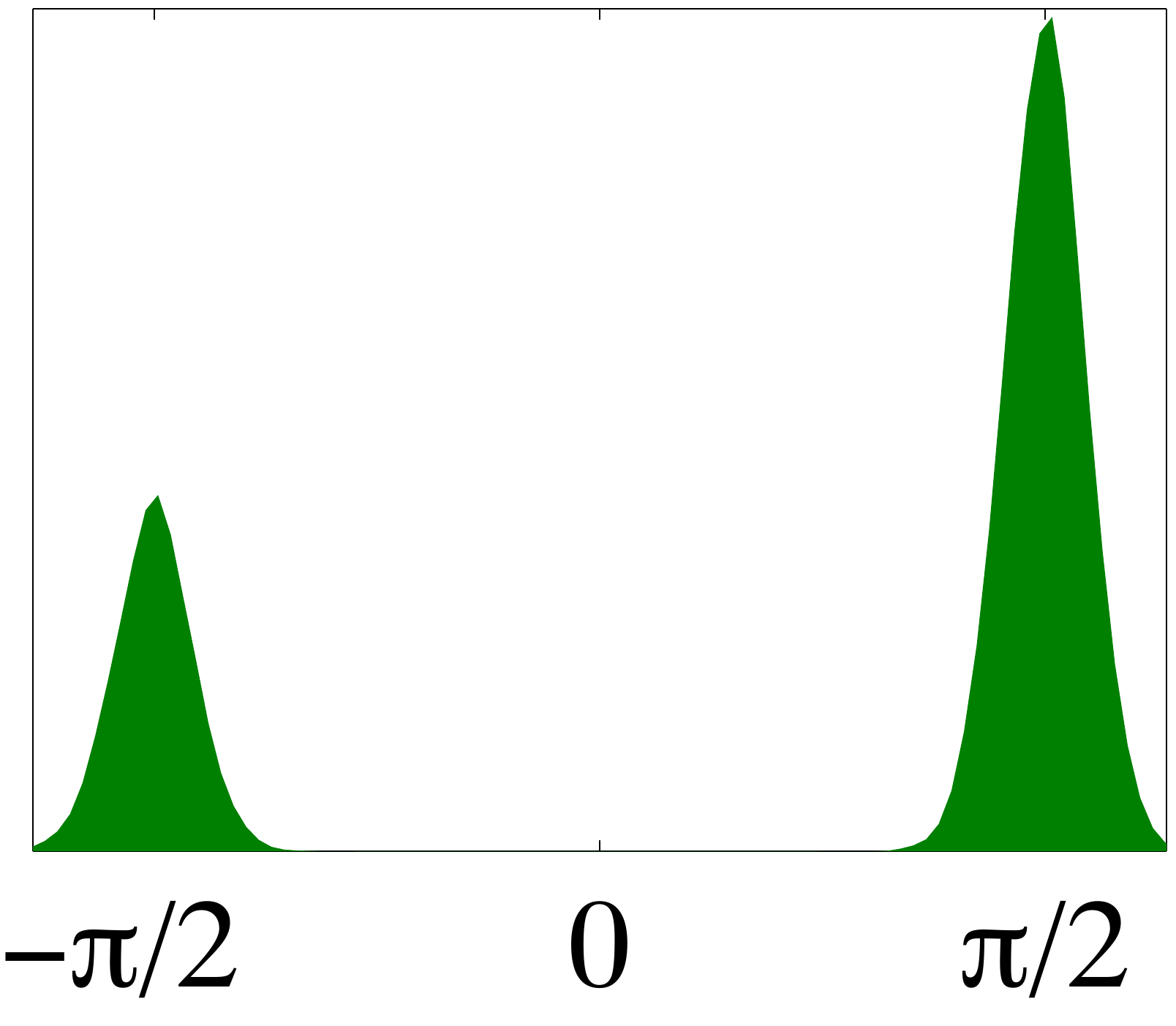}} 
		&
	\hspace{0cm}\raisebox{0cm}{\includegraphics[width=.3\textwidth]{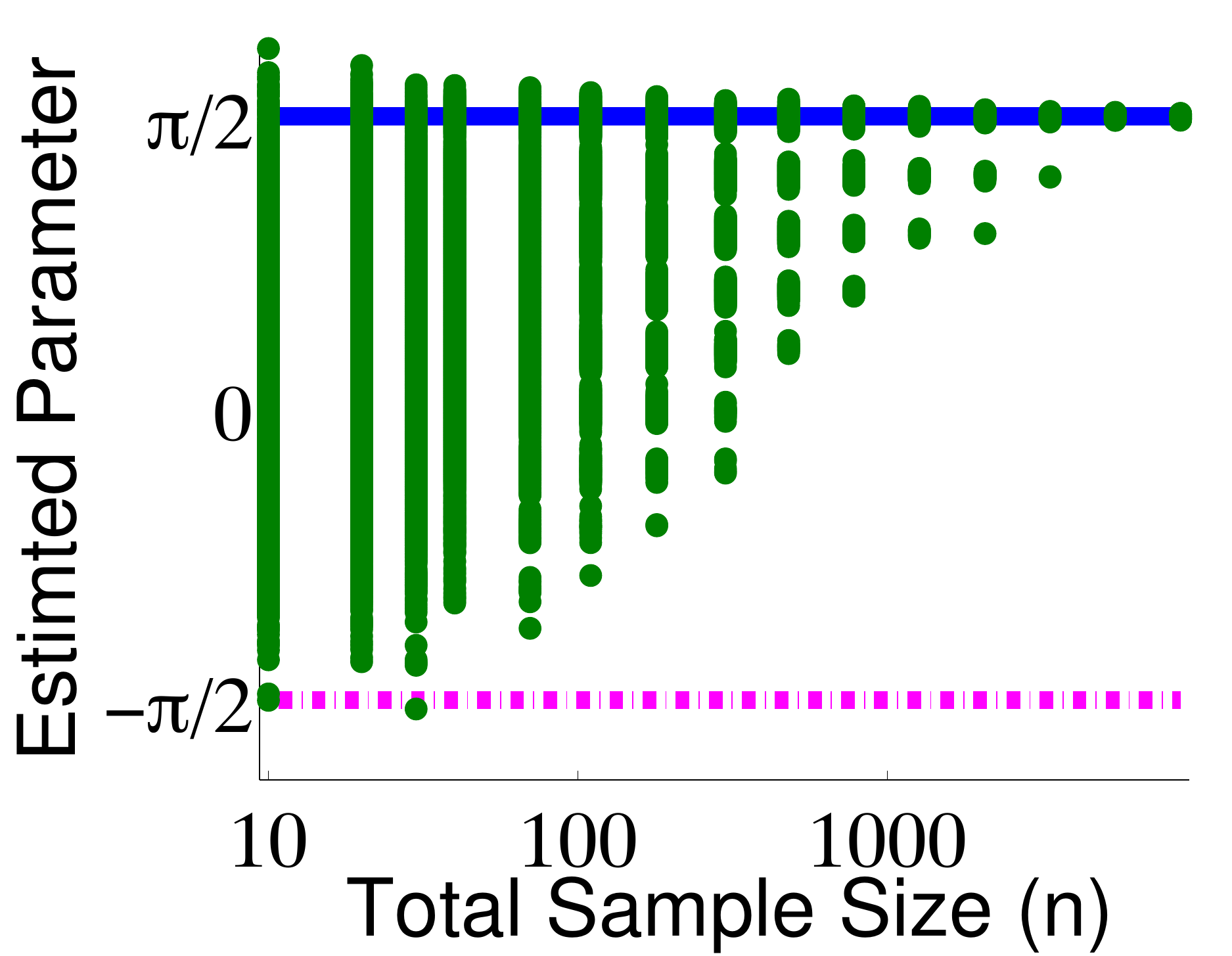}} 	
		\hspace{-1.5cm}\raisebox{.9cm}{\includegraphics[width=.1\textwidth]{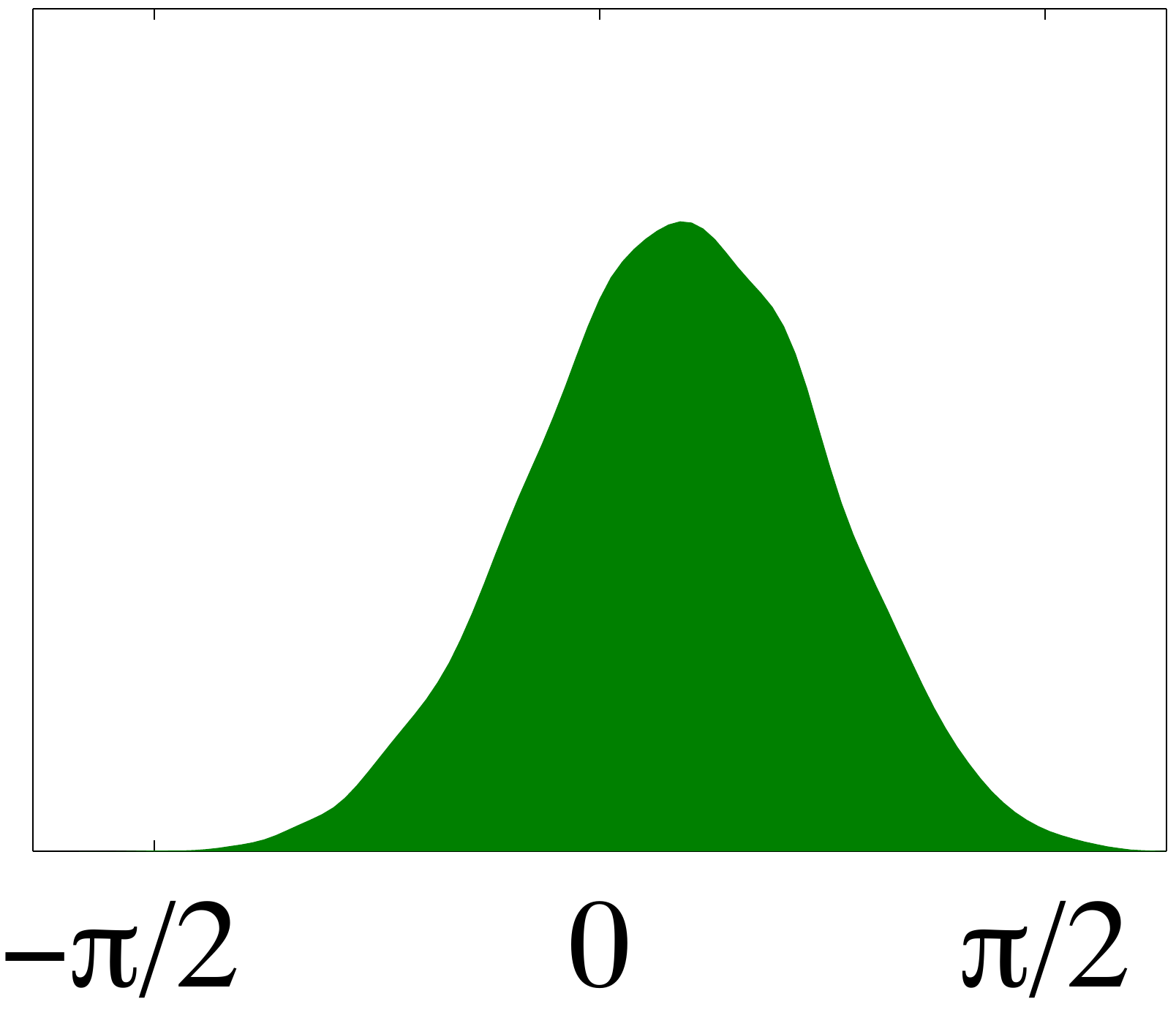}}	\\
	{\small (a). Global MLE $\thetamle$} & 	{\small (b). KL Average $\thetakl$} & 	{\small (c). Linear Average $\thetalinear$} 		
	\end{tabular}
\setlength{\unitlength}{5cm}	
\begin{picture}(0,0) 
\put(-2.18, 0.12){{\scriptsize $(n=10)$}}
\put(-1.26, 0.12){{\scriptsize $(n=10)$}}
\put(-.34, 0.12){{\scriptsize $(n=10)$}}
 \end{picture}
 }	
	\caption{Result on the toy model in Example~\ref{exa:ellipseexample} 
	with model misspecification: scatter plots of the estimated parameters vs. the total sample size $n$ (with 10,000 random trials for each fixed $n$). The inside figures are the densities of the estimated parameters with fixed $n=10$. Both global MLE and KL-average concentrate on two locations $(\pm\pi/2)$, and the less favorable $(-\pi/2)$ vanishes when the sample sizes are large (e.g., $n>250$). In contrast, the linear approach averages local MLEs from the two modes, giving
unreasonable estimates spread across the full interval.}%
	\label{fig:ellipsemisspecify}
\end{figure}

\subsection{Gaussian Mixture Models on Real Datasets}
We next consider learning Gaussian mixture models.
Because component indexes may be arbitrarily switched,
na{\"i}ve linear averaging is problematic; %
we consider a \emph{matched linear average} that first matches 
indices %
by minimizing the sum of the symmetric KL divergences of the different mixture components.  
The KL average is also difficult to calculate exactly, since the KL divergence between Gaussian 
mixtures 
is intractable. We approximate the KL average using Monte Carlo sampling 
(with 500 samples per local model), 
corresponding to the parametric bootstrap 
discussed in Section~\ref{sec:setting}.

We experiment on the MNIST dataset and the YearPredictionMSD dataset in the UCI repository,
where the training data is partitioned into 10 sub-groups randomly and evenly.  
In both cases, we use the original training/test split; we use the full testing set, and vary the number of training examples $n$ by randomly sub-sampling from the full training set (averaging over 100 trials). 
We take the first 100 principal components when using MNIST. 
Fig.~\ref{fig:mnist}(a)-(b) and \ref{fig:yearprediction}(a)-(b) show the training and test likelihoods. 
As a baseline, we also show the average of the log-likelihoods of the local models (marked as {\tt local MLEs} in the figures); this corresponds to randomly selecting a local model as the combined model. 
We see that the KL average tends to perform as well as the global MLE, and remains stable even with small sample sizes. 
The na{\"i}ve linear average performs badly even with large sample sizes. The matched linear average performs as badly as the na{\"i}ve linear average when the sample size is small, but improves towards to the global MLE as sample size increases. 

For MNIST, we also consider a severely heterogenous data partition by splitting the images into $10$ groups according to their digit labels. %
In this setup, each partition learns a local model only over its own digit, with no information about the other digits. 
Fig.~\ref{fig:mnist}(c)-(d) shows the KL average still performs as well as the global MLE, but both the na{\"i}ve and matched linear average are much worse even with large sample sizes, due to the dissimilarity in the local models.

\begin{figure}
	\centering
	\begin{tabular}{cccc}
	\hspace{-.05cm}\raisebox{0cm}{\includegraphics[width=.22\textwidth]{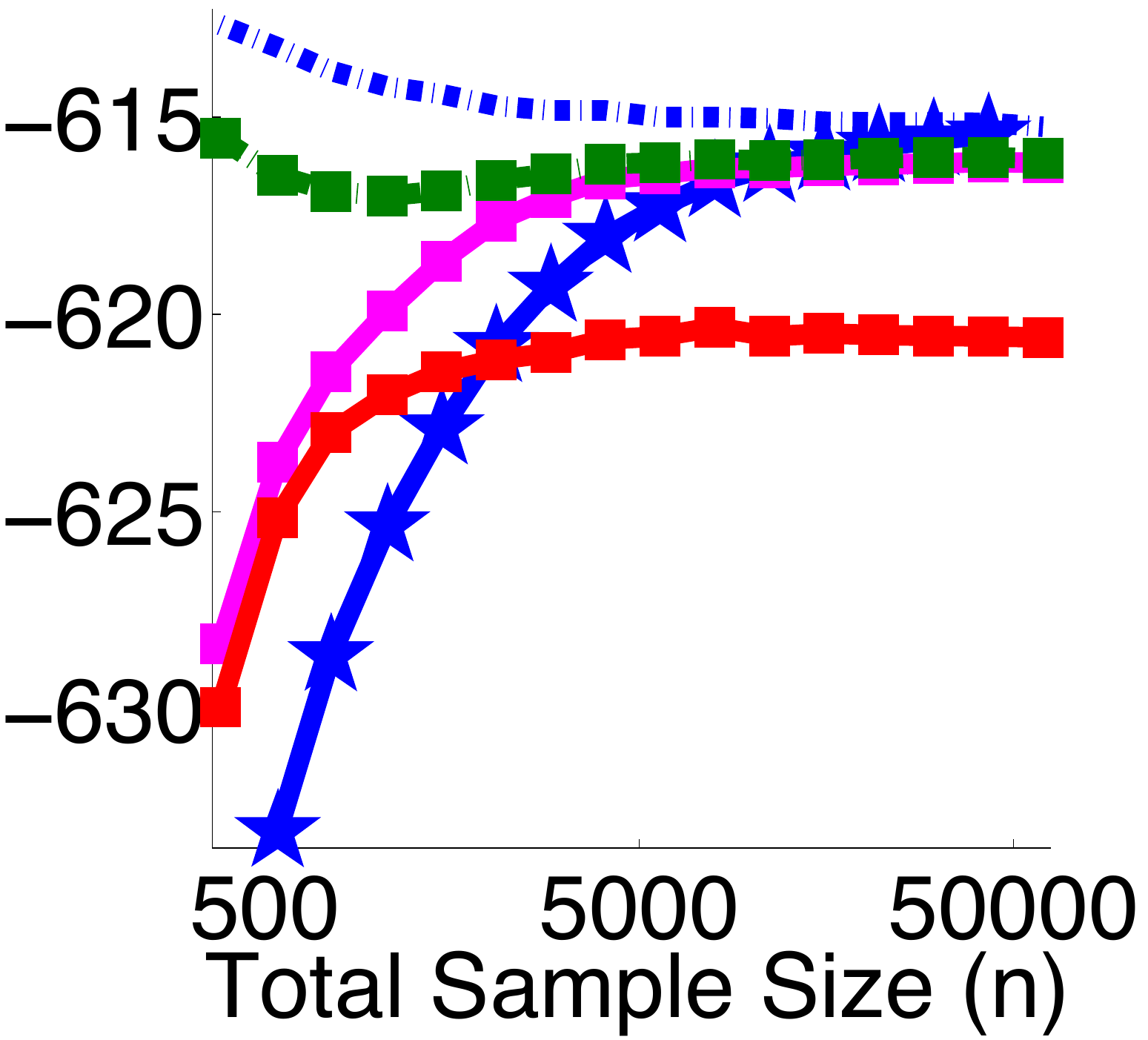}} 	&
	\hspace{-.15cm}\raisebox{0cm}{\includegraphics[width=.22\textwidth]{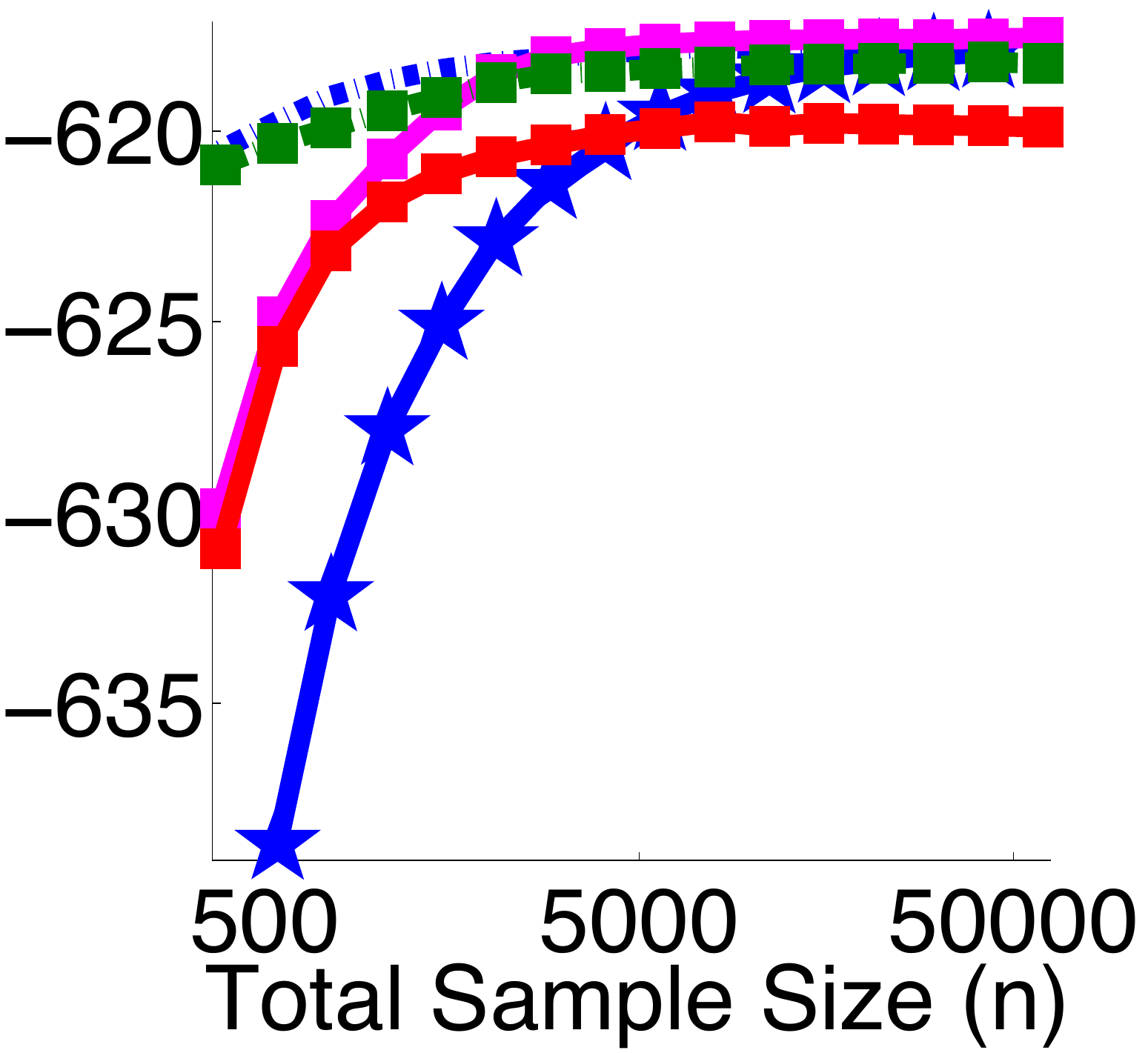}} 
	\hspace{-1.8cm}\raisebox{.6cm}{\includegraphics[width=.12\textwidth]{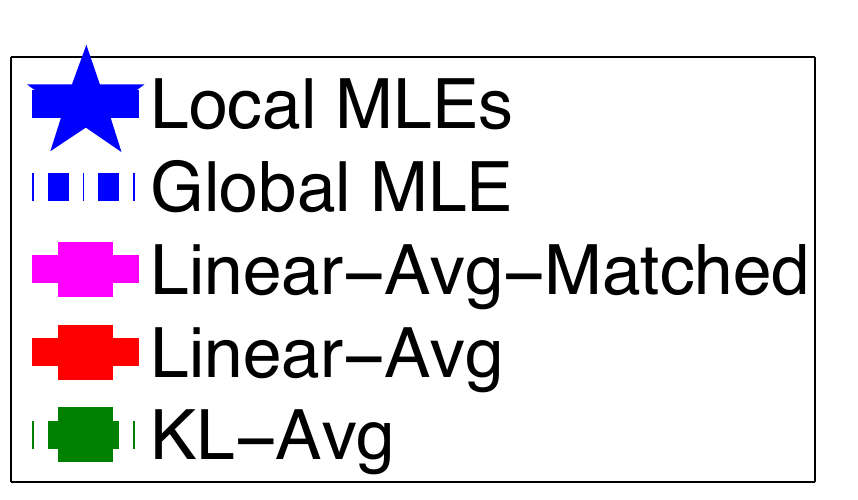}} 	
	\ & \ 
	\hspace{-.0cm}\raisebox{0cm}{\includegraphics[width=.22\textwidth]{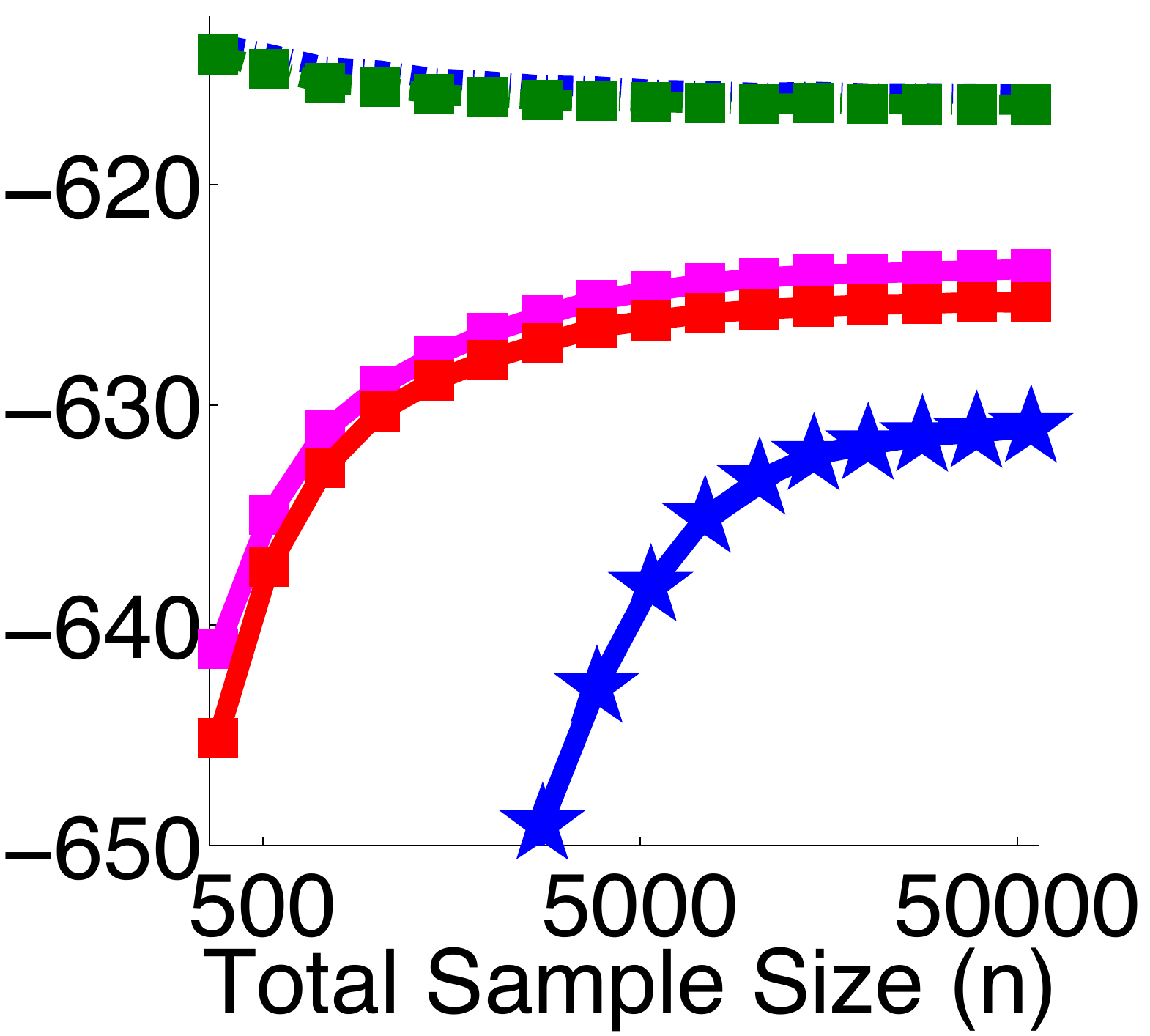}} 	&
	\hspace{-.2cm}\raisebox{0cm}{\includegraphics[width=.22\textwidth]{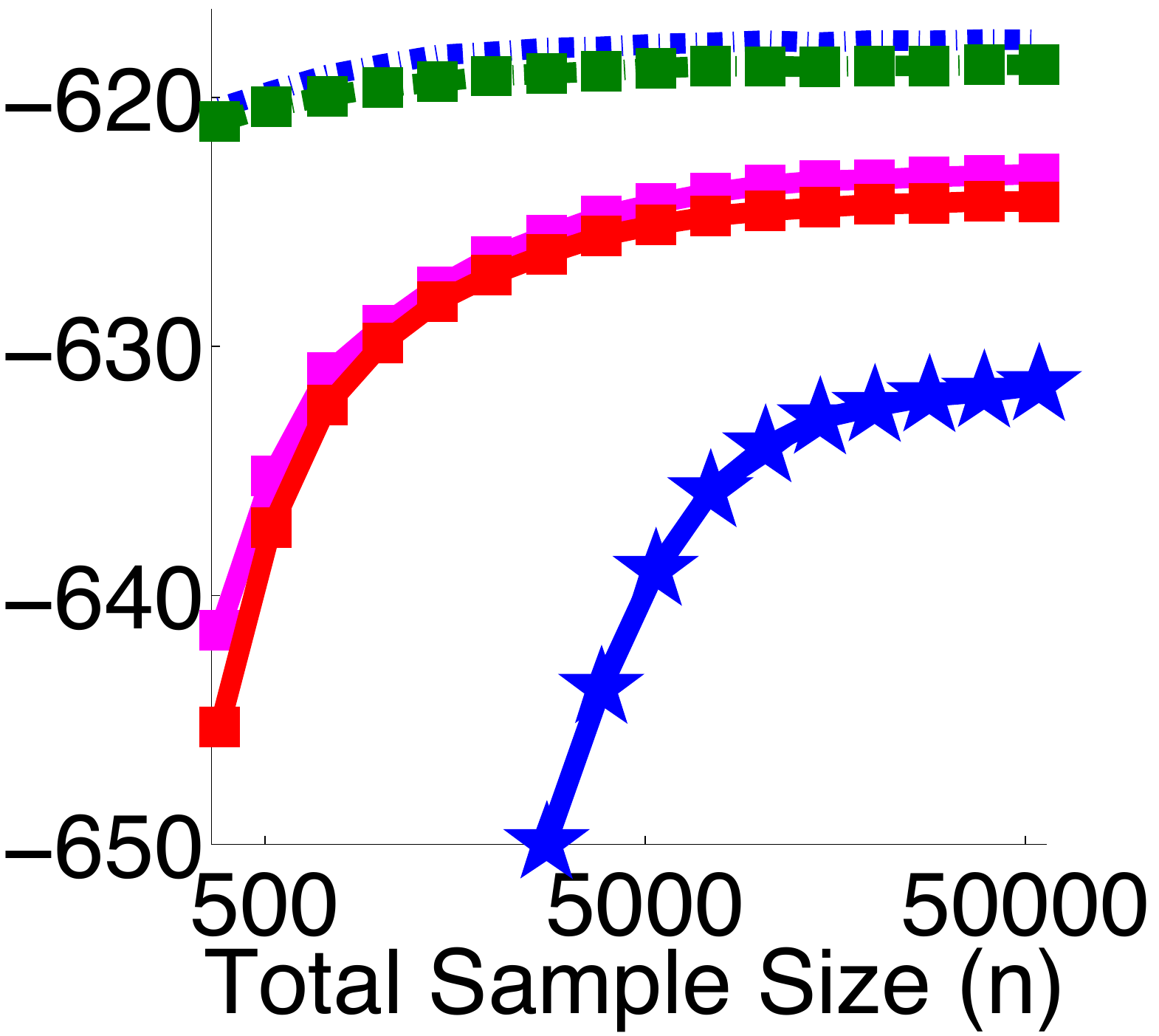}}
	\\[-.5em]
	 \makecell{\small (a) Training LL \\[-.25em] \small ~~(\emph{random partition})} & 
	\makecell{\small (b) Test LL \\[-.25em]      \small ~~~~(\emph{random partition})} & 
	\makecell{\small (c) Training LL \\[-.25em]  \small ~~~~~~~(\emph{label-wise partition})} & 
	\makecell{\small (d) Test LL \\[-.25em]      \small ~~~~~~~(\emph{label-wise partition})} 
	\end{tabular}\\[-.9em]
	\caption{Learning Gaussian mixture models on MNIST: training and test log-likelihoods of different methods with varying training size $n$.  In (a)-(b), the data are partitioned into 10 sub-groups uniformly at random (ensuring sub-samples are i.i.d.); in (c)-(d) the data are partitioned according to their digit labels. The number of mixture components is fixed to be 10. 
}
	\label{fig:mnist}
\end{figure}

\begin{figure} 
  \begin{minipage}[c]{0.5\textwidth}
  \begin{tabular}{cc}
	\hspace{0cm}\raisebox{0cm}{\includegraphics[width=.44\textwidth]{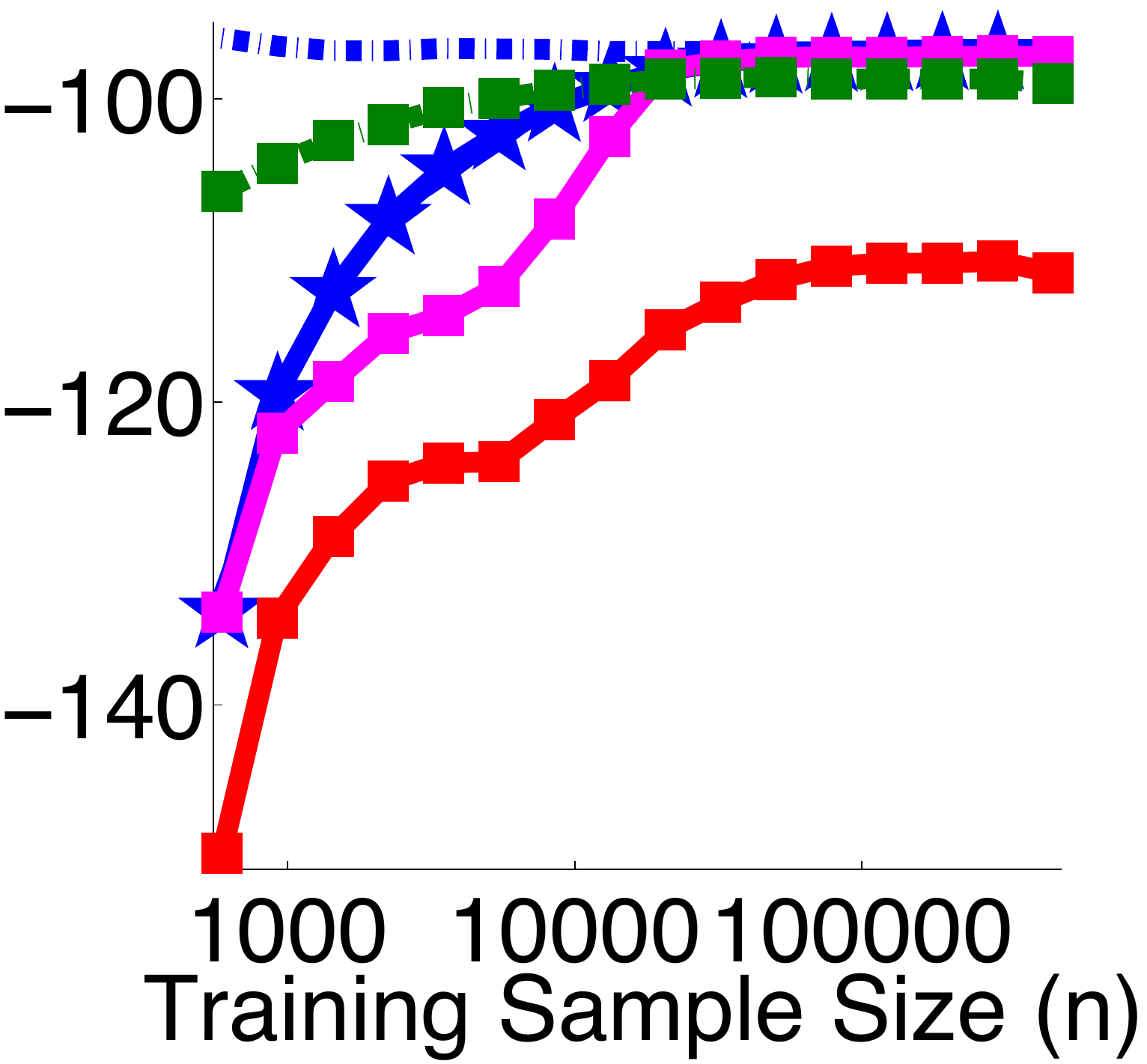}} 
	\hspace{-1.6cm}\raisebox{.5cm}{\includegraphics[width=.24\textwidth]{figures/Train_mnist_unifpartition_v1_legend}}%
			&
	\hspace{-0cm}\raisebox{0cm}{\includegraphics[width=.44\textwidth]{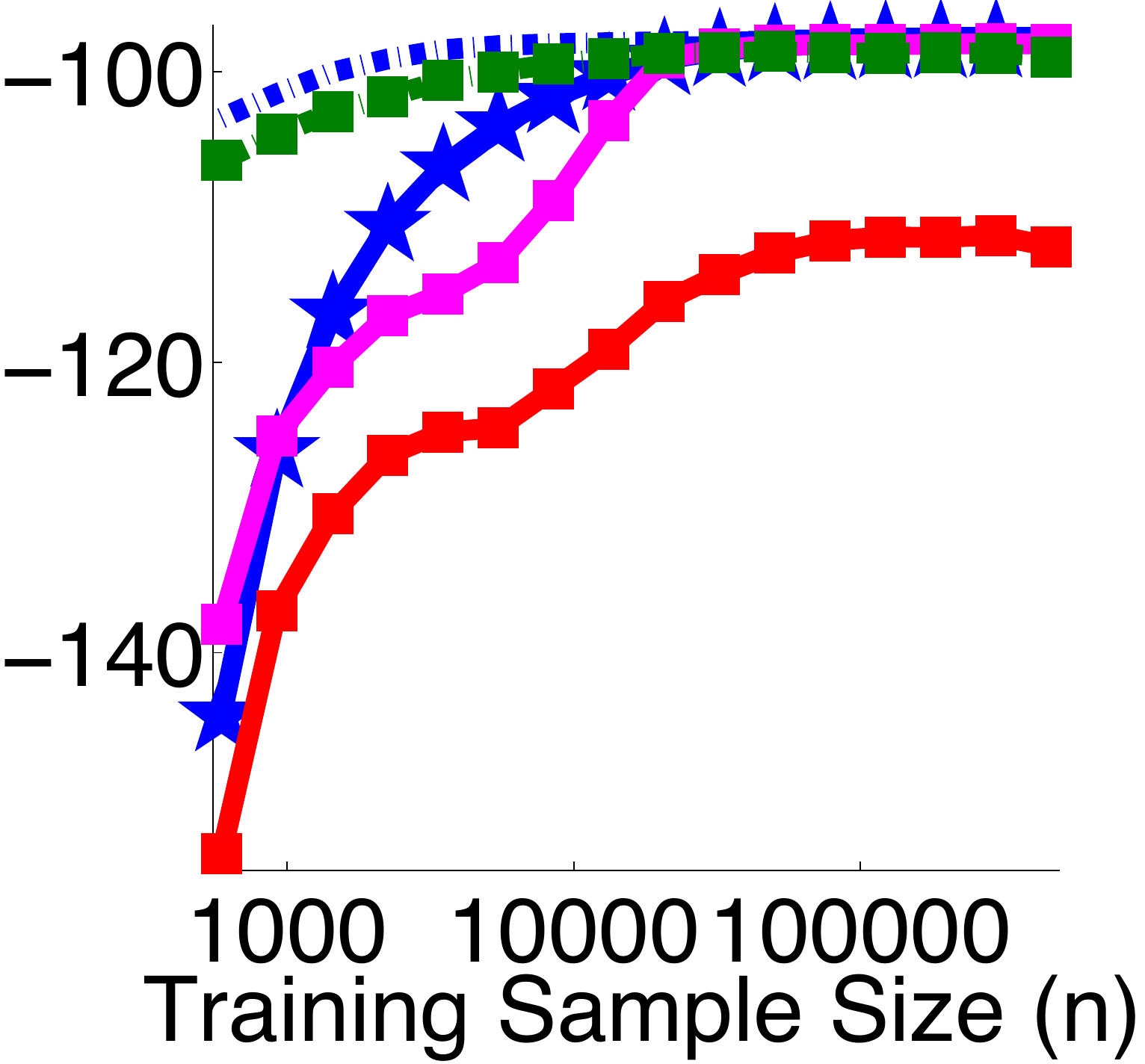}} 		
	\\[-.25em]
	 {\small (a) Training log-likelihood} & 
	{\small (b) Test log-likelihood} 	
	\end{tabular}
  \end{minipage}%
  \quad
 \raisebox{.5cm}{      
  \begin{minipage}[c]{0.45\textwidth}
    \caption{Learning Gaussian mixture models on the YearPredictionMSD data set. 
 The data are randomly partitioned into 10 sub-groups, and we use 10 mixture components. 
}
	\label{fig:yearprediction}
  \end{minipage}
  }\\[-.9em]
\end{figure}

\vspace{-.25\baselineskip}
\section{Conclusion and Future Directions}
\vspace{-.25\baselineskip}
We study communication-efficient algorithms for learning generative models with distributed data. 
Analyzing both a common linear averaging technique and a less common KL-averaging technique provides both  
theoretical and empirical insights. Our analysis opens many important future directions, including 
extensions to high dimensional inference and efficient approximations for complex machine learning models, 
such as LDA and neural networks. %

\vspace{-.5\baselineskip}
\paragraph{Acknowledgement.} This work supported in part by NSF grants IIS-1065618 and IIS-1254071.

\clearpage \newpage

\include{for_include_appendix}

\bibliographystyle{unsrtnat}
\bibliography{bib_nips14_distributed_KL_clean}

\appendix 
This document contains proofs and other supplemental information for the NIPS 2014 submission, ``Distributed Estimation, Information Loss and Curved Exponential Families".  

\section{Curved Exponential Families}

\paragraph{Notation.}
\label{(Differential).}
Denote by $\dimtheta$ the dimension of $\theta$, and $\dimeta$ the dimension of $\eta$ and $\phi(x)$. 
We use the following notations for derivatives, 
\begin{align*}
\dot\eta^i_j(\theta) = \myp{\eta^i(\theta)}{\theta_j} ~~~~ \text{and} ~~~~ \ddot\eta^i_{jk}(\theta) = \myp{^2\eta^i(\theta)}{\theta_{j}\partial\theta_k}. 
 \end{align*}
We write $\dot\eta_{\theta}  = [\dot\eta^i_j(\theta)]_{ij}$and $ \ddot\eta_i(\theta) = [\ddot\eta^j_{ik}(\theta) ]_{jk}$,  both of which are ($\dimeta\times\dimtheta$) matrices. 
\label{(Expectation).}
Denote by $\E_{\theta}$ the expectation under $p(x | \theta)$, and $\E_{X}$ the empirical average under sample $X = \{ x^i\}_{i=1}^n$, e.g., $\E_{X}\log p(x | \theta)  = \frac{1}{n} \sum_i \log p(x^i | \theta)$. 
\label{(Fisher Information).} 
Denote by $I_{\theta}$ the $(\dimtheta \times \dimtheta)$ Fisher information matrix of $p(x | \theta)$, that is, $I_{\theta} = - \E_\theta ({\partial^2 \log p(x | \theta)}/{\partial^2\theta})$. 
Define $\Sigma_{\theta} = \cov_{\theta}(\phi)$, the $(\dimeta \times \dimeta)$ Fisher information matrix of the full exponential family $p(x  | \eta) = \exp(\eta^T \phi(x) - \log Z(\eta))$; one can show that $I_{\theta} = \dot\eta_{\theta}^T \Sigma_{\theta} \dot\eta_{\theta}$.
We use $\identitymatrix{m}$ to denote the ($m\times m$ ) identity matrix to distinguish from the Fisher information $I$. 
\label{(Moments).}
We denote by $\moment(\theta) = \E_{\theta}[\phi(x)] $ the mean parameter. Note that $\moment(\theta)$ is a differentiable function of $\theta$, with $\dot\moment(\theta) = \Sigma_\theta \dot\eta_{\theta}$. For brevity, we write $\mu_{\theta} = \mu(\theta)$, and $\momentmle = \moment_{\thetamle}$, and $\momentX = \E_X[\phi(x)] = \frac{1}{n}\sum_i \phi(x^i)$. 
\label{(True  Para).}
We always denote the true parameter by $\thetatrue$, and use the subscript (or superscript) $``*"$ to denote the cases when $\theta = \thetatrue$, e.g.,  $\momenttrue = \moment_{\thetatrue}$, and
$\Sigma_* = \Sigma_{\thetatrue}$. %

\label{$\Op$ notation}
We use the big O in probability notation. For a set of random variables $X_n$ and constants $r_n$, the notation $X_n = \op(r_n)$ means that $X_n / r_n$ converges to zero in probability, and correspondingly, the notation $X_n = \Op(r_n)$ means that $X_n / r_n$ is bounded in probability. For our purpose, note that $X_n = \Op(r_n)$ (or $\op(r_n)$) implies that $\E(X_n^{\alpha}) = \OO(r_n^{\alpha})$ (or $\oo(r_n^{\alpha})$), $\alpha = 1, 2$. 
We assume the MLE exists and is strongly consistent, and will ignore most of the technical conditions of asymptotic convergence in our proof; see e.g., \citet{barndorff1978information, ghosh1994higher, kass2011geometrical} for complete treatments.  

We start with some basic properties of the curved exponential families. %

\begin{lem}
\label{lem:derivatives}
For Curved exponential family $p(x | \theta)  = \exp(\eta(\theta)^T \phi(x) - \Phi(\eta(\theta)))$, we have
\begin{align*}
\myp{\log p(x  | \theta)}{\theta}   &  =  \dot{\eta}(\theta)^T (\phi(x) - \E_{\theta}(\phi(x))),  \\
\myp{^2\log p(x | \theta)}{\theta_i\theta_j}  
 &=  \ddot{\eta}_{ij}(\theta)^T [ \phi(x) - \E_{\theta}(\phi(x)) ]  - \dot{\eta}_i(\theta)^T ~\Sigma_{\theta}  ~ \dot{\eta}_j(\theta),  
\end{align*}
where $\ddot\eta_{jk}(\theta) = [\ddot\eta^i_{jk}(\theta)]_i$ is a $\dimeta \times 1$ vector. 
\end{lem}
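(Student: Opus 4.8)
The plan is to reduce both identities to the standard moment identities for the log-partition function of the ambient full exponential family, and then apply the chain rule through $\eta(\theta)$. Write $\log p(x|\theta) = \eta(\theta)^T\phi(x) - \Phi(\eta(\theta))$, and recall that for the full family $p(x|\eta) = \exp(\eta^T\phi(x) - \Phi(\eta))$ one has $\partial\Phi(\eta)/\partial\eta = \E_\eta[\phi(x)]$ and $\partial^2\Phi(\eta)/\partial\eta\partial\eta^T = \cov_\eta[\phi(x)]$. These are the only external facts needed.

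For the first-order identity, differentiate coordinate-wise: $\partial\log p(x|\theta)/\partial\theta_j = \sum_k \dot\eta^k_j(\theta)\phi_k(x) - \sum_k \big(\partial\Phi/\partial\eta_k\big)\,\dot\eta^k_j(\theta)$. Substituting $\partial\Phi/\partial\eta_k = \E_\theta[\phi_k(x)]$ and collecting the $j$ coordinates into matrix form gives exactly $\dot\eta(\theta)^T(\phi(x) - \E_\theta[\phi(x)])$, the first claimed identity.

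For the second-order identity, differentiate the $j$-th score component once more with respect to $\theta_i$. The term $\sum_k \dot\eta^k_j(\theta)\phi_k(x)$ contributes $\sum_k \ddot\eta^k_{ij}(\theta)\phi_k(x) = \ddot\eta_{ij}(\theta)^T\phi(x)$, since $x$ does not depend on $\theta$. Differentiating the subtracted term $\sum_k \dot\eta^k_j(\theta)\,\E_\theta[\phi_k(x)]$ by the product rule gives $\ddot\eta_{ij}(\theta)^T\E_\theta[\phi(x)]$ plus $\sum_k \dot\eta^k_j(\theta)\,\partial_{\theta_i}\E_\theta[\phi_k(x)]$; and the chain rule gives $\partial_{\theta_i}\E_\theta[\phi_k(x)] = \partial_{\theta_i}\big(\partial\Phi(\eta(\theta))/\partial\eta_k\big) = \sum_l (\Sigma_\theta)_{kl}\,\dot\eta^l_i(\theta)$, whence $\sum_k \dot\eta^k_j(\theta)\,\partial_{\theta_i}\E_\theta[\phi_k(x)] = \dot\eta_j(\theta)^T\Sigma_\theta\,\dot\eta_i(\theta)$. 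Assembling the pieces yields $\ddot\eta_{ij}(\theta)^T(\phi(x)-\E_\theta[\phi(x)]) - \dot\eta_i(\theta)^T\Sigma_\theta\,\dot\eta_j(\theta)$, as claimed; taking $\E_\theta$ of this recovers the identity $I_\theta = \dot\eta_\theta^T\Sigma_\theta\dot\eta_\theta$ recorded in the notation section, a useful consistency check.

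There is no genuine obstacle: the computation is mechanical once the log-partition moment identities are in hand. The only points needing care are the index bookkeeping — keeping straight $\dot\eta_\theta$ (an $\dimeta\times\dimtheta$ matrix), its columns $\dot\eta_i(\theta)$ (size $\dimeta\times 1$), and the vectors $\ddot\eta_{ij}(\theta)$ (size $\dimeta\times 1$) — and the repeated passage through $\eta(\theta)$. The infinite differentiability of $\eta(\cdot)$ assumed in Definition~\ref{def:curved}, together with the usual exponential-family regularity permitting differentiation under the expectation, makes every step valid.
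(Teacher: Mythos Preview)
Your proof is correct and is exactly the standard chain-rule computation one would expect; the paper itself states this lemma without proof, treating it as an elementary property, so your derivation simply fills in what the paper omits.
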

The following higher order asymptotic properties of MLE play an important role in our proof. 
\begin{lem}[First and Second Order Asymptotics of MLE]
\label{lem:higherMLE}
Denote by $\hat s_X =  \E_X[\dot \ell(\thetatrue; x)]  = \dot\eta_*^T ( \momentX - \momenttrue )$. We have the first and second order expansions of MLE, respectively, 
\begin{align*}
&I_*(\thetamle - \thetatrue) =  \hat s_X + \Op(n^{-1}),  \\
&[I_*(\thetamle - \thetatrue) - \hat s_X]_i   = 
(\momentX - \momenttrue)^T L_i (\momentX - \momenttrue)   + \Op(n^{-3/2}). 
\end{align*}
where 
\begin{align*}
L_i = \frac{1}{2}(\ddot\eta_i^* I_*^{-1} \dot\eta_*^T + \dot\eta_* I_*^{-1} (\ddot\eta_i^*)^T )  +   \frac{1}{2}  \dot\eta_*  I_*^{-1} J_i I_*^{-1} \dot\eta_*^T,  
\end{align*}
and $J_i$ is a $\dimtheta\times\dimtheta$ matrix whose $(m, l)$-element is 
$ \E_{\thetatrue} \left[  \myp{\log p(x |\thetatrue )}{\theta_i\partial\theta_m \partial \theta_l }  \right] $.  %
\end{lem}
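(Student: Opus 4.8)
The plan is to expand the maximum-likelihood estimating equation $\E_X[\dot\ell(\thetamle;x)]=0$ in a Taylor series in $\theta$ about $\thetatrue$ and to solve for $\hat\Delta \equiv \thetamle - \thetatrue$ by successive substitution — first to order $\Op(n^{-1/2})$, then to order $\Op(n^{-1})$. Componentwise, for each $i$,
\begin{align*}
0 = \hat s_{X,i} + \sum_j \E_X\!\big[\partial^2_{\theta_i\theta_j}\ell(\thetatrue;x)\big]\,\hat\Delta_j + \tfrac12\sum_{j,k} \E_X\!\big[\partial^3_{\theta_i\theta_j\theta_k}\ell(\thetatrue;x)\big]\,\hat\Delta_j\hat\Delta_k + \Op(\|\hat\Delta\|^3),
\end{align*}
where the cubic remainder is controlled by boundedness in probability of the empirical average of the fourth derivative near $\thetatrue$. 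I would take the explicit forms of $\dot\ell$ and $\ddot\ell$ from Lemma~\ref{lem:derivatives} and differentiate once more for $\partial^3\ell$; the Central Limit Theorem then gives $\momentX - \momenttrue = \Op(n^{-1/2})$, hence $\hat s_X = \dot\eta_*^T(\momentX - \momenttrue) = \Op(n^{-1/2})$, and gives that any empirical average $\E_X[g(x)]$ of a fixed function equals $\E_{\thetatrue}[g(x)] + \Op(n^{-1/2})$.

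First I would record the crude bound: under the assumed strong consistency and the stated regularity, the leading terms yield $-I_*\hat\Delta = -\hat s_X + \op(\hat\Delta)$, so $\hat\Delta = I_*^{-1}\hat s_X + \op(n^{-1/2}) = \Op(n^{-1/2})$. The refinement rests on one structural observation: by Lemma~\ref{lem:derivatives} the second derivative is \emph{exactly} affine in $\phi(x)$, so
\begin{align*}
\E_X\!\big[\partial^2_{\theta_i\theta_j}\ell(\thetatrue;x)\big] = \ddot\eta_{ij}^{*T}(\momentX - \momenttrue) - [I_*]_{ij}, \qquad I_* = \dot\eta_*^T\Sigma_*\dot\eta_*,
\end{align*}
with no $\Op(n^{-1/2})$ slack. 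The fluctuation $\ddot\eta_{ij}^{*T}(\momentX - \momenttrue) = \Op(n^{-1/2})$, multiplied by $\hat\Delta_j = \Op(n^{-1/2})$, contributes at order $n^{-1}$ and must be kept; by contrast the third-derivative average may be replaced by its expectation $[J_i]_{jk}$ at a cost of $\Op(n^{-1/2})\cdot\Op(n^{-1}) = \Op(n^{-3/2})$, which is negligible. Moving the deterministic Hessian part to the left gives
\begin{align*}
[I_*\hat\Delta]_i = \hat s_{X,i} + \sum_j \ddot\eta_{ij}^{*T}(\momentX - \momenttrue)\,\hat\Delta_j + \tfrac12\,\hat\Delta^T J_i \hat\Delta + \Op(n^{-3/2}).
\end{align*}

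Into the two quadratic terms I would substitute the first-order expansion $\hat\Delta = I_*^{-1}\dot\eta_*^T(\momentX - \momenttrue) + \Op(n^{-1})$ (the substitution error is again $\Op(n^{-3/2})$), collect using the column structure $\ddot\eta_i^* = [\ddot\eta_{i1}^*,\ldots,\ddot\eta_{i\dimtheta}^*]$ so that, writing $u = \momentX - \momenttrue$, $\sum_j (\ddot\eta_{ij}^{*T}u)(I_*^{-1}\dot\eta_*^T u)_j = u^T \ddot\eta_i^* I_*^{-1}\dot\eta_*^T u$, and symmetrize this scalar quadratic form via $u^T A u = u^T\big((A+A^T)/2\big)u$. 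This produces exactly $[I_*\hat\Delta - \hat s_X]_i = (\momentX - \momenttrue)^T L_i (\momentX - \momenttrue) + \Op(n^{-3/2})$ with $L_i$ as stated — the symmetrization is what turns $\ddot\eta_i^* I_*^{-1}\dot\eta_*^T$ into the two-term first summand, using $I_*^T = I_*$ — and the first-order claim $I_*(\thetamle - \thetatrue) = \hat s_X + \Op(n^{-1})$ then follows a fortiori.

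I expect the main obstacle to be not any single estimate but the bookkeeping of which fluctuations survive at which order — in particular recognising that the randomness of $\E_X[\ddot\ell(\thetatrue)]$ must be retained (and, crucially, is exactly linear in $\phi$), whereas the randomness of $\E_X[\partial^3\ell(\thetatrue)]$ and the Taylor remainder are both $\Op(n^{-3/2})$ and can be dropped — together with the multi-index algebra that folds $\sum_j \ddot\eta_{ij}^{*T}(\cdot)\hat\Delta_j$ and $\hat\Delta^T J_i\hat\Delta$ into the closed form $L_i$. The usual regularity (finiteness of the needed moments of $\phi$, differentiation under the integral sign, uniform control of fourth derivatives near $\thetatrue$) is assumed throughout, as flagged in the notation preamble.
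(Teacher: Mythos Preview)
Your argument is correct and carefully tracks the orders: the key step --- that in a curved exponential family the empirical Hessian is \emph{exactly} $\ddot\eta_{ij}^{*T}(\momentX-\momenttrue)-[I_*]_{ij}$, so its $\Op(n^{-1/2})$ fluctuation must be retained while the third-derivative fluctuation can be dropped --- is the right observation, and your collapse of the two quadratic pieces into $L_i$ via the column structure $\ddot\eta_i^*=[\ddot\eta_{i1}^*,\ldots,\ddot\eta_{i\dimtheta}^*]$ and symmetrization is correct.

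As for comparison with the paper: the paper does not give its own proof of this lemma at all --- it simply cites \citet{ghosh1994higher}. Your derivation is essentially the standard Taylor-expansion-and-substitution argument that underlies such higher-order results (and is what one finds in Ghosh), specialized to the curved exponential family so that the affine structure of $\ddot\ell$ in $\phi$ makes the bookkeeping clean. So you have supplied a self-contained proof where the paper outsources it; nothing to correct.
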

\begin{proof}
See \citet{ghosh1994higher}.   
\end{proof}

\renewcommand\windowpagestuff{\leavevmode\put(0,-50){\includegraphics[width=3cm]{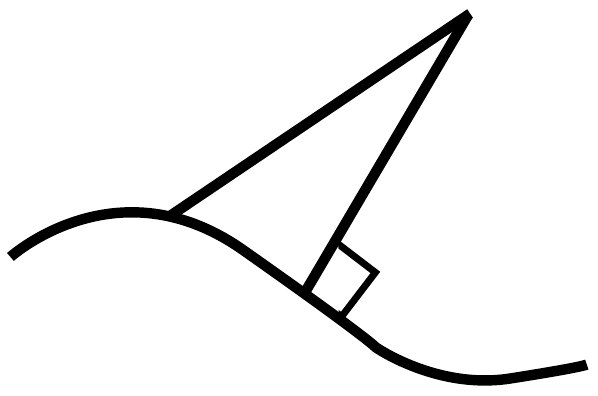}}}
\opencutright

\begin{lem}
\label{lem:projection}
Let $\momentX= \E_X [\phi(x)] = \frac{1}{n}\sum_{i=1}^n \phi(x^i)$ be the empirical mean parameter on an i.i.d. sample $X$ of size $n$, and $\momentmle=\E_{\thetamle}[\phi(x)]$ the mean parameter corresponding to the maximum likelihood estimator $\thetamle$ on $X$, we have 
\begin{align*}
\momentmle - \momenttrue   = P_* (\momentX -\momenttrue) + \Op(n^{-1}), 
&&\text{where}&&
P_* = \Sigma_{*} \dot\eta_* ( \dot\eta_*^T \Sigma_* \dot \eta)^{-1}  \dot \eta_*^T , 
\end{align*}
where the $(\dimeta\times \dimeta)$ matrix $P_{*}$ can be treated as the projection operator onto the tangent space of $\mu(\theta)$ at $\thetatrue$ in the $\mu-$space (w.r.t. an inner product defined by $\Sigma_*^{-1}$).  
See \citet{efron1975defining} for more illustration on the geometric intuition. 

Further, let $N_* = \identitymatrix{\dimeta} - P_*$, then $N_*$ 
calculates the component of a vector that is normal to the tangent space; it is easy to verify from the definition that 
\begin{align*} 
N_*^T = \Sigma_*^{-1} N_* \Sigma_* ,&&
N_*^T  \dot \eta_*  = N_*\Sigma_*\dot\eta_* = 0. 
\end{align*}
\end{lem}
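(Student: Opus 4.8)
The plan is to obtain the expansion by composing a first-order Taylor expansion (the delta method) for the smooth moment map $\mu(\cdot)$ with the first-order asymptotics of the MLE from Lemma~\ref{lem:higherMLE}, and then to derive the identities for $N_*$ by elementary linear algebra.

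First I would write $\momentmle = \mu(\thetamle)$ and Taylor-expand about $\thetatrue$. Since $\mu$ is infinitely differentiable by Definition~\ref{def:curved} and $\thetamle - \thetatrue = \Op(n^{-1/2})$ by consistency and asymptotic normality of the MLE, the second-order remainder (controlled by the locally bounded $\ddot\mu$) is $\Op(n^{-1})$, so $\momentmle - \momenttrue = \dot\mu_*(\thetamle - \thetatrue) + \Op(n^{-1})$. Using $\dot\mu_* = \Sigma_*\dot\eta_*$ and $I_* = \dot\eta_*^T\Sigma_*\dot\eta_*$ from the Notation paragraph, together with Lemma~\ref{lem:higherMLE} in the form $\thetamle - \thetatrue = I_*^{-1}\dot\eta_*^T(\momentX - \momenttrue) + \Op(n^{-1})$, I substitute to get
$$\momentmle - \momenttrue = \Sigma_*\dot\eta_* I_*^{-1}\dot\eta_*^T(\momentX - \momenttrue) + \Op(n^{-1}) = P_*(\momentX - \momenttrue) + \Op(n^{-1}),$$
since $\momentX - \momenttrue = \Op(n^{-1/2})$ keeps all the cross terms of order $\Op(n^{-1})$. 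A one-line computation gives $P_*^2 = P_*$, so $P_*$ is an (oblique) projection.

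For the identities on $N_* = \identitymatrix{\dimeta} - P_*$, I would argue algebraically. By symmetry of $\Sigma_*$, $P_*^T = \dot\eta_*(\dot\eta_*^T\Sigma_*\dot\eta_*)^{-1}\dot\eta_*^T\Sigma_* = \Sigma_*^{-1}P_*\Sigma_*$, hence $N_*^T = \identitymatrix{\dimeta} - P_*^T = \Sigma_*^{-1}(\identitymatrix{\dimeta} - P_*)\Sigma_* = \Sigma_*^{-1}N_*\Sigma_*$. Next, $P_*\Sigma_*\dot\eta_* = \Sigma_*\dot\eta_*(\dot\eta_*^T\Sigma_*\dot\eta_*)^{-1}(\dot\eta_*^T\Sigma_*\dot\eta_*) = \Sigma_*\dot\eta_*$, so $N_*\Sigma_*\dot\eta_* = (\identitymatrix{\dimeta} - P_*)\Sigma_*\dot\eta_* = 0$, and therefore $N_*^T\dot\eta_* = \Sigma_*^{-1}N_*\Sigma_*\dot\eta_* = 0$. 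Finally, $N_*$ and $P_*$ are self-adjoint with respect to the inner product $\langle u, v\rangle = u^T\Sigma_*^{-1}v$, which together with $P_*\Sigma_*\dot\eta_* = \Sigma_*\dot\eta_*$ and $N_*\Sigma_*\dot\eta_* = 0$ identifies $P_*$ as the orthogonal projection onto the tangent space $\mathrm{span}(\Sigma_*\dot\eta_*) = \mathrm{span}(\dot\mu_*)$ in that metric.

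The main obstacle is not conceptual but the careful accounting of error orders: I must justify that the Taylor remainder of $\mu$ is genuinely $\Op(n^{-1})$, and that premultiplying the $\Op(n^{-1})$ term from Lemma~\ref{lem:higherMLE} by the fixed matrix $\Sigma_*\dot\eta_*$, combined with $\momentX - \momenttrue = \Op(n^{-1/2})$, does not degrade the claimed $\Op(n^{-1})$ rate. These steps are routine under the regularity conditions assumed in the Notation paragraph (existence and strong consistency of the MLE, smoothness of $\eta$), but they should be stated explicitly.
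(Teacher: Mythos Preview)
Your proposal is correct and follows essentially the same route as the paper: Taylor-expand $\mu(\thetamle)$ about $\thetatrue$ using $\dot\mu_* = \Sigma_*\dot\eta_*$, then substitute the first-order MLE expansion $\thetamle - \thetatrue = I_*^{-1}\dot\eta_*^T(\momentX - \momenttrue) + \Op(n^{-1})$. Your explicit algebraic verification of the $N_*$ identities and the projection interpretation goes beyond what the paper writes out (it simply asserts these are ``easy to verify from the definition''), but the underlying argument is the same.
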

\begin{proof}
The first order asymptotic expansion of $\thetamle$ show that
$$
\thetamle - \thetatrue = I_{*}^{-1}    \dot\eta_*^T (\momentX - \momenttrue)  + \Op(n^{-1}), 
$$
where $I_{*} = \dot\eta_* ^T \Sigma_{*} \dot\eta_*$ is the Fisher information. 
Using Taylor expansion, we have
\begin{align*}
\momentmle - \momenttrue
& = \moment(\thetamle) - \moment(\thetatrue) \\
& =  \dotmomenttrue[ \thetamle - \thetatrue ]  + \Op(n^{-1})\\
& =    \Sigma_{*} \dot \eta_*  [ \thetamle - \thetatrue ] + \Op(n^{-1})\\ 
& =    \Sigma_{*} \dot \eta_* [I_{*}^{-1}    \dot\eta_*^T (\momentX - \momenttrue)] + \Op(n^{-1}) . 
\end{align*}
This finishes the proof. 
\end{proof}

\newcommand{\dovern}{n^{-1}}

\begin{lem}
\label{lem:k2mle}
Consider a sample $X$ of size $n$, evenly partitioned into $d$ subgroups $\{X^k \colon k \in [d]\}$. Let $\thetamle$ be the global MLE on the whole sample $X$ and $\thetak$ be the local MLE based on $X^k$. We have 
$$
\thetamle - \thetak =  (\dot\eta_*^T \Sigma_* \dot \eta_*)^{-1} \dot\eta_*^T (\momentX - \momentXk) + \Op(\dovern),
$$
where $\momentX$ and $\momentXk$are the empirical mean parameter on the whole sample $X$ and the subsample $X^k$, respectively; note that $\momentX$ is the mean of $\{\momentXk\}$, that is, $\momentX = d^{-1}\sum_k \momentXk$. 
\end{lem}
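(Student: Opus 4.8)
The plan is to derive the identity by subtracting the first-order asymptotic expansions of the global MLE $\thetamle$ and the local MLE $\thetak$ about the true parameter $\thetatrue$; everything we need is already contained in Lemma~\ref{lem:higherMLE} (equivalently, in the first-order expansion used inside the proof of Lemma~\ref{lem:projection}), applied once to the full sample and once to a subsample, together with the identity $I_* = \dot\eta_*^T\Sigma_*\dot\eta_*$ from Lemma~\ref{lem:derivatives}.

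First I would record, for an arbitrary i.i.d.\ sample $Y$ of size $|Y|$ with empirical mean parameter $\hat\mu_Y = \E_Y[\phi(x)]$, the consequence of Lemma~\ref{lem:higherMLE} and the identity $\hat s_Y = \dot\eta_*^T(\hat\mu_Y - \momenttrue)$:
\begin{align*}
\hat\theta_Y - \thetatrue = I_*^{-1}\dot\eta_*^T(\hat\mu_Y - \momenttrue) + \Op(|Y|^{-1}),
\end{align*}
where I have used that $I_*$ is a fixed nonsingular matrix, so $I_*^{-1}\Op(|Y|^{-1}) = \Op(|Y|^{-1})$. Applying this with $Y = X$ ($|Y| = n$) and with $Y = X^k$ ($|Y| = n/d$), and using that $d$ is a fixed constant so that $\Op((n/d)^{-1}) = \Op(n^{-1})$, gives
\begin{align*}
\thetamle - \thetatrue &= I_*^{-1}\dot\eta_*^T(\momentX - \momenttrue) + \Op(n^{-1}), \\
\thetak - \thetatrue &= I_*^{-1}\dot\eta_*^T(\momentXk - \momenttrue) + \Op(n^{-1}).
\end{align*}
Subtracting, the $\thetatrue$ and $\momenttrue$ terms cancel and I obtain $\thetamle - \thetak = I_*^{-1}\dot\eta_*^T(\momentX - \momentXk) + \Op(n^{-1})$, which is the claim once $I_*$ is written out as $\dot\eta_*^T\Sigma_*\dot\eta_*$; the concluding remark $\momentX = d^{-1}\sum_k\momentXk$ is immediate from the even partition and the definition of the empirical mean.

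The step requiring the most care — and the only real (though minor) obstacle — is the bookkeeping of the error orders. One must note that the remainder in the expansion of $\thetak$ is $\Op((n/d)^{-1})$, not $\Op(n^{-1/2})$, and that since $d$ does not grow with $n$ this remainder is $\Op(n^{-1})$, i.e.\ genuinely of smaller order than the leading term $I_*^{-1}\dot\eta_*^T(\momentX - \momentXk) = \Op(n^{-1/2})$, so the stated expansion is informative. One should also observe that the regularity and strong-consistency assumptions on the MLE transfer verbatim to each subsample $X^k$ (each being i.i.d.\ of size $n/d \to \infty$), so that Lemma~\ref{lem:higherMLE} is legitimately applicable there.
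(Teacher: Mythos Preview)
Your proposal is correct and follows essentially the same approach as the paper: apply the first-order MLE expansion $\hat\theta - \thetatrue = I_*^{-1}\dot\eta_*^T(\hat\mu - \momenttrue) + \Op(n^{-1})$ to the full sample and to the subsample, then subtract. Your extra care about the $\Op((n/d)^{-1}) = \Op(n^{-1})$ bookkeeping is, if anything, more explicit than the paper's own write-up.
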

\begin{proof}
By the standard asymptotic expansion of MLE, we have
\begin{align*}
 \thetamle  - \thetatrue &= I_*^{-1} \dot \eta_*^T(\momentX - \momenttrue) + \Op(\dovern), \\
 \thetak  - \thetatrue &= I_*^{-1} \dot \eta_*^T(\momentXk - \momenttrue) + \Op(\dovern), ~~~~\forall k \in [d]. 
\end{align*}
The result follows directly by combining the above equations. 
\end{proof}

We now give the general definition of the statistical curvature for vector parameters.   
\begin{defn}[Statistical Curvature for Vector Parameters]
\label{def:curvaturegeneral}
Consider a curved exponential family $\mathcal{P} = \{p(x | \theta) ~\colon~ \theta \in \Theta\}$. 
Let $\Lambda$ be a $\dimtheta \times \dimtheta$ matrix whose elements are 
$$
\Lambda_{ij} = \trace( I_*^{-1} (\ddot\eta_i^*)^T N_* \Sigma_* N_*^T \ddot\eta_j^*), 
$$
where $N_* = \identitymatrix{\dimeta} -   \Sigma_{*} \dot\eta_* ( \dot\eta_*^T \Sigma_* \dot \eta)^{-1}  \dot \eta_*^T$, as defined in Lemma~\ref{lem:projection}.  
Then the statistical curvature of $\mathcal{P}$ at $\thetatrue$ is defined as 
$$
\gamma_*^2 = \trace(\Lambda I_*^{-1}). 
$$
See \citet{kass2011geometrical} for the equivalent definition with a different notation. 
\end{defn}

\section{KL-Divergence Based Combination}
We first study the KL average $\thetakl$. The following Theorem extends Theorem~\ref{thm:main}~(2) in the main paper to general vector parameters. 
\begin{thm}
\label{thm:KL_multivariate}
(1). For the curved exponential family, we have as $n\to +\infty$, 
$$
n [I_* (\thetakl - \thetamle)]_i  \dto \trace(G_i W), 
$$
where $[\cdot]_i$ denotes the $i$-th element, and $G_i$ is a $\dimeta\times\dimeta$ deterministic matrix, and $W$ is a random matrix with a Wishart distribution,  
\begin{align*}
G_i = \frac{1}{2} ({G_{i0}} + {G_{i0}^{T}}), &&
G_{i0} =  \dot\eta_* I_*^{-1}(\ddot\eta_i^*)^T N_* , && %
\text{and~~~~~~~}W  \sim \Wishart(\Sigma_*, d-1). 
\end{align*}
Here $N_*$ is defined in Lemma~\ref{lem:projection}. 

(2).  Further, we have as $n\to+\infty$, 
\begin{align*}
& n \E_{\thetatrue}[\thetakl - \thetamle] \to  0,  \\
& n^2 \E_{\thetatrue}[(\thetakl - \thetamle)(\thetakl - \thetamle)^T] \to (d-1) I_*^{-1}  \Lambda I_*^{-1}, 
\end{align*}
where $\Lambda$ is defined in Definition~\ref{def:curvaturegeneral}. 
Note that $\gamma_*^2 = \trace(\Lambda I_*^{-1}) $; this gives %
$$  n^2 \E[ || I_*^{1/2} (\thetakl - \thetamle) ||^2] \to (d-1) \gamma_*^2. $$
\end{thm}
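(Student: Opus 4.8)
The plan is to characterize $\thetakl-\thetamle$ asymptotically by comparing the first-order conditions of the two estimators, then push the expansion to the next order where the curvature term appears. First I would write down the stationarity condition for $\thetakl$. Since $\thetakl=\argmin_\theta\sum_k\KL(p(x|\thetak)\,\|\,p(x|\theta))$, differentiating gives $\sum_k\dot\eta(\thetakl)^T(\mu(\thetakl)-\mu(\thetak))=0$, i.e. $\dot\eta(\thetakl)^T\big(\mu(\thetakl)-\bar\mu\big)=0$ where $\bar\mu=d^{-1}\sum_k\mu(\thetak)$. Meanwhile the global MLE satisfies $\dot\eta(\thetamle)^T(\mu(\thetamle)-\momentX)=0$. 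So both $\mu(\thetakl)$ and $\momentmle=\mu(\thetamle)$ are ``projections'' (w.r.t.\ the curved geometry) of two nearby points, $\bar\mu$ and $\momentX$, onto the manifold $\mu(\Theta)$. The key quantity is therefore $\bar\mu-\momentX=d^{-1}\sum_k(\mu(\thetak)-\momentXk)$, since $\momentX=d^{-1}\sum_k\momentXk$; by Lemma~\ref{lem:projection} applied to each subsample, $\mu(\thetak)-\momentXk = -N_*(\momentXk-\momenttrue)+\Op((n/d)^{-1})$, so $\bar\mu-\momentX = -N_* d^{-1}\sum_k(\momentXk-\momenttrue)+\Op(n^{-1}) = \Op(n^{-1/2})$ only through the normal component; but crucially the leading $\Op(n^{-1/2})$ piece cancels because $d^{-1}\sum_k(\momentXk-\momenttrue)=\momentX-\momenttrue$ and we will see the relevant difference is of order $n^{-1}$.

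The main step is a careful second-order expansion. I would expand $\mu(\thetakl)-\mu(\thetamle)=\dotmomenttrue(\thetakl-\thetamle)+O(\|\cdot\|^2)=\Sigma_*\dot\eta_*(\thetakl-\thetamle)+\ldots$, and use the two stationarity conditions together with Taylor expansion of $\dot\eta(\theta)$ around $\thetatrue$ (bringing in $\ddot\eta^*_i$) to solve for $I_*(\thetakl-\thetamle)$ in terms of the differences $\momentXk-\momenttrue$. This is exactly parallel to the MLE-vs-submodel analysis in Lemma~\ref{lem:higherMLE} and Lemma~\ref{lem:k2mle}: the first-order terms match (so $\thetakl-\thetamle=\Op(1/n)$, not $\Op(1/\sqrt n)$), and the $O(1/n)$ remainder is a quadratic form in the normal components $N_*(\momentXk-\momenttrue)$. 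Collecting terms, I expect to obtain
\[
[I_*(\thetakl-\thetamle)]_i = \tfrac12\sum_k w_k^T\,(G_{i0}+G_{i0}^T)\,\Sigma_*^{1/2\,T}\cdots
\]
— more precisely, after centering, $n[I_*(\thetakl-\thetamle)]_i\dto\trace(G_iW)$ with $W=\sum_{k}(\xi_k-\bar\xi)(\xi_k-\bar\xi)^T$-type Wishart structure, $W\sim\Wishart(\Sigma_*,d-1)$, the $d-1$ arising because one degree of freedom is removed by the averaging $\bar\mu$. Identifying the deterministic matrix as $G_i=\tfrac12(G_{i0}+G_{i0}^T)$ with $G_{i0}=\dot\eta_*I_*^{-1}(\ddot\eta^*_i)^TN_*$ is then a bookkeeping exercise in which the projection identities $N_*^T\dot\eta_*=0$, $N_*^T=\Sigma_*^{-1}N_*\Sigma_*$ from Lemma~\ref{lem:projection} are used to kill the tangential contributions, leaving only the normal (curvature) part.

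For part (2) and the final trace identity, I would take expectations in the limit. Since $\E W=(d-1)\Sigma_*$ and $G_i$ is built from $N_*$ with $N_*\Sigma_*N_*^T=N_*\Sigma_*$ (another consequence of the Lemma~\ref{lem:projection} identities, as $N_*$ is a $\Sigma_*^{-1}$-orthogonal projection), the first moment $n\E_{\thetatrue}[\thetakl-\thetamle]\to 0$ follows because $\trace(G_i\,\Sigma_*)$ reduces to $\trace(\dot\eta_*I_*^{-1}(\ddot\eta^*_i)^TN_*\Sigma_*)$ and the cyclic property plus $N_*^T\dot\eta_*=0$ forces it to vanish (a ``first order efficiency'' cancellation; the surviving bias is $o(n^{-1})$). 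For the second moment, $n^2\E[(\thetakl-\thetamle)(\thetakl-\thetamle)^T]\to I_*^{-1}M I_*^{-1}$ where $M_{ij}=\E[\trace(G_iW)\trace(G_jW)]$; using the Wishart fourth-moment formula $\E[\trace(AW)\trace(BW)]=(d-1)\trace(A\Sigma_* B\Sigma_*) + (\text{terms})$ and again the projection identities to discard tangential pieces, this collapses to $(d-1)\,\trace(I_*^{-1}(\ddot\eta^*_i)^TN_*\Sigma_*N_*^T\ddot\eta^*_j\,\cdots)=(d-1)\Lambda_{ij}$, matching Definition~\ref{def:curvaturegeneral}. Contracting with $I_*^{1/2}$ on both sides gives $n^2\E[\|I_*^{1/2}(\thetakl-\thetamle)\|^2]\to(d-1)\trace(\Lambda I_*^{-1})=(d-1)\gamma_*^2$. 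The hard part will be the second-order expansion that isolates the correct quadratic form — keeping track of all $\ddot\eta$ terms from both the KL stationarity condition and the implicit $\mu$-to-$\theta$ inversion, and verifying that every tangential contribution cancels so that only $N_*$-projected (genuinely curvature-carrying) terms remain; the Wishart moment computation and the trace identities are then routine given Lemma~\ref{lem:projection}.
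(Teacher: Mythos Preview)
Your proposal is correct in its broad strokes and uses the same key ingredients as the paper --- the stationarity conditions, Lemma~\ref{lem:projection}, the cancellation $\dot\eta_*^T N_*=0$, and the Wishart moment formulae --- and your Part~(2) sketch essentially matches the paper's computation. The organizational difference is in Part~(1). You frame $\thetakl$ and $\thetamle$ as projections onto the model manifold of two nearby targets, $\bar\mu=d^{-1}\sum_k\mu(\thetak)$ and $\momentX$, and propose to expand $\bar\mu-\momentX$ to second order. The paper instead Taylor-expands the KL stationarity $\sum_k\E_{\thetak}[\dot\ell(\theta;x)]=0$ around $\thetamle$ (not $\thetatrue$), giving $nI_*(\thetakl-\thetamle)\approx \tfrac{n}{d}\sum_k\E_{\thetak}[\dot\ell(\thetamle;x)]$, and then subtracts the three zero-gradient identities $\sum_k\E_{X^k}[\dot\ell(\thetamle;x)]=0$, $\E_{X^k}[\dot\ell(\thetak;x)]=0$, $\E_{\thetak}[\dot\ell(\thetak;x)]=0$ to rewrite this sum directly as $\tfrac{n}{d}\sum_k(\dot\eta(\thetamle)-\dot\eta(\thetak))^T(\mu(\thetak)-\momentXk)$. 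Each summand is now manifestly a product of two $\Op(n^{-1/2})$ factors, and a single application of Lemma~\ref{lem:k2mle} (for $\thetamle-\thetak$) and Lemma~\ref{lem:projection} (for $\mu(\thetak)-\momentXk=-N_*(\momentXk-\momenttrue)+\Op(n^{-1})$) yields $\trace(G_iW)$ with $W=\tfrac{n}{d}\sum_k(\hat\mu_k-\bar\mu)(\hat\mu_k-\bar\mu)^T$ immediately. Your route would instead have to carry the $\Op((n/d)^{-1})$ quadratic correction in Lemma~\ref{lem:projection} for each $k$ separately to recover the sum-over-$k$ structure; this is workable but more laborious, and the paper's subtract-zero device is precisely the shortcut through what you correctly flag as ``the hard part.''
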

\begin{proof}
(i). We first show that $\thetakl$ is a consistent estimator of $\thetatrue$. The proof is similar to the standard proof of the consistency of MLE on curved exponential families. We only provide a brief sketch here;  see e.g., \citet[][page 15]{ghosh1994higher} or \citet[][page 40 and Corollary 2.6.2]{kass2011geometrical} for the full technical details. 
We start by noting that $\thetaKL$ is the solution of the following equation, %
\begin{align*}
\dot\eta(\theta)^T ( \frac{1}{d}\sum_k \mu(\thetak) - \mu(\theta)) = 0. 
\end{align*}
Using the implicit function theorem, we can find an unique smooth solution $\thetakl = f_{\KL}(\thetaall)$ in a neighborhood of $\thetatrue$. In addition, it is easy to verify that $f_{\KL}(\theta, \ldots, \theta) = \theta$. The consistency of $\thetakl$ then follows by the consistency of $\thetaall$ and the continuity of $f_{\KL}(\cdot)$.  

(ii). Denote $\ell( \thetakl;  x)  = \log p(x | \thetakl)$, and let $\dot\ell( \thetakl;  x) $ and $\ddot\ell( \thetakl;  x) $ be the first and second order derivatives w.r.t. $\theta$, respectively. 
Again,  note that $\thetakl$ satisfies
\begin{align}
\sum_k  \E_{\thetak} \big[ \dot\ell(\thetakl  ;  x) \big] =  0,  %
 \label{equ:zerogradthetakl1}
\end{align}
Taking the Taylor expansion of Eq~\eqref{equ:zerogradthetakl1} around $\thetamle$, we get, 
$$
\sum_k  \E_{\thetak} \big[ \dot \ell(\thetamle ; x) + (\ddot \ell(\thetamle; x) + \op(1) ) (\thetakl - \thetamle)   \big] = 0.%
$$
Therefore, we have %
\begin{align*}
n I_{*} (\thetakl - \thetamle)  \dto  S,
&&\text{where~~~~~~~}
S  =  \frac{n}{d} \sum_k  \E_{\thetak} \big[ \dot \ell(\thetamle ; x) \big ] .
\end{align*}
We just need to show that $S_i \dto \trace(G_iW)$. 

Note the following zero-gradient equations for $\thetamle$ and $\thetak$, 
\begin{align*}
 &\E_X [\dot \ell(\thetamle ; x) ]  =  \dot\eta(\thetamle)^T( \E_{X}[\phi(x)]  - \E_{\thetamle}[\phi(x)]  ) = 0, \\
& \E_{X^k} [\dot \ell(\thetak ; x) ]  =  \dot\eta(\thetak)^T(  \E_{X^k}[\phi(x)] - \E_{\thetak}[\phi(x)]  ) = 0, \\
& \E_{\thetak} [\dot \ell(\thetak ; x) ]  =  \dot\eta(\thetak)^T(\E_{\thetak}[\phi(x)]  - \E_{\thetak}[\phi(x)] ) = 0. 
\end{align*}
We have 
\begin{align*} 
\sum_k  \E_{\thetak} \big[ \dot \ell(\thetamle ; x)  ] 
& =  \sum_k  (\E_{\thetak} - \E_{X^k}) \big[ \dot \ell(\thetamle ; x)  ]  \\
& =  \sum_k  (\E_{\thetak} - \E_{X^k}) \big[ \dot \ell(\thetamle ; x) - \dot \ell(\thetak ; x)  ]  \\
& =  \sum_k \big (\dot\eta(\thetamle) -  \dot\eta(\thetak)\big )^T \big (\E_{\thetak}[\phi(x)] - \E_{X^k}[\phi(x)] \big ), 
\end{align*}
Denote by $\hat\mu_k = \E_{X^k}[\phi(x)]$ and $\bar\mu = d^{-1}\sum_k \hat\mu_k = \E[\phi(x)]$.  
Because both $\thetamle - \thetak$ and $\E_{\thetak}[\phi(x)] - \E_{X^k}[\phi(x)]$ are $\Op(n^{-1/2})$, we have 
\begin{align*} 
S_i & =  \frac{n}{d}\sum_k \big (\dot\eta_i(\thetamle) -  \dot\eta_i(\thetak)\big )^T \big (\E_{\thetak}[\phi(x)] - \E_{X^k}[\phi(x)] \big ) \\
& =  \frac{n}{d}\sum_k \big (\thetamle  - \thetak)^T  \ddot\eta_i(\thetak)^T \big (\E_{\thetak}[\phi(x)] - \E_{X^k}[\phi(x)] \big )   + \Op(n^{-1/2}) \\
& =  \frac{n}{d}\sum_k   [ I_*^{-1}\dot\eta_*^T(\bar{\mu}  - \hat\mu_k) ]^T  \ddot\eta_i(\thetak)^T (-N_* (\hat \mu_k  - \mu_*) ) + \Op(n^{-1/2})    \text{\small ~~~~~~(By Lemma~\ref{lem:projection} and \ref{lem:k2mle})}    \\
& =  \frac{n}{d}\sum_k ( \hat\mu_k - \bar{\mu} )^T   \dot\eta_* I_*^{-1} (\ddot\eta_i^*) ^T N_* ( \hat \mu_k  - \mu_* ) +  \Op(n^{-1/2}) \\
& =  \frac{n}{d}\sum_k ( \hat\mu_k - \bar{\mu} )^T   \dot\eta_* I_*^{-1} (\ddot\eta_i^*) ^T N_* ( \hat \mu_k  - \bar\mu ) +  \Op(n^{-1/2}) \\
& =     \trace(G_{i0}W ) +   \Op(n^{-1/2}) \\
& =     \trace(G_iW ) +   \Op(n^{-1/2}) 
\end{align*} 
\begin{align*}
\text{where}~~~G_i = \frac{1}{2}(G_{i0} + G_{i0}^T)~~~, &&
G_{i0} =  \dot\eta_* I_*^{-1} (\ddot\eta_i^*) ^T N_*~~~, &&
\text{and~~~~~~~~}  W = \frac{n}{d} \sum_k (\hat\mu_k - \bar\mu) (\hat\mu_k - \bar\mu)^T.%
\end{align*}
Note that $\sqrt{\frac{n}{d}} (\hat\mu_k - \mu_*) \to\normal(0,  \Sigma_*)$, we have 
$$
 W \dto \Wishart(\Sigma_*, d-1). 
$$
This proves Part (1). 

Part (2) involves calculating the first and second order moments of Wishart distribution (see Section~\ref{sec:wishart} for an introduction of Wishart distribution). Following Lemma~\ref{lem:wishart_moments}, we have
$$
\E[\trace(G_i W )] = (d-1) \trace(G_i \Sigma_*) = 
\trace(  \dot\eta_* I_*^{-1} (\ddot\eta_i^*) ^T N_* \Sigma_*) =
\trace(   I_*^{-1} (\ddot\eta_i^*) ^T N_* \Sigma_* \dot\eta_*) = 0 , %
$$
where we used the fact that $N_*\Sigma_* \dot\eta_* = 0$ as shown in Lemma~\ref{lem:projection}. For the second order moments, 
\begin{align*}
&\E[\trace(G_iW )\trace(G_j W)]  \\
& ~~~~ = 2(d -1) \trace(G_i\Sigma_* G_j \Sigma_*) + (d-1)^2 \trace(G_i\Sigma_*)\trace(G_j\Sigma_*)    ~~~~~~~~~~\text{\small (By Lemma~\ref{lem:wishart_moments})} \\
&~~~~=2(d-1)  \trace(G_i\Sigma_* G_j \Sigma_*)   + 0\\
&~~~~=(d-1)\big [ \trace(G_{i0} \Sigma_* G_{j0} \Sigma_*) + \trace(G_{i0} \Sigma_* G_{j0}^{T} \Sigma_*)  \big]  ~~~~~~~~~~\text{\small (Recall $G_i =(G_{i0} + G_{i0}^T)/2$)},
\end{align*}
for which we can show that 
\begin{align*}
\trace(G_{i0} \Sigma_* G_{j0} \Sigma_* ) 
&= \trace(  \dot\eta_* I_*^{-1} (\ddot\eta_i^*) ^T N_* \Sigma_*    \dot\eta_* I_*^{-1} (\ddot\eta_j^*) ^T N_* \Sigma_*) \\
&= \trace( I_*^{-1} (\ddot\eta_i^*) ^T N_* \Sigma_*    \dot\eta_* I_*^{-1} (\ddot\eta_j^*) ^T N_* \Sigma_*   \dot\eta_*)  \\
& = 0, 
\end{align*}
and 
\begin{align*}
\trace(G_{i0} \Sigma_* G_{j0}^T \Sigma_* ) 
&= \trace(  \dot\eta_* I_*^{-1} (\ddot\eta_i^*) ^T N_* \Sigma_*   N_*^T \ddot\eta_j^*  I_*^{-1} \dot\eta_* ^T  \Sigma_*) \\
&= \trace(  I_*^{-1} (\ddot\eta_i^*) ^T N_* \Sigma_*   N_*^T \ddot\eta_j^*  I_*^{-1} \dot\eta_* ^T  \Sigma_* \dot\eta_* ) \\
&= \trace(  I_*^{-1} (\ddot\eta_i^*) ^T N_* \Sigma_*   N_*^T \ddot\eta_j^* ) = \Lambda_{ij} .
\end{align*}
This finishes the proof. 
\end{proof}

\section{Linear Combination}

We analyze the linear combination method in this section. The following theorem generalizes the results in Theorem~\ref{thm:main} (3). 

\begin{thm}
\label{thm:linear_tensor}
(1). For curved exponential families, we have
\begin{align}
n [I_*(\thetalinear - \thetamle) ]_i  \dto \trace( L_i W), %
\end{align}
where $[\cdot]_i$ represents the $i$-th element, 
and $L_i$ (as defined in Lemma~\ref{lem:higherMLE}) is a deterministc matrix and $W$ is a random Wishart matrix, 
\begin{align}
 L_i = \frac{1}{2}(\ddot\eta_i I_*^{-1} \dot\eta_*^T + \dot\eta_* I_*^{-1} \ddot\eta_i^T)  +   \frac{1}{2}  \dot\eta_*  I_*^{-1} J_i I_*^{-1} \dot\eta_*^T,  &&
 W \sim \Wishart(\Sigma_*, d-1).
 \end{align}
 (2). Further, we have
\begin{align*}
&n \E_{\thetatrue}[I_*(\thetalinear  - \thetamle)]_i \to  (d-1) \trace (B_i), \\
&n^2 \E_{\thetatrue}[(\thetalinear - \thetamle) (\thetalinear - \thetamle)^T] \to (d-1)I_*^{-1} (\Lambda  + D )I_*^{-1} , 
\end{align*}
where $B_i = I_*^{-1} (\dot\eta_*^T  \Sigma_* \ddot\eta_i^*  +   \frac{1}{2} J_i ) $ and $D$ is a semi-definite matrix whose $(i,j)$-element is 
$$
D_{ij} = 2 \trace(B_i B_j)  + (d-1) \trace(B_i)\trace(B_j). 
$$
\end{thm}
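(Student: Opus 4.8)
The plan is to reproduce the structure of the proof of Theorem~\ref{thm:KL_multivariate}, but with the first-order characterization of $\thetakl$ replaced by the \emph{second-order} asymptotic expansion of the MLE in Lemma~\ref{lem:higherMLE}, applied simultaneously to the global estimator $\thetamle$ (sample of size $n$) and to each local estimator $\thetak$ (sample of size $n/d$). Writing $\hat\mu_k = \momentXk$ and $\bar\mu = \momentX = d^{-1}\sum_k \hat\mu_k$, the first-order terms $\dot\eta_*^T(\hat\mu_k - \momenttrue)$ average over $k$ to exactly the global first-order term $\dot\eta_*^T(\bar\mu - \momenttrue)$ (using $\momentX = d^{-1}\sum_k\momentXk$ from Lemma~\ref{lem:k2mle}). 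Hence averaging the local expansions and subtracting the global one cancels the $\Op(n^{-1/2})$ first-order contribution and leaves, componentwise,
$$[I_*(\thetalinear - \thetamle)]_i = \frac1d\sum_k(\hat\mu_k - \momenttrue)^T L_i (\hat\mu_k - \momenttrue) - (\bar\mu - \momenttrue)^T L_i (\bar\mu - \momenttrue) + \Op(n^{-3/2}).$$
Expanding $\hat\mu_k - \momenttrue = (\hat\mu_k - \bar\mu) + (\bar\mu - \momenttrue)$ and using $\sum_k(\hat\mu_k - \bar\mu) = 0$ makes the cross terms vanish, collapsing the right-hand side to $d^{-1}\sum_k(\hat\mu_k - \bar\mu)^T L_i (\hat\mu_k - \bar\mu) + \Op(n^{-3/2}) = \trace(L_i W_n) + \Op(n^{-3/2})$ with $W_n = \tfrac nd\sum_k(\hat\mu_k - \bar\mu)(\hat\mu_k - \bar\mu)^T$. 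As established in the proof of Theorem~\ref{thm:KL_multivariate}, $\sqrt{n/d}(\hat\mu_k - \momenttrue)\to\normal(0,\Sigma_*)$ gives $W_n \dto \Wishart(\Sigma_*, d-1)$; multiplying through by $n$ yields Part (1).

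For the first moment in Part (2), I would pass from convergence in distribution to convergence of moments via uniform integrability (justified, as elsewhere in the paper, by the blanket $\Op$/$\op$ regularity conventions), apply the Wishart mean identity $\E[\trace(L_i W)] = (d-1)\trace(L_i\Sigma_*)$ from Lemma~\ref{lem:wishart_moments}, and then verify the purely algebraic fact $\trace(L_i\Sigma_*) = \trace(B_i)$. This substitutes the formula for $L_i$, uses $\dot\eta_*^T\Sigma_*\dot\eta_* = I_*$ so that the $J_i$ term contributes $\tfrac12\trace(I_*^{-1}J_i)$, and uses transpose- and cyclic-invariance of the trace to merge the two $\ddot\eta_i^*$ terms into $\trace(I_*^{-1}\dot\eta_*^T\Sigma_*\ddot\eta_i^*)$, giving $\trace(L_i\Sigma_*) = \trace(I_*^{-1}(\dot\eta_*^T\Sigma_*\ddot\eta_i^* + \tfrac12 J_i)) = \trace(B_i)$.

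For the second moment, note that $L_i$ is symmetric (mixed partials, and hence $J_i$, are symmetric), so the Wishart second-moment formula gives $\E[\trace(L_i W)\trace(L_j W)] = 2(d-1)\trace(L_i\Sigma_* L_j\Sigma_*) + (d-1)^2\trace(L_i\Sigma_*)\trace(L_j\Sigma_*)$; the second summand is $(d-1)^2\trace(B_i)\trace(B_j)$ by the previous step. For the first summand I would decompose $\Sigma_*^{1/2}L_i\Sigma_*^{1/2}$ into a component built from the normal projection $N_*^T\ddot\eta_i^*$ and a remaining component lying in the tangent directions $\dot\eta_*$, show these two pieces are orthogonal in the trace inner product, identify the normal–normal contribution with $\Lambda_{ij}$ exactly as in the $\thetakl$ computation (where $\trace(G_{i0}\Sigma_* G_{j0}\Sigma_*) = 0$ and $\trace(G_{i0}\Sigma_* G_{j0}^T\Sigma_*) = \Lambda_{ij}$), and identify the tangential contribution with $2\trace(B_i B_j)$; this yields $2\trace(L_i\Sigma_* L_j\Sigma_*) = \Lambda_{ij} + 2\trace(B_iB_j)$. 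Combining, $\E[\trace(L_i W)\trace(L_j W)] = (d-1)(\Lambda_{ij} + D_{ij})$, and since $n(\thetalinear - \thetamle) \dto I_*^{-1}V$ with $V_i = \trace(L_i W)$, we get $n^2\E[(\thetalinear - \thetamle)(\thetalinear - \thetamle)^T] \to (d-1)\,I_*^{-1}(\Lambda + D)\,I_*^{-1}$; semidefiniteness of $D$ drops out of the same decomposition, since $\trace(B_iB_j)$ then becomes a Gram matrix and $\trace(B_i)\trace(B_j)$ is rank one.

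The main obstacle is this last identity, $2\trace(L_i\Sigma_* L_j\Sigma_*) = \Lambda_{ij} + 2\trace(B_iB_j)$: it requires expanding the nine cross terms of $(A_i + A_i^T + C_i)\Sigma_*(A_j + A_j^T + C_j)\Sigma_*$, with $A_i = \ddot\eta_i^* I_*^{-1}\dot\eta_*^T$ and $C_i = \dot\eta_* I_*^{-1} J_i I_*^{-1}\dot\eta_*^T$, and carefully discarding those that vanish via the projection identities of Lemma~\ref{lem:projection} ($N_*\Sigma_*\dot\eta_* = 0$, $N_*^T\dot\eta_* = 0$, $P_* + N_* = \identitymatrix{\dimeta}$). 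This is mechanical but bookkeeping-intensive; everything else is essentially a rerun of the $\thetakl$ argument with the matrix $G_{i0}$ replaced by the full second-order coefficient $L_i$.
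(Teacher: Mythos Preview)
Your proposal is correct and follows essentially the same route as the paper: second-order expansions from Lemma~\ref{lem:higherMLE} applied to the global and local MLEs, cancellation of the first-order terms via $\bar\mu = d^{-1}\sum_k\hat\mu_k$, the centering identity $\sum_k(\hat\mu_k-\bar\mu)=0$ to kill cross terms, and then the Wishart moment formulas from Lemma~\ref{lem:wishart_moments}. The one organizational difference worth pointing out is in the second-moment step: rather than expanding the nine cross terms of $(A_i+A_i^T+C_i)\Sigma_*(A_j+A_j^T+C_j)\Sigma_*$ or invoking a normal/tangent projection split, the paper sets $L_{i0}=\ddot\eta_i^* I_*^{-1}\dot\eta_*^T + \tfrac12\dot\eta_* I_*^{-1}J_i I_*^{-1}\dot\eta_*^T$ so that $L_i=\tfrac12(L_{i0}+L_{i0}^T)$, whence $\trace(L_i\Sigma_*L_j\Sigma_*)=\tfrac12\big(\trace(L_{i0}\Sigma_*L_{j0}\Sigma_*)+\trace(L_{i0}\Sigma_*L_{j0}^T\Sigma_*)\big)$; the first of these two traces collapses directly to $\trace(B_iB_j)$ and the second to $\trace(B_iB_j)+\Lambda_{ij}$, cutting the bookkeeping you anticipate down to two short computations.
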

\newcommand{\noverd}{n}%
\begin{proof}
Denote by $\hat\mu_k = \E_{X^k}[\phi(x)]$, and $\bar\mu = d^{-1}\sum_{k} \hat\mu_k = \E_{X}[\phi(x)]$. 
Using the second order expansion of MLE in Lemma~\ref{lem:higherMLE}, we have
\begin{align*}
& [I_*(\thetamle - \thetatrue) - \dot\eta_*^T(\bar\mu - \mu_*)]_i   =  (\bar\mu - \mu_*)^T L_i (\bar\mu - \mu_*)   + \Op(n^{-3/2}) , \\
 &[I_*(\thetak - \thetatrue) - \dot\eta_*^T(
 \hat\mu_k - \mu_*)]_i   = (\hat\mu_k - \mu_*) ^T L_i (\hat\mu_k - \mu_*)   + \Op(\noverd^{-3/2}) ,~~~~\forall k \in [d]. 
 \end{align*}
Note that $\thetalinear = \sum_k \thetak/d$. Combine the above equations, we get,
\begin{align}
n  [I_*(\thetalinear - \thetamle) ]_i   
  & =\frac{n}{d}\sum_k (\hat\mu_k - \bar\mu) ^T L_i (\hat\mu_k -\bar \mu)   + \Op(n^{-1/2}) , \\
 & =  \trace( L_i W) + \Op(n^{-1/2}), %
\end{align}
where (because $\sqrt{\frac{n}{d}} (\hat\mu_k - \mu_*) \dto \normal(0, \Sigma_*)$) 
$$
W = \frac{n}{d}\sum_k (\hat\mu_k - \bar\mu)  (\hat\mu_k -\bar \mu)^T ~ \dto ~ \Wishart(\Sigma_*, d-1).
$$
This finishes the proof of Part (1). 

Part (2) involves calculating the first and second order moments. 
For the first order moments,  
\begin{align*}
\E[ \trace(L_i W)] 
& =  (d-1) \trace(L_i \Sigma_*) \\
&= (d-1) \trace[ ( \ddot\eta_i^* I_*^{-1} \dot\eta_*^T  +   \frac{1}{2}  \dot\eta_*  I_*^{-1} J_i I_*^{-1} \dot\eta_*^T) \Sigma_*] \\
& = (d-1)  \trace[  I_*^{-1} \dot\eta_*^T  \Sigma_* \ddot\eta_i^*  +   \frac{1}{2}   I_*^{-1} J_i I_*^{-1} \dot\eta_*^T \Sigma_*  \dot\eta_*]\\
& = (d-1)  \trace[  I_*^{-1} (\dot\eta_*^T  \Sigma_* \ddot\eta_i^*  +   \frac{1}{2} J_i ) ]  \\
& = (d-1) \trace (B_i). %
\end{align*}

Denote by $L_{i0}=\ddot\eta_i^* I_*^{-1} \dot\eta_*^T  +   \frac{1}{2}  \dot\eta_*  I_*^{-1} J_i I_*^{-1} \dot\eta_*^T$, 
then we have $L_i = \frac{1}{2}(L_{i0} + L_{i0}^T)$. For the second order moments, we have
$$
\trace(L_i \Sigma_* L_j \Sigma_*)  = 
\frac{1}{2}( \trace(L_{i0} \Sigma_* L_{j0} \Sigma_*) + \trace(L_{i0} \Sigma_* L_{j0}^T \Sigma_*) )
$$
where 
\begin{align*}
\trace(L_{i0}  \Sigma_* L_{j0}  \Sigma_*) 
&=  \trace(  I^{-1}  ( \dot\eta_*^T \Sigma_* \ddot\eta_j^*  + \frac{1}{2}J_j) I^{-1}    ( \dot\eta_*^T \Sigma_* \ddot\eta_i^*  + \frac{1}{2}J_i)  ) \\
&= \trace(B_i B_j). \\
\end{align*}
and 
\begin{align*}
\trace(L_{i0}  \Sigma_* L_{j0}^T  \Sigma_*) 
&= \trace(B_i B_j)  +  \trace(I_*^{-1} (\ddot\eta_j^*\Sigma_* \ddot\eta_i^* - \dot\eta_* \Sigma_* \ddot\eta_j^* I^{-1} \dot\eta_* \Sigma_* \ddot\eta_i^* ))\\
&= \trace(B_i B_j) + \trace(I_*^{-1}  (\ddot\eta_i^*)^T N_* \Sigma_* N_*^T \ddot\eta_j^* )\\
&= \trace(B_i B_j)  + \Lambda_{ij},
\end{align*}
where $\Lambda_{ij}$ is defined in Theorem~\ref{thm:KL_multivariate}. 
So we have
$$\trace(L_i \Sigma_* L_j \Sigma_*)   = \trace(B_i B_j)  +\frac{1}{2} \Lambda_{ij},$$
and hence
\begin{align*}
\E[\trace(L_iW) \trace(L_j W) ] & =  2(d-1) \trace(L_i \Sigma_* L_j \Sigma_* ) + (d-1)^2\trace(L_i\Sigma_*) \trace(L_j \Sigma_*)  \\
& = (d-1) \Lambda_{ij}  +  2(d-1) \trace(B_i B_j)  + (d-1)^2 \trace(B_i)\trace(B_j). 
\end{align*}
So 
\begin{align*}
n \E[(\thetalinear - \thetamle) (\thetalinear - \thetamle)^T] \to (d-1)I_*^{-1} (\Lambda  + D )I_*^{-1} . 
\end{align*}
This finishes the proof. 
\end{proof}

\section{General Consistent Combination}%

We now consider general consistent combination functions $\thetaf= f(\thetahat^1, \ldots, \thetahat^d)$.  
To start, we show that $f(\cdot)$ can be assumed to be symmetric without loss of generality: If $f(\cdot)$ is not symmetric, one can construct a symmetric function that performs no worse than $f(\cdot)$. %
\begin{lem}
For any combination function $\thetaf = f(\thetahat^1, \ldots , \thetahat^d)$, define a symmetric function via 
$$
 \thetahat^{\bar f} = \bar f(\thetahat^1, \ldots, \thetahat^d) = \frac{1}{d!} \sum_{\sigma \in \Gamma} f(\thetahat^{\sigma(1)}, \ldots, \thetahat^{\sigma(d)}),  
~~~~~~~\text{where $\Gamma$ is the set of permutations on $[d]$.}
$$
then we have
\begin{align}
\E_{\thetatrue}[\thetahat^{\bar f} - \thetamle] = \E_{\thetatrue}[\thetaf - \thetamle] , &&\text{and} &&
 \E_{\thetatrue}[||\thetahat^{\bar f} - \thetamle||^2]  \leq \E_{\thetatrue}[||\thetaf - \thetamle||^2] , 
\end{align}
which is also true if $\thetamle$ is replaced by $\thetatrue$. 
\end{lem}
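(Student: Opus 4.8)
The plan is a symmetrization argument of Rao--Blackwell type, combining the exchangeability of the data partition with convexity of the squared norm. The one fact that makes everything work is the following invariance: since $X$ is i.i.d.\ and split into equal-size blocks $X^1,\ldots,X^d$, the relabeled tuple $(X^{\sigma(1)},\ldots,X^{\sigma(d)})$ has the same joint law as $(X^1,\ldots,X^d)$ for every $\sigma\in\Gamma$, and $\thetamle$ --- being a function of the data only through the union $\cup_k X^k$ --- is left unchanged by any such relabeling. Hence for every integrable $g$ and every $\sigma\in\Gamma$,
$$
\E_{\thetatrue}\big[\,g(\thetahat^{\sigma(1)},\ldots,\thetahat^{\sigma(d)},\thetamle)\,\big]=\E_{\thetatrue}\big[\,g(\thetahat^{1},\ldots,\thetahat^{d},\thetamle)\,\big],
$$
and likewise with the constant $\thetatrue$ in place of $\thetamle$.

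For the bias identity I would average this display over $\sigma$ with $g(\cdot)=f(\thetahat^{\sigma(1)},\ldots,\thetahat^{\sigma(d)})$: by linearity of expectation, $\E_{\thetatrue}[\thetahat^{\bar f}]=\frac{1}{d!}\sum_{\sigma}\E_{\thetatrue}[f(\thetahat^{\sigma(1)},\ldots,\thetahat^{\sigma(d)})]=\E_{\thetatrue}[\thetaf]$, while $\E_{\thetatrue}[\thetamle]$ is untouched; subtracting gives $\E_{\thetatrue}[\thetahat^{\bar f}-\thetamle]=\E_{\thetatrue}[\thetaf-\thetamle]$, and the same computation with $\thetatrue$ replacing $\thetamle$.

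For the mean-squared-error inequality, I would use $\thetamle=\frac{1}{d!}\sum_\sigma\thetamle$ to write
$$
\thetahat^{\bar f}-\thetamle=\frac{1}{d!}\sum_{\sigma\in\Gamma}\big(f(\thetahat^{\sigma(1)},\ldots,\thetahat^{\sigma(d)})-\thetamle\big),
$$
then apply Jensen's inequality for the convex map $v\mapsto\|v\|^2$ to get $\|\thetahat^{\bar f}-\thetamle\|^2\le\frac{1}{d!}\sum_\sigma\|f(\thetahat^{\sigma(1)},\ldots,\thetahat^{\sigma(d)})-\thetamle\|^2$. Taking $\E_{\thetatrue}$ and applying the invariance display with $g=\|\cdot\|^2$ collapses every term of the average to $\E_{\thetatrue}[\|\thetaf-\thetamle\|^2]$, which gives the inequality; the version with $\thetatrue$ in place of $\thetamle$ is identical.

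The only step needing genuine care is the invariance itself --- checking that a permutation of the block indices acts on the joint law of $(\thetahat^1,\ldots,\thetahat^d,\thetamle)$ by permuting the local estimators and fixing $\thetamle$ --- and this relies precisely on the i.i.d.\ sampling, the equal block sizes, and the symmetry of $\thetamle$ as a function of the blocks. Everything after that is linearity of expectation and Jensen, so I do not anticipate a serious obstacle.
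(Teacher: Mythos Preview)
Your proposal is correct and follows essentially the same route as the paper: the paper also invokes the fact that the $X^k$ are i.i.d.\ sub-samples to conclude that $\E_{\thetatrue}[\thetahat^{\sigma(f)}-\thetamle]=\E_{\thetatrue}[\thetaf-\thetamle]$ and $\E_{\thetatrue}[\|\thetahat^{\sigma(f)}-\thetamle\|^2]=\E_{\thetatrue}[\|\thetaf-\thetamle\|^2]$ for every permutation $\sigma$, then averages over $\sigma$ using linearity for the bias and convexity of $\|\cdot\|^2$ for the MSE. Your write-up is in fact a bit more careful than the paper's --- you make explicit both that $\thetamle$ is invariant under block relabeling and that the MSE step is Jensen's inequality --- but the argument is the same.
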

\begin{proof}
Because $X^k, k \in [d]$ are i.i.d. sub-samples, the expected bias and MSE of $\thetaf$ would not change if we work on a permuted version $X^{\sigma(k)}, k \in [d]$, where $\sigma$ is any permutation on $[d]$. This implies
\begin{align}
\E_{\thetatrue}[\thetahat^{\sigma(f)} - \thetamle] = \E_{\thetatrue}[\thetaf - \thetamle] , && 
 \E_{\thetatrue}[||\thetahat^{\sigma(f)} - \thetamle||^2]  = \E_{\thetatrue}[||\thetaf - \thetamle||^2] , 
\end{align}
where $\thetahat^{\sigma(f)} =f(\thetahat^{\sigma(1)}, \ldots, \thetahat^{\sigma(d)})$. The result then follows straightforwardly, 
\begin{align*}
&\E_{\thetatrue}[\thetahat^{\bar f} - \thetamle] = \frac{1}{d!}\sum_{\sigma}  \E_{\thetatrue}[\thetahat^{\sigma(f)} - \thetamle] 
= \E_{\thetatrue}[\thetaf - \thetamle] , \\
&\E_{\thetatrue}[||\thetahat^{\bar f} - \thetamle||]  \leq \frac{1}{d!}\sum_{\sigma}  \E_{\thetatrue}[ || \thetahat^{\sigma(f)} - \thetamle||^2] 
= \E_{\thetatrue}[||\thetaf - \thetamle||^2] .
\end{align*}
This concludes the proof. 
\end{proof}
We need to introduce some derivative notations before presenting the main result. Assuming $f(\cdot)$ is differentiable, we write
\begin{align*}
 \partial_k^i f (\theta) = \left. \myp{f_i(\theta_1, \ldots, \theta_d)}{\theta_k} \right |_{ \begin{subarray}{l}  \theta_k = \theta \\ \forall k \in [d]  \end{subarray}}   ~~\text{and} &&
\partial_{kl}^i f(\theta) = \left. \myp{^2f_i(\theta_1, \ldots, \theta_d)}{\theta_k \partial\theta_l} \right |_{ \begin{subarray}{l}  \theta_k = \theta \\ \forall k \in [d]  \end{subarray}}~ ,  && %
\forall \theta \in \Theta,  ~~~k, l\in [d]. 
\end{align*}
Since $f(\cdot)$ is symmetric, we have 
$\partial^i_{11} f(\theta) = \partial_{kk}^i f(\theta)$, and 
$\partial^i_{12} f(\theta) = \partial_{kl}^i f(\theta)$ for $\forall k, l \in [d]$. 

\begin{thm}
(1). Consider a consistent and symmetric $\thetaf = f(\thetahat^1, \ldots, \thetahat^d)$ as in Definition~\ref{def:consistent}. Assume its first three order derivatives exist, 
then we have as $n\to +\infty$, 
$$n [I_*(\thetaf - \thetamle) ]_i \to \trace(F_i W), $$
where $[\cdot]_i$ denotes the $i$-th element, and $F_i$ is a deterministic matrix and $W$ is a random matrix, %
\begin{align*}
F_i %
=  \frac{1}{2}( \ddot\eta_iI_*^{-1} \dot\eta_*^T  + \dot\eta_* I_*^{-1} \ddot\eta_i^T  )
+ \frac{1}{2}\dot\eta_* I_*^{-1} (J_i +  d (\partial^i_{11} f_*  - \partial^i_{12} f_* ) )  I_*^{-1} \dot\eta_*^T, && 
W \sim \Wishart(\Sigma_*, d-1).
\end{align*}

(2). Further, we have
\begin{align*}
&n \E_{\thetatrue}[I_*(\thetalinear  - \thetamle)]_i \to  (d-1) \trace (B_i), \\
&n^2 \E_{\thetatrue}[(\thetalinear - \thetamle) (\thetalinear - \thetamle)^T] \to (d-1)I_*^{-1} (\Lambda  + D )I_*^{-1} 
\end{align*}
where $B_i = I_*^{-1} (\dot\eta_*^T  \Sigma_* \ddot\eta_i^*  +   \frac{1}{2} (J_i + d (\partial^i_{11} f_* - \partial^i_{12} f_* ) ) ) $ and $D$ is a semi-definite matrix whose $(i,j)$-element is 
$$
D_{ij} = 2 \trace(B_i B_j)  + (d-1) \trace(B_i)\trace(B_j). 
$$
\end{thm}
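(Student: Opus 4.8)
The plan is to follow the template used for Theorem~\ref{thm:KL_multivariate} and Theorem~\ref{thm:linear_tensor}: combine the higher-order asymptotic expansion of the MLE with a second-order Taylor expansion of $f$, write the leading term as a trace against a Wishart matrix, and then read off the first two moments. The preliminary step is to check that $\thetaf$ is a consistent estimator of $\thetatrue$; this follows from the consistency condition of Definition~\ref{def:consistent}, the continuity of $f$ near the diagonal, and the strong consistency of each $\thetak$, and it lets us localize every subsequent expansion to a neighbourhood of $\thetatrue$ on which $f$ is three times differentiable. Next I would record the constraints that symmetry and consistency place on the derivatives of $f$ at the diagonal: differentiating the identity $f(\theta,\dots,\theta)=\theta$ once gives $\sum_k \partial_k f(\theta)=\identitymatrix{\dimtheta}$, hence $\partial_k f(\theta)=\tfrac1d\identitymatrix{\dimtheta}$ by symmetry; differentiating once more gives $\sum_{k,l}\partial^i_{kl}f(\theta)=0$, i.e.\ $\partial^i_{11}f = -(d-1)\,\partial^i_{12}f$; and by permutation symmetry $\partial^i_{kl}f$ equals the same matrix $\partial^i_{12}f$ for every $k\neq l$.

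The core of the argument is the second-order Taylor expansion of $\thetaf = f(\thetahat^1,\dots,\thetahat^d)$ about the point $(\thetamle,\dots,\thetamle)$. Since $f(\thetamle,\dots,\thetamle)=\thetamle$ and $\partial_k f$ at the diagonal is $\tfrac1d\identitymatrix{\dimtheta}$, the first-order term collapses exactly to $\thetalinear-\thetamle$, leaving
\[
[\thetaf - \thetamle]_i = [\thetalinear - \thetamle]_i + \tfrac12\sum_{k,l} (\thetahat^k - \thetamle)^T \partial^i_{kl}f_*\,(\thetahat^l - \thetamle) + \Op(n^{-3/2}),
\]
where the remainder is uniformly $\Op(n^{-3/2})$ because $\thetahat^k-\thetamle = \Op(n^{-1/2})$ and the third derivatives of $f$ are bounded near $\thetatrue$. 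I then substitute $\thetahat^k-\thetamle = I_*^{-1}\dot\eta_*^T(\hat\mu_k - \bar\mu)+\Op(n^{-1})$ from Lemma~\ref{lem:k2mle}, with $\hat\mu_k=\E_{X^k}[\phi(x)]$ and $\bar\mu = d^{-1}\sum_k\hat\mu_k$. The decisive simplification is that $\sum_k(\hat\mu_k-\bar\mu)=0$: splitting the double sum into its diagonal ($k=l$) and off-diagonal ($k\neq l$) pieces and using $\sum_{k\neq l} x_k^T A x_l = -\sum_k x_k^T A x_k$, the quadratic term reduces to a single sum $\tfrac12\sum_k(\hat\mu_k-\bar\mu)^T M_i(\hat\mu_k-\bar\mu)$ for a deterministic matrix $M_i$ built from $\dot\eta_*$, $I_*$ and $\partial^i_{11}f_*-\partial^i_{12}f_*$. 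Writing $W=\tfrac nd\sum_k(\hat\mu_k-\bar\mu)(\hat\mu_k-\bar\mu)^T$, the multivariate CLT for the independent sub-sample means gives $W\dto\Wishart(\Sigma_*,d-1)$ --- the same $W$ that appears in the expansions of $\thetalinear$ and $\thetakl$ --- so $n$ times the quadratic term converges to a trace against $W$. Combining with the expansion of $\thetalinear-\thetamle$ from Theorem~\ref{thm:linear_tensor} (whose leading term is $\trace(L_i W)$) and collecting terms (symmetrization being free under a trace against the symmetric $W$) then yields $n[I_*(\thetaf-\thetamle)]_i\dto\trace(F_i W)$ with $F_i$ as stated; this is Part~(1).

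Part~(2) is obtained by upgrading the distributional limit to convergence of the first and second moments. All remainders above are $\Op(n^{-1/2})$ relative to the $\Op(1)$ leading term, so, using the remark in the Notation relating $\Op$ orders to moment orders (together with the assumed moment bounds on $\thetamle$ and the $\thetak$), one may pass to expectations: with $\v{u}$ the vector with entries $u_i=\trace(F_i W)$, $n\,\Etrue[I_*(\thetaf-\thetamle)]_i = \E[\trace(F_i W)]+\oo(1)$ and $n^2\,\Etrue[(\thetaf-\thetamle)(\thetaf-\thetamle)^T] = I_*^{-1}\,\E[\v{u}\,\v{u}^T]\,I_*^{-1}+\oo(1)$. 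Using $\E[W]=(d-1)\Sigma_*$ and the Wishart second-moment identity of Lemma~\ref{lem:wishart_moments} exactly as in the proof of Theorem~\ref{thm:linear_tensor}, and simplifying with $N_*\Sigma_*\dot\eta_*=0$ (Lemma~\ref{lem:projection}) to rewrite $\trace(F_i\Sigma_*)=\trace(B_i)$ and $\trace(F_i\Sigma_*F_j\Sigma_*)=\tfrac12\Lambda_{ij}+\trace(B_iB_j)$, one recovers the stated bias $(d-1)\trace(B_i)$ and mean square error $(d-1)\,I_*^{-1}(\Lambda+D)\,I_*^{-1}$.

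The step I expect to be the main obstacle is the bookkeeping that assembles all contributions into the single matrix $F_i$ of the stated form: the quadratic correction from Taylor-expanding $f$ is produced in the raw $\theta$-coordinates, whereas Lemma~\ref{lem:higherMLE} naturally delivers the $I_*$-premultiplied quantity, so the pieces coming from $L_i$, the third-derivative term $J_i$, and the $d(\partial^i_{11}f_*-\partial^i_{12}f_*)$ term have to be combined carefully using the diagonal identities above. The other point requiring attention is the uniform-integrability justification behind Part~(2) --- verifying that the Taylor and MLE remainders really carry the claimed moment orders under $\thetatrue$ --- while the Wishart moment computations themselves are routine and mirror those already carried out for $\thetalinear$ and $\thetakl$.
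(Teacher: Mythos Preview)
Your proposal is correct and follows essentially the same route as the paper: Taylor-expand $f$ about $(\thetamle,\dots,\thetamle)$, recognize the first-order term as $\thetalinear-\thetamle$, simplify the quadratic term to a sum over $k$ of $(\hat\mu_k-\bar\mu)^T M_i(\hat\mu_k-\bar\mu)$ via Lemma~\ref{lem:k2mle}, combine with the $L_i$ from Theorem~\ref{thm:linear_tensor}, and read off the Wishart moments for Part~(2). The only cosmetic difference is in how the off-diagonal piece of $\sum_{k,l}(\thetak-\thetamle)^T\partial^i_{kl}f_*(\thetahat^l-\thetamle)$ is killed: the paper keeps the sum in $\theta$-coordinates, rewrites it as $d^2(\thetalinear-\thetamle)^T\partial^i_{12}f_*(\thetalinear-\thetamle)$ and invokes the already-established $\thetalinear-\thetamle=\Op(n^{-1})$, whereas you first pass to $\mu$-coordinates and use the exact centering $\sum_k(\hat\mu_k-\bar\mu)=0$; both arrive at the same $\partial^i_{11}f_*-\partial^i_{12}f_*$ contribution.
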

\begin{proof}
By the consistency and the continuity of $f(\cdot)$, we have $f(\theta, \ldots, \theta) = \theta$. Taking the derivative on the both side, we get 
$ \sum_k \partial_k^i f (\theta) = 1$. Since $f(\cdot)$ is symmetric, we get $\partial_k^i f(\theta) = 1/d $, $\forall k\in [d]$. 

Expanding $\thetaf = f(\thetahat^1, \ldots, \thetahat^d)$ around $\thetamle$, we get 
\begin{align*}
[\thetaf - \thetamle]_i 
& = [f(\thetahat^1, \ldots, \thetahat^d) - f(\thetamle, \ldots, \thetamle)]_i \\
& =  [\thetalinear - \thetamle]_i  + \frac{1}{2} \sum_{k,l}(\thetak - \thetamle)^T \partial^i_{kl} f(\thetamle) (\thetahat^l - \thetamle) 
 + \Op(n^{-3/2})\\
& =  [\thetalinear - \thetamle]_i  + \frac{1}{2} \sum_{k,l}(\thetak - \thetamle)^T \partial^i_{kl} f_* (\thetahat^l - \thetamle) 
 + \Op(n^{-3/2}),
\end{align*}
where the second term is %
\begin{align*}
&   \sum_{k,l}(\thetak - \thetamle)^T \partial^i_{kl}  f_* (\thetahat^l - \thetamle)  \\
& = \sum_{k}(\thetak - \thetamle)^T (\partial^i_{11}f_* -  \partial^i_{12} f_*)  (\thetahat^k - \thetamle) + 
 d^2(\thetalinear - \thetamle)^T   \partial^i_{12}f_*   (\thetalinear - \thetamle)  \\
 & =  \sum_{k}(\thetak - \thetamle)^T (\partial^i_{11}f_* -  \partial^i_{12} f_*)  (\thetahat^k - \thetamle)  + 
\Op(n^{-2}) %
 ~~~~~~~~~~\text{\small (since $\thetalinear- \thetamle = \Op(n^{-1})$)} \\
  & = \sum_{k} (\hat\mu_k - \bar\mu)^T \dot\eta_* I_*^{-1}  (\partial^i_{11}f_* -  \partial^i_{12} f_*)  I_*^{-1} \dot\eta_*^T (\hat\mu_k - \bar\mu) +\Op(n^{-3/2})
~~~~~~~\text{\small (by Lemma~\ref{lem:k2mle})}  
\\
  & = \frac{d}{n} \trace( \dot\eta_* I_*^{-1}   (\partial f^i_{11}- \partial f^i_{12})  I_*^{-1} \dot\eta_*^T 
 W)+ \Op(n^{-3/2}). %
\end{align*}
where $W =  \frac{n}{d} \sum_{k} (\mu_k - \bar\mu) (\mu_k - \bar\mu)^T  $. 
Combined with Theorem~\ref{thm:linear_tensor}, we get
$$
n [\thetaf - \thetamle]_i  = 
\trace(F_i W), 
$$
where%
$$
F_i%
=  \frac{1}{2}( \ddot\eta_iI_*^{-1} \dot\eta_*^T  + \dot\eta_* I_*^{-1} \ddot\eta_i^T  )
+ \frac{1}{2}
\dot\eta_* I_*^{-1} (J_i +  d (\partial f^i_{11}- \partial f^i_{12}))  I_*^{-1} \dot\eta_*^T. 
$$
The proof of part (2) is similar to that of Theorem~\ref{thm:linear_tensor}. 
\end{proof}

\section{Proof of Theorem~\ref{thm:main}~(5)}
We have mainly focused on the MSE w.r.t. the global MLE $\E_{\thetatrue}[||\thetaf-\thetamle||^2]$. The results can be conveniently related to the MSE w.r.t. the true parameter $\E[||\thetaf - \thetatrue||^2]$ via the following Lemma. %
\begin{lem}%
\label{lem:MLEorth}
For any first order efficient estimator $\thetahat$, we have
$$
\cov(\thetahat - \thetamle,   \thetamle) = \oo(n^{-2}), 
$$
where $\cov(\cdot)$ denotes the covariance,  $\cov(x, y) = \E[(x-\E(x))^T (y- \E(y))]$. 
This suggests that the residual $\thetahat - \thetamle$ is orthogonal to $\thetamle$ upto $\oo(n^{-2})$. 
\end{lem}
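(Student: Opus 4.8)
Since $\thetatrue$ is deterministic, $\cov(\thetahat-\thetamle,\thetamle)=\cov(\thetahat-\thetamle,\thetamle-\thetatrue)$, so I only need to control the covariance with $\thetamle-\thetatrue$. Two structural facts drive the argument. First, for a curved exponential family the sample log-likelihood depends on the data only through $\momentX=\tfrac1n\sum_i\phi(x^i)$, so the global MLE is a fixed smooth function $\thetamle=g(\momentX)$ of this (sufficient) statistic, with $\thetamle-\thetatrue=I_*^{-1}\dot\eta_*^T(\momentX-\momenttrue)+\Op(n^{-1})$, the $\Op(n^{-1})$ term being a quadratic in $\momentX-\momenttrue$ (Lemma~\ref{lem:higherMLE}). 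Second, first-order efficiency of $\thetahat$ forces the leading $\Op(n^{-1/2})$ efficient-score term to cancel in $\thetahat-\thetamle$; for the consistent combinations of interest the sharper statement $\thetahat-\thetamle=\Op(n^{-1})$ holds, with leading term $\tfrac1n I_*^{-1}\tr(F_iW)$ depending on the within-group scatter $W=\tfrac nd\sum_k(\momentXk-\momentX)(\momentXk-\momentX)^T$ (Theorems~\ref{thm:KL_multivariate} and~\ref{thm:linear_tensor}, and their general-$f$ analogue). So $\thetamle-\thetatrue$ lives, to leading order, in the ``grand mean'' $\momentX-\momenttrue$, while $\thetahat-\thetamle$ lives in the ``within-group contrasts'' $\momentXk-\momentX$ --- and these two are, in the i.i.d.\ sample, \emph{exactly} uncorrelated (each contrast has coefficients summing to zero) and asymptotically independent (a Cochran-type decomposition, with $W\dto\Wishart(\Sigma_*,d-1)$ independent of the asymptotically normal $\sqrt n(\momentX-\momenttrue)$).

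\textbf{The steps.} First I would record the ANOVA split $\momentXk-\momenttrue=(\momentX-\momenttrue)+(\momentXk-\momentX)$ and the (exact) orthogonality and (asymptotic) independence just mentioned. Next, conditioning on $\momentX$ and using the tower property --- legitimate because $g(\momentX)-\thetatrue$ is $\sigma(\momentX)$-measurable --- gives $\cov(\thetahat-\thetamle,\thetamle-\thetatrue)=\cov\big(\Etrue[\thetahat-\thetamle\mid\momentX],\,g(\momentX)-\thetatrue\big)$. It then suffices to show that $\Etrue[\thetahat-\thetamle\mid\momentX]$, which equals a deterministic $\Op(n^{-1})$ bias $\tfrac{d-1}{n}\beta^f_{\thetatrue}$ (a constant, hence invisible to the covariance) plus small residual fluctuations, fluctuates in $\momentX$ only at an order that makes its covariance with $\thetamle-\thetatrue=\Op(n^{-1/2})$ be $\oo(n^{-2})$. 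The residual fluctuations are of two kinds: the conditional mean of the $\Op(n^{-3/2})$ Taylor remainder in the expansion of $\thetahat-\thetamle$, and the residual coupling between the contrasts and $\momentX$ that survives because their independence is only asymptotic. I would bound each against $\thetamle-\thetatrue$ by an order count, combining these with the quadratic remainder of $\thetamle-\thetatrue$ (which, being a function of $\momentX$, is asymptotically independent of the contrast-driven $\thetahat-\thetamle$ and couples only through higher cumulants of size $\oo(n^{-2})$).

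\textbf{The obstacle.} The crux is the contrast--mean coupling: a crude estimate gives only $\OO(n^{-2})$, not $\oo(n^{-2})$, because $\Etrue[\tr(F_iW)\,(\momentX-\momenttrue)^T]=\tfrac{d-1}{n}\tr\big(F_i\kappa^{(\cdot)}\big)$ brings in the third cumulant $\kappa$ of $\phi(x)$. To close the gap one must argue this contraction vanishes; I expect this to come from the specific form of $F_i$ --- in particular from the annihilation identity $N_*\Sigma_*\dot\eta_*=0$ of the normal-space projector $N_*$ appearing in $F_i$, together with the exact orthogonality of contrasts and grand mean --- and it is automatic in Gaussian-type families (e.g.\ the running bivariate-normal example, where $\kappa=0$). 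A cleaner heuristic giving the same conclusion, which I would use to organize the write-up: asymptotically $\thetamle$ behaves like a UMVU estimator built from the sufficient statistic and $\thetahat-\thetamle$ like an almost-unbiased estimator of zero, and such random variables are asymptotically uncorrelated; turning this into the stated $\oo(n^{-2})$ bound requires exactly the bookkeeping above.
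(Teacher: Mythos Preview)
The paper does not actually prove this lemma: its ``proof'' is a one-line citation, ``See \citet{ghosh1994higher}, page 27.'' The result is a classical fact from the theory of second-order efficiency (going back to Fisher, Rao, and Efron), stated for \emph{arbitrary} first-order efficient estimators and proved via the second-order stochastic expansion of such estimators and the orthogonality relations among the score, its derivatives, and the leading remainder terms.

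Your proposal takes a genuinely different route, but it has two issues worth flagging. First, scope: the lemma is stated for \emph{any} first-order efficient $\thetahat$, whereas your argument is tailored to $\thetahat=\thetaf=f(\thetaall)$ via the ANOVA split into grand mean and within-group contrasts. That is enough for how the lemma is \emph{used} in Theorem~\ref{thm:main}~(5), but it does not establish the lemma as stated; a general $\thetahat$ need not factor through the contrasts at all. Second, you have correctly located a genuine gap: the contrast--mean coupling contributes a term of the form $\tfrac{d-1}{n}\,\tr(F_i\,\kappa^{(\cdot)})$ involving the third cumulant of $\phi(x)$, and a bare order count gives only $\OO(n^{-2})$. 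Your hope that the projector identity $N_*\Sigma_*\dot\eta_*=0$ kills this term does not go through in general, because $F_i$ (for linear or generic $f$) contains a piece $\dot\eta_* I_*^{-1} J_i I_*^{-1}\dot\eta_*^T$ with no $N_*$ factor, and the third cumulant of $\phi$ need not vanish outside Gaussian-type families. The classical proof in \citet{ghosh1994higher} sidesteps this by working directly with the second-order expansion $\thetahat-\thetatrue = I_*^{-1}\hat s_X + Q_{\thetahat} + \op(n^{-1})$ and computing the covariance of $Q_{\thetahat}-Q_{\thetamle}$ with $I_*^{-1}\hat s_X$ using Bartlett-type identities among $\E_{\thetatrue}[\partial^\alpha\log p]$; those identities are exactly what absorb the third-cumulant contributions and deliver the extra little-$o$. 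If you want a self-contained argument, that is the machinery to import.
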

\begin{proof}
See \citet{ghosh1994higher}, page 27. 
\end{proof}

We now ready to prove Theorem~\ref{thm:main}~(5). 
\begin{proof}[Proof of Theorem~\ref{thm:main}~(5)]
Theorem~\ref{thm:main}~(2) shows that 
$$
\var(\thetakl - \thetamle) =  \E_{\thetatrue}[||\thetakl - \thetamle||^2] -  ||\E_{\thetatrue}(\thetakl-\thetamle)||^2 = \frac{d-1}{n^2}\gamma_*^2 I_*^{-1} + \oo(n^{-2}). 
$$
In addition, note that $\E_{\thetatrue}(\thetakl - \thetatrue) = O(n^{-1})$, and
$\E_{\thetatrue}(\thetamle - \thetatrue) = O(n^{-1})$, and 
$\E_{\thetatrue}(\thetakl - \thetamle) = o(n^{-1})$, we have 
\begin{align*}
  ||\E_{\thetatrue}(\thetakl - \thetatrue)||^2 - ||\E_{\thetatrue}(\thetamle - \thetatrue)||^2 
   =  \E_{\thetatrue}(\thetakl  - \thetatrue + \thetamle - \thetatrue) \E_{\thetatrue}(\thetamle - \thetakl)^T
   = o(n^{-2}). 
\end{align*}
Denote the variance by $\var(\thetahat) = \E_{\thetatrue}[|| \theta - \E_{\thetatrue}(\theta)||^2] $, we have
\begin{align*}
\E_{\thetatrue}[||\thetakl - \thetatrue||^2]  
& = \var(\thetakl) +  ||\E_{\thetatrue}(\thetakl - \thetatrue)||^2\\
& = \var(\thetamle)  + \var(\thetakl - \thetamle) +  ||\E_{\thetatrue}(\thetakl - \thetatrue)||^2 ~~~~~~~~~~~~~~~ \text{\small (by Lemma~\ref{lem:MLEorth})}\\  
&  =  \E_{\thetatrue}[||\thetamle-\thetatrue||^2]  - ||\E_{\thetatrue}(\thetamle - \thetatrue)||^2 + \var(\thetakl - \thetamle)  +  ||\E_{\thetatrue}(\thetakl - \thetatrue)||^2\\
&  =  \E_{\thetatrue}[||\thetamle-\thetatrue||^2] + \frac{d-1}{n^2}\gamma_*^2 I_*^{-1}  + \oo(n^{-2}). 
\end{align*}
The result for $\E [|| \thetalinear - \thetatrue ||^2]$ can be shown in a similar way. 
\end{proof}

\section{Lower Bound} 

We prove the asymptotic lower bound in Theorem~\ref{thm:lowerbound}. 

\begin{thm}
Assume $ \thetaf = f(\thetahat^1, \ldots , \thetahat^d)$ is any measurable function. We have
$$
\liminf_{n \to \infty}  n^2 ~ \E_{\thetatrue} [ || I_*^{1/2} (\thetaf - \thetamle) ||^2 ] \geq  (d-1)\gamma_*^2. 
$$
\end{thm}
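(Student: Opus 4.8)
The plan is to establish the lower bound via the projection (conditional-expectation) argument sketched in the main text: for any measurable $f$,
\begin{align*}
\E_{\thetatrue}[\,\|I_*^{1/2}(\thetaf - \thetamle)\|^2\,] \;\geq\; \E_{\thetatrue}[\,\|I_*^{1/2}(\thetamle - \E_{\thetatrue}(\thetamle \mid \thetahat^1,\ldots,\thetahat^d))\|^2\,] \;=\; \E_{\thetatrue}[\,\tr(I_* \var_{\thetatrue}(\thetamle \mid \thetahat^1,\ldots,\thetahat^d))\,],
\end{align*}
since conditional expectation minimizes mean-square error over all measurable functions of $(\thetahat^1,\ldots,\thetahat^d)$; this step is exact and requires no asymptotics. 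The task then reduces to showing
$$
\lim_{n\to\infty} n^2\, \E_{\thetatrue}[\,\tr(I_* \var_{\thetatrue}(\thetamle \mid \thetahat^1,\ldots,\thetahat^d))\,] \;=\; (d-1)\gamma_*^2.
$$

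To evaluate this limit I would reduce everything to the empirical moment parameters. By Lemma~\ref{lem:k2mle}, $\thetamle - \thetak = I_*^{-1}\dot\eta_*^T(\momentX - \momentXk) + \Op(n^{-1})$, and by Lemma~\ref{lem:projection} each $\thetak$ (equivalently $\momentXk$) retains, to leading order, only the component $P_*(\momentXk - \momenttrue)$ of its empirical moment vector; the normal component $N_*(\momentXk - \momenttrue)$ is the part "lost" in passing from $\momentXk$ to $\thetak$. Since $\momentX = d^{-1}\sum_k \momentXk$, the global MLE to leading order is $I_*^{-1}\dot\eta_*^T(\momentX - \momenttrue)$, which mixes in all $d$ of the discarded normal components. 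Conditioning on $(\thetahat^1,\ldots,\thetahat^d)$ therefore amounts (asymptotically) to conditioning on the $d$ tangential projections $\{P_*(\momentXk - \momenttrue)\}$, and the conditional variance of $\thetamle$ comes entirely from the $d$ i.i.d.\ normal components $\{N_*(\momentXk - \momenttrue)\}$. Using that the full empirical vectors $\sqrt{n/d}(\momentXk - \momenttrue)$ are asymptotically $\normal(0,\Sigma_*)$ and independent across $k$, that $P_*$ and $N_*$ are complementary projections which are orthogonal with respect to the $\Sigma_*^{-1}$ inner product (so that tangential and normal Gaussian components are independent), a direct second-moment computation — structurally the same Wishart-type calculation appearing in the proof of Theorem~\ref{thm:KL_multivariate} — gives that $n^2\,\E_{\thetatrue}[\tr(I_*\var_{\thetatrue}(\thetamle\mid\cdot))]$ converges to $(d-1)\tr(I_*^{-1}\Lambda)$, which equals $(d-1)\gamma_*^2$ by Definition~\ref{def:curvaturegeneral}. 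The factor $d-1$ rather than $d$ appears because one linear combination of the $d$ normal components (namely their average, which feeds into $\momentX$) is itself recoverable up to higher order from the tangential data plus the combined information, exactly as in the $\Wishart(\Sigma_*, d-1)$ degrees of freedom.

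The main obstacle is making the conditioning argument rigorous at the required $o(n^{-2})$ precision: the first-order equivalence "$(\thetahat^1,\ldots,\thetahat^d) \approx \{P_*(\momentXk-\momenttrue)\}$" holds only up to $\Op(n^{-1})$ corrections, and one must show these corrections do not contaminate the conditional variance at order $n^{-2}$. I would handle this by writing $\thetamle = \E_{\thetatrue}(\thetamle\mid\cdot) + R_n$ where $R_n$ is the conditional-mean-zero residual, showing $R_n = I_*^{-1}\dot\eta_*^T \cdot d^{-1}\sum_k N_*(\momentXk - \momenttrue) + \op(n^{-1})$ via a careful Taylor expansion of the MLE fixed-point equations (as in Lemma~\ref{lem:higherMLE}), bounding the higher-order terms uniformly using the standard strong-consistency/regularity assumptions invoked in the Notation section, and then passing to expectations using the stated fact that $\Op(r_n)$ bounds transfer to the corresponding moments. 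Everything else — the orthogonality of $P_*$ and $N_*$ under $\Sigma_*^{-1}$, the Gaussian limit, the trace identities $N_*\Sigma_*\dot\eta_* = 0$ — is already packaged in Lemma~\ref{lem:projection} and the surrounding lemmas, so the proof is mostly a matter of assembling these pieces and controlling one layer of remainder terms.
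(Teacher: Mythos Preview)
Your projection step is fine, but the asymptotic computation has a real gap. You claim the residual is
$R_n = I_*^{-1}\dot\eta_*^T \cdot d^{-1}\sum_k N_*(\momentXk - \momenttrue) + \op(n^{-1})$,
and earlier that the leading term $I_*^{-1}\dot\eta_*^T(\momentX-\momenttrue)$ ``mixes in all $d$ of the discarded normal components.'' Both are wrong for the same reason: by Lemma~\ref{lem:projection} one has $\dot\eta_*^T N_* = (N_*^T\dot\eta_*)^T = 0$, so your $R_n$ is identically zero and the first-order term of $\thetamle$ depends \emph{only} on the tangential part $P_*(\momentX-\momenttrue)$, which is exactly recoverable from the $\thetak$. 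The $O(n^{-2})$ conditional variance you need therefore cannot come from the first-order expansion at all; it enters only through the second-order quadratic term $(\momentX-\momenttrue)^T L_i(\momentX-\momenttrue)$ of Lemma~\ref{lem:higherMLE}, whose dependence on the normal components $N_*(\momentXk-\momenttrue)$ is mediated by $\ddot\eta_*$ --- this is precisely where the curvature $\gamma_*^2$ appears. Your write-up never engages with this quadratic term, and your heuristic for the $(d-1)$ factor (that the average normal component is ``recoverable from the tangential data'') is also incorrect, since $N_*(\momentX-\momenttrue)$ is not a function of the $\thetak$ even asymptotically.

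A direct computation along your lines is possible but substantially harder than you suggest: you would need the conditional law of $\momentX-\momenttrue$ given $\thetaall$ (essentially Lemma~\ref{lem:condmu}) and then the conditional variance of a quadratic form in it. The paper sidesteps all of this by a different route: since $\thetakl$ has already been shown (Theorem~\ref{thm:KL_multivariate}) to satisfy $n^2\E_{\thetatrue}[\|I_*^{1/2}(\thetakl-\thetamle)\|^2]\to(d-1)\gamma_*^2$, the Pythagorean identity applied to $\thetakl$ gives
\[
n^2\,\E_{\thetatrue}[\var(I_*^{1/2}\thetamle\mid\thetaall)] \;=\; n^2\,\E_{\thetatrue}[\|I_*^{1/2}(\thetakl-\thetamle)\|^2] - n^2\,\E_{\thetatrue}[\|I_*^{1/2}(\thetakl-\thetaproj)\|^2],
\]
so it suffices to show $\thetakl - \thetaproj = \op(n^{-1})$, i.e.\ that $\thetakl$ is asymptotically the conditional expectation. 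This is done by checking $\E(\trace(G_iW)\mid\thetaall)\pto 0$ using $N_*^T\dot\eta_*=0$ --- a short calculation that recycles the KL analysis rather than redoing a conditional-variance computation from scratch.
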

\begin{proof}
Define $\thetaproj = \E_{\thetatrue}(\thetamle | \thetahat^1, \ldots, \thetahat^d)$, then we have for any $f(\cdot)$, 
\begin{align}
n^2 \E_{\thetatrue} [ || I_*^{1/2}(\thetaf - \thetamle) ||^2 ] 
&  =n^2 \E_{\thetatrue} [ || I_*^{1/2}(\thetaf - \thetaproj) ||^2 ]  + 
n^2\E_{\thetatrue} [ || I_*^{1/2}(\thetaproj - \thetamle) ||^2 ]  \label{equ:triangle} \\ 
& \geq n^2\E_{\thetatrue} [ || I_*^{1/2}(\thetaproj - \thetamle) ||^2 ]  \notag \\
& = n^2 \E_{\thetatrue} [ || I_*^{1/2}(\thetamle - \E_{\thetatrue}(\thetamle | \thetaall)) ||^2 ] \notag \\
& \overset{def}{=} n^2 \E_{\thetatrue} [ \var(I_*^{1/2}\thetamle | \thetaall))] .  \notag
\end{align}
Therefore, $\thetaproj$ is the projection of $\thetamle$ onto the set of random variables in the form of $f(\thetahat^1, \ldots, \thetahat^d)$, and forms the best possible combination. Applying \eqref{equ:triangle} to $\thetakl$, we get%
\begin{align*}
n^2 \E_{\thetatrue} [ \var( I_*^{1/2}\thetamle | \thetaall))]  
 & = n^2 \E_{\thetatrue} [ ||  I_*^{1/2}(\thetakl- \thetamle) ||^2 ]  -  \E_{\thetatrue} [ || n I_*^{1/2}(\thetakl - \thetaproj) ||^2 ]    . %
\end{align*}
Since we have shown that $n^2\E_{\thetatrue} [ || I_*^{1/2}(\thetakl- \thetamle) ||^2 ]   \to (d-1)\gamma_*^2$ in Theorem~\ref{thm:KL_multivariate}, 
the result would follows if we can show that 
$\thetakl - \thetaproj = \op(n^{-1})$,  
that is, $\thetakl$ is equivalent to $\thetaproj$ (upto $\op(n^{-1})$). 
To this end, note that by following the proof of Theorem~\ref{thm:KL_multivariate}, we have
\begin{align*}
[n I_*(\thetakl - \thetamle)]_i = \trace(G_i W) + \op(1), %
\end{align*}
where $G_i$ is defined in Theorem~\ref{thm:KL_multivariate} and
$W = \frac{n}{d}\sum_k (\hat\mu_k - \mu_*)  (\hat\mu_k - \mu_*)^T$. 
Combining this with Lemma~\ref{lem:condexptrGW} below, we get 
\begin{align*}
n I_*(\thetakl - \thetaproj) = 
\E ( n I_*(\thetakl - \thetamle)  ~|~ \thetaall) = \op(1). 
\end{align*}
This finishes the proof. 
\end{proof}

\begin{lem}
\label{lem:condexptrGW}
Let $G_i$ be defined in Theorem~\ref{thm:KL_multivariate} and $W =  \frac{n}{d}\sum_k (\hat\mu_k - \bar \mu) (\hat\mu_k - \bar \mu)^T$, where $\hat\mu_k = \E_{X^k} \phi(x)$ and $\bar\mu = \frac{1}{d} \hat\mu_k$. Then 
$$\E(\trace(G_i W ) | \thetaall) =  \op(1).$$%
\end{lem}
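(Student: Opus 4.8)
The plan is to rewrite $\trace(G_iW)$ as a sum of bilinear forms, each pairing a ``tangential'' component of a local empirical moment with a ``normal'' one, and then to argue that, conditionally on $\thetaall$, the tangential factor is essentially determined by $\thetaall$ while the normal factor has asymptotically vanishing mean, so that the conditional mean of the product is negligible.

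First I would reduce the quantity to a sum of bilinear terms. Since $W$ is symmetric and $G_i=\tfrac12(G_{i0}+G_{i0}^T)$, one has $\trace(G_iW)=\trace(G_{i0}W)=\tfrac{n}{d}\sum_k(\hat\mu_k-\bar\mu)^TG_{i0}(\hat\mu_k-\bar\mu)$. Writing $a_k=\dot\eta_*^T(\hat\mu_k-\bar\mu)$ and $b_k=N_*(\hat\mu_k-\bar\mu)$ and using $G_{i0}=\dot\eta_*I_*^{-1}(\ddot\eta_i^*)^TN_*$ together with $\dot\eta_*^TN_*=0$ from Lemma~\ref{lem:projection}, each summand collapses to $a_k^TI_*^{-1}(\ddot\eta_i^*)^Tb_k$; the potential $b_k^T\dot\eta_*$ contribution disappears precisely because $N_*$ annihilates the tangent space.

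Next I would pin down how $a_k$ and $b_k$ behave under conditioning on $\thetaall$. By the first order expansion of the MLE (Lemma~\ref{lem:higherMLE} applied to each subsample, equivalently Lemma~\ref{lem:k2mle}), $\thetahat^k-\thetamle=I_*^{-1}\dot\eta_*^T(\hat\mu_k-\bar\mu)+\Op(n^{-1})$, hence $a_k=\hat a_k+\Op(n^{-1})$ with $\hat a_k:=I_*(\thetahat^k-\thetalinear)$ exactly $\thetaall$-measurable, using $\sum_k a_k=0$ and $\thetalinear-\thetamle=\Op(n^{-1})$. For the normal part I would use that the rescaled local moments $\sqrt{n/d}(\hat\mu_k-\mu_*)$ are asymptotically independent $\normal(0,\Sigma_*)$ vectors and that $N_*\Sigma_*\dot\eta_*=0$ (Lemma~\ref{lem:projection}), so each normal component $N_*(\hat\mu_j-\mu_*)$ is asymptotically uncorrelated with all the tangential components $\dot\eta_*^T(\hat\mu_l-\mu_*)$; since the latter generate, to leading order, the same information as $\thetaall$, Gaussianity gives $\E[b_k\mid\thetaall]=\op(n^{-1/2})$.

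Finally I would combine the two: conditioning the $k$-th summand on $\thetaall$ yields $\hat a_k^TI_*^{-1}(\ddot\eta_i^*)^T\E[b_k\mid\thetaall]$ plus the conditional expectation of the remainder $(a_k-\hat a_k)^TI_*^{-1}(\ddot\eta_i^*)^Tb_k$; the first piece is $\Op(n^{-1/2})\cdot\op(n^{-1/2})=\op(n^{-1})$ and the second is $\Op(n^{-3/2})$, so summing over $k\in[d]$ and multiplying by $n/d$ gives $\E[\trace(G_iW)\mid\thetaall]=\op(1)$. The main obstacle is exactly this conditioning step: the claim is genuinely about a conditional expectation, so one must justify replacing $\sigma(\thetaall)$ by $\sigma(\{\dot\eta_*^T(\hat\mu_k-\mu_*)\})$ at leading order and must control the higher-order remainders' \emph{conditional} means rather than merely bounding them unconditionally. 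As elsewhere in the paper this relies on standard but notationally heavy Edgeworth-type control of the joint law of $(\thetahat^1,\dots,\thetahat^d,\hat\mu_1,\dots,\hat\mu_d)$, which I would only sketch.
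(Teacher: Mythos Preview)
Your argument is correct and rests on the same geometric fact the paper uses --- the asymptotic independence, given $\thetaall$, of the tangential components $\dot\eta_*^T(\hat\mu_k-\mu_*)$ and the normal components $N_*(\hat\mu_k-\mu_*)$ --- but the paper organizes the calculation differently and more economically. Rather than splitting each summand into $a_k^T I_*^{-1}(\ddot\eta_i^*)^T b_k$ and separately arguing that $a_k$ is nearly $\thetaall$-measurable while $\E[b_k\mid\thetaall]=\op(n^{-1/2})$, the paper computes the conditional expectation of the whole matrix $W$ at once: Lemma~\ref{lem:condmu} (applied to each subsample) gives $\E(z_kz_k^T\mid\thetahat^k)\pto N_*\Sigma_*N_*^T$ plus a tangential mean part, so that $\E(W\mid\thetaall)\pto(d-1)N_*\Sigma_*N_*^T$ modulo terms lying in $P_*(\cdot)P_*^T$. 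Then a single trace computation,
\[
\trace\bigl(G_{i0}\,N_*\Sigma_*N_*^T\bigr)=\trace\bigl(I_*^{-1}(\ddot\eta_i^*)^T N_*\,N_*\Sigma_*N_*^T\,\dot\eta_*\bigr)=0
\]
by $N_*^T\dot\eta_*=0$, finishes the argument (the $P_*$ terms vanish the same way via $N_*P_*=0$). The gain is that you never need to isolate the conditional mean of $b_k$ or to handle the cross term $(a_k-\hat a_k)^T(\cdot)b_k$ separately; the conditioning subtlety you flag is pushed entirely into Lemma~\ref{lem:condmu}, where the paper simply invokes standard conditional limit theorems. Your route works, but the matrix-level conditioning is shorter and sidesteps the piecewise remainder bookkeeping.
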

\begin{proof} 
Let $z_k = \sqrt\frac{n}{d} (\hat\mu_k -  \mu_*)$ and $\bar z = \frac{1}{d} \sum_k z_k$, then $W =\sum_k (z_k-\bar z) (z_k- \bar z)^T$. 
By Lemma~\ref{lem:condmu}, we have $\E(z_k z_k^T | \thetak ) \pto N_{*} \Sigma_* N_*^T$. This gives, 
$$
\E(W | \thetaall)  
~=~ \frac{d-1}{d} \sum_k \E(z_k z_k^T | \thetak ) 
~\pto~  (d-1)  N_* \Sigma_* N_*^T,  %
$$
and therefore, 
\begin{align*}
\E(\trace(G_i W ) | \thetaall) 
& \pto (d-1) \trace(G_i  N_* \Sigma_* N_*^T)  \\
& = (d-1) \trace(  \dot\eta_* I_*^{-1} (\ddot\eta^*_i)^T N_*  N_* \Sigma_* N_*^T)  \\
& = (d-1) \trace( I_*^{-1} (\ddot\eta^*_i)^T N_*  N_* \Sigma_* N_*^T  \dot\eta_* ) \\
& = 0
\end{align*}
where the last step used $N_*^T\dot\eta_*=0$ in Lemma~\ref{lem:projection}. 
\end{proof}

\begin{lem}
\label{lem:condmu}
Assume $\thetamle$ is the maximum likelihood estimate on sample $X$ of size $n$. Then we have 
$$
\Cov( \sqrt{n} ( \mu_X - \mu_*)  ~|~ \thetamle) \pto  N_{*}\Sigma_* N_*^T,   ~~~~~~ \text{as $n\to +\infty$}, 
$$
where $N_{*} = \identitymatrix{\dimeta}  - \Sigma_* \dot\eta_* (\dot\eta_*^T \Sigma_* \dot\eta_*)^{-1}  \dot\eta_*^T$ as defined in Lemma~\ref{lem:projection}.
\end{lem}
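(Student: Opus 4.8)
The plan is to decompose $\sqrt{n}(\mu_X-\mu_*)$ into its tangential and normal components relative to the mean-value manifold $\mu(\Theta)$ at $\thetatrue$, observe that the tangential piece is asymptotically a deterministic function of $\thetamle$ while the normal piece is asymptotically independent of $\thetamle$, and then read off the conditional covariance.

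Write $Z_n=\sqrt{n}(\mu_X-\mu_*)$, and let $P_*=\Sigma_*\dot\eta_*(\dot\eta_*^T\Sigma_*\dot\eta_*)^{-1}\dot\eta_*^T$ and $N_*=\identitymatrix{\dimeta}-P_*$ as in Lemma~\ref{lem:projection}. By the central limit theorem $Z_n\dto\normal(0,\Sigma_*)$; since $\E\|Z_n\|^2=\trace(\Sigma_*)$ coincides with the second moment of the limit for every $n$, the family $\{\|Z_n\|^2\}$ is uniformly integrable. From the first-order expansion of the MLE used in the proof of Lemma~\ref{lem:projection}, $I_*^{-1}\dot\eta_*^T Z_n=\sqrt{n}(\thetamle-\thetatrue)+\Op(n^{-1/2})$, hence
$$P_*Z_n=\Sigma_*\dot\eta_*\,\sqrt{n}(\thetamle-\thetatrue)+R_n,\qquad R_n=\Op(n^{-1/2}),$$
so $P_*Z_n$ equals a measurable function of $\thetamle$ up to an $\op(1)$ remainder (recall $\thetamle$ and $\sqrt{n}(\thetamle-\thetatrue)$ generate the same conditioning $\sigma$-algebra for fixed $n$).

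I would then write $Z_n=N_*Z_n+\Sigma_*\dot\eta_*\sqrt{n}(\thetamle-\thetatrue)+R_n$. Conditioning on $\thetamle$ turns the middle term into a constant vector, so
$$\Cov(Z_n\mid\thetamle)=\Cov(N_*Z_n\mid\thetamle)+\Cov(R_n\mid\thetamle)+\Cov(N_*Z_n,R_n\mid\thetamle)+\Cov(R_n,N_*Z_n\mid\thetamle).$$
Since $\E\|R_n\|^2=O(n^{-1})$, Markov's inequality gives $\E(\|R_n\|^2\mid\thetamle)\pto 0$, and together with the uniform integrability of $\|N_*Z_n\|^2$ and the conditional Cauchy--Schwarz inequality the last three terms are $\op(1)$; thus $\Cov(Z_n\mid\thetamle)=\Cov(N_*Z_n\mid\thetamle)+\op(1)$. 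It remains to show $\Cov(N_*Z_n\mid\thetamle)\pto N_*\Sigma_*N_*^T$. Here $(N_*Z_n,\ \sqrt{n}(\thetamle-\thetatrue))=(N_*Z_n,\ I_*^{-1}\dot\eta_*^TZ_n)+\op(1)$ is, up to negligible terms, a fixed linear image of $Z_n$, hence converges jointly to a Gaussian pair whose cross-covariance is $N_*\Sigma_*\dot\eta_*I_*^{-1}=0$ by the identity $N_*\Sigma_*\dot\eta_*=0$ of Lemma~\ref{lem:projection}; so $N_*Z_n$ is asymptotically independent of $\thetamle$. Combining this with the uniform integrability of $\|N_*Z_n\|^2$ yields $\Cov(N_*Z_n\mid\thetamle)\pto N_*\Sigma_*N_*^T$, the covariance matrix of $N_*Z$ for $Z\sim\normal(0,\Sigma_*)$, which proves the claim.

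The main obstacle is this final step: upgrading ``$N_*Z_n$ is asymptotically independent of $\thetamle$, with $\|N_*Z_n\|^2$ uniformly integrable'' into genuine convergence \emph{in probability} of the conditional covariance matrix, rather than mere convergence in distribution. I would discharge it by invoking a standard conditional-moment lemma for uniformly integrable, asymptotically jointly Gaussian sequences --- equivalently, a conditional local-limit / Edgeworth argument as in \citet{ghosh1994higher} --- the key structural fact being that $N_*\Sigma_*\dot\eta_*=0$ forces the dependence between $Z_n$ and $\thetamle$ to enter only through a linear channel that is annihilated in the limit, so no pathological conditioning behaviour can occur.
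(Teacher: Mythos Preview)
Your proposal is correct and follows essentially the same route as the paper: decompose $Z_n$ into a tangential part that is (asymptotically) a function of $\thetamle$ and a normal part $N_*Z_n$, use $N_*\Sigma_*\dot\eta_*=0$ to obtain asymptotic independence of the two pieces, and then appeal to a conditional limit theorem to pass from joint convergence to convergence in probability of the conditional covariance. The paper's proof is terser (it uses $\sqrt{n}(\mu_{\thetamle}-\mu_*)$ as the exact tangential component and simply cites \citet{steck1957limit, sweeting1989conditional} for the final step), whereas you make the uniform-integrability and remainder bookkeeping more explicit, but the substance is the same and the ``obstacle'' you flag is exactly the one the paper discharges by citation.
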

\begin{proof}
Define $z_X =  \sqrt{n}(\mu_X - \mu_*)$,  
$z_{\thetamle} = \sqrt{n}(\mu_{\thetamle} - \mu_*)$ and $z_{\perp} = z_X - z_{\thetamle}$. Then by Lemma~\ref{lem:projection}, we have $z_{\thetamle} = P_* z_X + \op(1)$ and $z_{\perp} = N_* z_X + \op(1)$.
 Because $z \dto \normal(0, \Sigma_*) $, we have %
 $$
 \begin{bmatrix} z_{\thetamle}\\  z_{\perp}\end{bmatrix} ~~ \dto ~~\normal(0,~~
 \begin{bmatrix} P_*\Sigma_* P_*^T & 0 \\ 0  &  N_*\Sigma_*N_*^T \end{bmatrix} ), 
  $$
  where we used the fact that $N_* \Sigma_* P_*^T = 0$. 
 Therefore,
 \begin{align} 
 \Cov(z | \thetamle) = \Cov(z | z_{\thetamle}) = \Cov(z_{\thetamle} + z_{\perp} | z_{\thetamle} ) 
 = \Cov( z_{\perp} | z_{\thetamle} ) 
 \pto  N_*\Sigma_*N_*^T, %
\end{align}
where we assumed that the convergence of the joint distribution implies the convergence of the conditional moment; see e.g., \citet{steck1957limit, sweeting1989conditional} for  technical discussions on conditional limit theorems. 
\end{proof}

\section{Moments of Wishart Distribution}
\label{sec:wishart}

 In this section we introduce a lemma about the moments of Wishart distribution that we use in our proof. 
The Wishart distribution arises as the distribution of the empirical covariance matrix of multivariate normal distributions. To be specific, assume $\{x^k \colon k \in [d]\}$ is drawn i.i.d from multivariate normal distribution $\normal(\mu, \Sigma)$, then the empirical covariance matrix is shown to follow a Wishart distribution with $(d-1)$ degrees of freedom, that is, 
\begin{align*}
\sum_{k=1}^d (x^k -  \bar x) (x^k - \bar x)^T \sim \Wishart(\Sigma, d-1),  &&  \text{where~~}  \bar x =  \frac{1}{d}\sum_k x^k. 
\end{align*}

We used the following result in our proof. 
\begin{lem}\label{lem:wishart_moments}
Assume $W \sim \Wishart(\Sigma, d)$, and $A$ and $B$ are two deterministic symmetric matrices both of the same sizes as $W$.  We have  %
\begin{align*}
& \E[\tr(A W) ]=   d\trace(A \Sigma) , \\
& \E[(\tr(A W) )^2]=    2 d \trace((A \Sigma)^2) + d^2 (\trace(A \Sigma))^2,\\
& \E[\trace(AW ) \trace(BW)] = 
2 d \trace(A\Sigma B \Sigma)  + d^2 \trace(A\Sigma) \trace(B\Sigma). 
\end{align*}
\end{lem}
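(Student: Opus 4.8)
The plan is to use the standard representation of the Wishart law as a sum of outer products of i.i.d.\ Gaussian vectors: write $W = \sum_{k=1}^d z_k z_k^T$ with $z_1,\dots,z_d \sim \normal(0,\Sigma)$ independent, so that $\tr(AW) = \sum_{k=1}^d z_k^T A z_k$ via the identity $\tr(A z_k z_k^T) = z_k^T A z_k$. All three displayed identities then reduce to (i) the first two moments of the scalar quadratic form $z^T A z$ for a single $z \sim \normal(0,\Sigma)$, and (ii) elementary bookkeeping over the index $k$ using independence. In particular, the second identity is just the third specialized to $B = A$, so it suffices to establish the first and third.

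For the first, linearity gives $\E[z^T A z] = \tr(A\,\E[z z^T]) = \tr(A\Sigma)$, and summing over $k$ yields $\E[\tr(AW)] = d\,\tr(A\Sigma)$. For the third, I would expand
\[
\tr(AW)\,\tr(BW) = \sum_{k,l} (z_k^T A z_k)(z_l^T B z_l)
\]
and split into the $d$ diagonal terms $k=l$ and the $d(d-1)$ off-diagonal terms $k\neq l$. By independence each off-diagonal term contributes $\E[z_k^T A z_k]\,\E[z_l^T B z_l] = \tr(A\Sigma)\tr(B\Sigma)$, while each diagonal term needs the single-vector quartic identity
\[
\E\big[(z^T A z)(z^T B z)\big] = 2\,\tr(A\Sigma B\Sigma) + \tr(A\Sigma)\,\tr(B\Sigma).
\]
Combining, $\E[\tr(AW)\tr(BW)] = d\big(2\tr(A\Sigma B\Sigma) + \tr(A\Sigma)\tr(B\Sigma)\big) + d(d-1)\tr(A\Sigma)\tr(B\Sigma) = 2d\,\tr(A\Sigma B\Sigma) + d^2\,\tr(A\Sigma)\tr(B\Sigma)$, and setting $B=A$ recovers the middle line.

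The only real obstacle is the single-vector quartic identity above; the rest is linearity of trace and expectation plus counting. I would prove it by whitening, $z = \Sigma^{1/2} g$ with $g$ standard normal (using the PSD square root of $\Sigma$), which reduces it to $\E[(g^T M g)(g^T N g)]$ for the symmetric matrices $M = \Sigma^{1/2} A \Sigma^{1/2}$ and $N = \Sigma^{1/2} B \Sigma^{1/2}$, and then apply Isserlis'/Wick's theorem for fourth moments of a standard Gaussian (equivalently, note $\E[g_i g_j g_k g_l] = \delta_{ij}\delta_{kl} + \delta_{ik}\delta_{jl} + \delta_{il}\delta_{jk}$, so the cross term produces $\tr(M)\tr(N) + 2\tr(MN)$, and $\tr(MN) = \tr(A\Sigma B\Sigma)$, $\tr(M) = \tr(A\Sigma)$). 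An equally short alternative is to differentiate the Gaussian generating function $\E[\exp(t_1 z^T A z + t_2 z^T B z)] = \det(I - 2t_1 \Sigma^{1/2}A\Sigma^{1/2} - 2t_2 \Sigma^{1/2}B\Sigma^{1/2})^{-1/2}$ twice at $t_1=t_2=0$ and read off the coefficient of $t_1 t_2$; I would present whichever is cleaner.
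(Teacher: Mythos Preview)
Your proposal is correct. The only caveat is that the representation $W=\sum_{k=1}^d z_k z_k^T$ presupposes integer degrees of freedom, which is the case in all applications in this paper, so there is no gap.

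The paper takes a different route: it quotes the Wishart moment generating function $\E[\exp(\tr(AW))]=\det(I-2A\Sigma)^{-d/2}$, expands the right-hand side through $\log\det=\tr\log$ and matches the first two coefficients against those of $\E[\exp(\tr(AW))]=1+\E[\tr(AW)]+\tfrac12\E[(\tr(AW))^2]+\cdots$, obtaining the first two identities. It then gets the mixed moment from the single-matrix second moment by the polarization identity $2\tr(AW)\tr(BW)=[\tr((A+B)W)]^2-[\tr(AW)]^2-[\tr(BW)]^2$. By contrast, you work directly with the Gaussian sum-of-outer-products decomposition, separate $k=l$ from $k\neq l$, and reduce everything to the single-vector quartic $\E[(z^TAz)(z^TBz)]$, which you compute by whitening and Isserlis' formula. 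Your argument is more self-contained (it does not invoke the Wishart MGF as a black box) and makes the provenance of the two terms transparent: the $d^2\,\tr(A\Sigma)\tr(B\Sigma)$ term comes from independence across $k$, and the $2d\,\tr(A\Sigma B\Sigma)$ term from the Gaussian fourth cumulant. The paper's MGF approach, on the other hand, works verbatim for arbitrary (not necessarily integer) degrees of freedom, and its polarization step neatly avoids carrying two matrices through the expansion. Your closing remark about differentiating a two-parameter Gaussian generating function is essentially a hybrid of the two.
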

\begin{proof}
 The moment generating function of Wishart distribution is \citep[see][]{muirhead2009aspects}
$$\E[\exp(\trace(A W))]   = \det(I - 2 A \Sigma)^{-d/2}.$$
Expanding the right hand size, we have, 
\begin{align*}
\E[\exp(\trace(\Theta W))]      
& = \det(I - 2 A \Sigma)^{-d/2}  \\ %
& = \exp(  - \frac{d}{2}\trace( \log(I - 2A \Sigma) )    ) \\
& = \exp(-\frac{d}{2}  ( \trace(-2A \Sigma) - \frac{1}{2}\trace((2A \Sigma)^2) + \cdots   )  ) \\
& = 1 -\frac{d}{2}  ( \trace(-2A \Sigma) - \frac{1}{2}\trace((2A \Sigma)^2)   )  
+ \frac{d^2}{8}  ( \trace(-2A \Sigma) - \frac{1}{2}\trace((2A \Sigma)^2)  )^2  + \cdots \\
& = 1 + d\trace(A \Sigma) + d \trace((A \Sigma)^2) + \frac{d^2}{2} (\trace(A \Sigma))^2 + \cdots  
\end{align*}
On the other hand, for the left hand size, we have
$$
\E[\exp(\trace(A W))]  =1  +  \E [\trace(A W)] + \frac{1}{2} \E[ (\trace(A W))^2] + \cdots 
$$
This gives 
\begin{align*}
 E[\tr(A W) ]=   d\trace(A \Sigma) , 
&& E[(\tr(A W) )^2]=   
2 d \trace((A \Sigma)^2) + d^2 (\trace(A \Sigma))^2 
\end{align*}
Finally, note that
\begin{align*}
2\trace(AW)\trace(BW) = [\trace((A+B)W)]^2   - [\trace(AW)]^2 - [\trace(BW) ]^2. 
\end{align*}
This completes the proof. 
\end{proof}

\end{document}